\documentclass{article} %

\usepackage{etoolbox}
\newtoggle{neurips}
\togglefalse{neurips}
\newcommand{\neurips}[1]{\iftoggle{neurips}{#1}{}}
\newcommand{\arxiv}[1]{\iftoggle{neurips}{}{#1}}

\newif\ifsup\suptrue

\usepackage{comment}
\neurips{\usepackage[nonatbib,final]{neurips_2020}}
\usepackage[numbers]{natbib}
\usepackage[utf8]{inputenc} %
\usepackage[T1]{fontenc}    %
\neurips{\usepackage{hyperref}}       %
\usepackage{url}            %
\usepackage{booktabs}       %
\usepackage{amsfonts}       %
\usepackage{nicefrac}       %
\usepackage{microtype}      %
\usepackage{amsmath}
\usepackage{amssymb}
\usepackage{wrapfig}
\usepackage{amsthm}
\usepackage{mathtools}
\usepackage[dvipsnames]{xcolor}
\usepackage{bm}
\usepackage{dsfont}
\usepackage{mathrsfs}
\usepackage[linesnumbered,ruled,vlined]{algorithm2e} 
\SetKw{KwInit}{Initialize}

\usepackage[scaled=.88]{helvet}
\usepackage{xspace}

\arxiv{
\usepackage[letterpaper, left=1in, right=1in, top=1in, bottom=1in]{geometry}
\usepackage[colorlinks=true, linkcolor=blue!70!black, citecolor=blue!70!black,urlcolor=black,breaklinks=true]{hyperref}
\usepackage{breakcites}

 \usepackage{natbib}
 \bibliographystyle{plainnat}
 \bibpunct{(}{)}{;}{a}{,}{,}

}

\usepackage[capitalize,nameinlink]{cleveref}

\newcommand{\pref}[1]{\cref{#1}}

\DeclarePairedDelimiter{\abs}{\lvert}{\rvert} %
\DeclarePairedDelimiter{\brk}{[}{]}
\DeclarePairedDelimiter{\crl}{\{}{\}}
\DeclarePairedDelimiter{\prn}{(}{)}
\DeclarePairedDelimiter{\nrm}{\|}{\|}
\DeclarePairedDelimiter{\tri}{\langle}{\rangle}

\DeclarePairedDelimiter{\ceil}{\lceil}{\rceil}
\DeclarePairedDelimiter{\floor}{\lfloor}{\rfloor}

\DeclareMathOperator{\En}{\mathbb{E}}
\DeclareMathOperator*{\argmin}{argmin} %
\DeclareMathOperator*{\argmax}{argmax}             
\def\ddefloop#1{\ifx\ddefloop#1\else\ddef{#1}\expandafter\ddefloop\fi}
\def\ddef#1{\expandafter\def\csname bb#1\endcsname{\ensuremath{\mathbb{#1}}}}
\ddefloop ABCDEFGHIJKLMNOPQRSTUVWXYZ\ddefloop
\def\ddefloop#1{\ifx\ddefloop#1\else\ddef{#1}\expandafter\ddefloop\fi}
\def\ddef#1{\expandafter\def\csname b#1\endcsname{\ensuremath{\mathbf{#1}}}}
\ddefloop ABCDEFGHIJKLMNOPQRSTUVWXYZ\ddefloop
\def\ddef#1{\expandafter\def\csname c#1\endcsname{\ensuremath{\mathcal{#1}}}}
\ddefloop ABCDEFGHIJKLMNOPQRSTUVWXYZ\ddefloop
\def\ddef#1{\expandafter\def\csname h#1\endcsname{\ensuremath{\widehat{#1}}}}
\ddefloop ABCDEFGHIJKLMNOPQRSTUVWXYZ\ddefloop
\def\ddef#1{\expandafter\def\csname hc#1\endcsname{\ensuremath{\widehat{\mathcal{#1}}}}}
\ddefloop ABCDEFGHIJKLMNOPQRSTUVWXYZ\ddefloop
\def\ddef#1{\expandafter\def\csname t#1\endcsname{\ensuremath{\widetilde{#1}}}}
\ddefloop ABCDEFGHIJKLMNOPQRSTUVWXYZ\ddefloop
\def\ddef#1{\expandafter\def\csname tc#1\endcsname{\ensuremath{\widetilde{\mathcal{#1}}}}}
\ddefloop ABCDEFGHIJKLMNOPQRSTUVWXYZ\ddefloop

\newcommand{\ls}{\ell}

\newcommand{\veps}{\varepsilon}

\newcommand{\ldef}{\vcentcolon=}

\usepackage{xpatch}
\xpatchcmd{\proof}{\itshape}{\normalfont\proofnameformat}{}{}
\newcommand{\proofnameformat}{\bfseries}

\makeatletter
\newcommand{\DeclareMathActive}[2]{%
  \expandafter\edef\csname keep@#1@code\endcsname{\mathchar\the\mathcode`#1 }
  \begingroup\lccode`~=`#1\relax
  \lowercase{\endgroup\def~}{#2}%
  \AtBeginDocument{\mathcode`#1="8000 }%
}
\newcommand{\std}[1]{\csname keep@#1@code\endcsname}
\patchcmd{\newmcodes@}{\mathcode`\-\relax}{\std@minuscode\relax}{}{\ddt}
\AtBeginDocument{\edef\std@minuscode{\the\mathcode`-}}
\makeatother
\DeclareMathActive{*}{\ast}
\renewcommand{\ast}{\star}

\newcommand{\pstar}{p^{\star}}

\newcommand{\mathand}{\quad\text{and}\quad}

\newcommand{\mstar}{m^{\star}}
\newcommand{\thetahat}{\hat{\theta}}
\newcommand{\yhat}{\hat{y}}

\newcommand{\lossrange}{\brk{-1,+1}}

\newcommand{\mate}{\mathbf{e}}

\newcommand{\trn}{\top}

\newcommand{\psdgt}{\succ}

\newcommand{\fstar}{f^{\star}}

\newcommand{\alg}{\mathsf{Alg}_{\mathrm{Sq}}}

\newcommand{\indic}{\mathbb{I}}

\newcommand{\istar}{i^{\star}}

\newcommand{\poly}{\mathrm{poly}}

\newcommand{\astar}{a^{\star}}

\newcommand{\mainalg}{\textup{\textsf{SquareCB}}\xspace}

\newcommand{\bigoh}{\cO}
\newcommand{\bigoht}{\tilde{\cO}}
\newcommand{\bigom}{\Omega}
\newcommand{\bigomt}{\tilde{\Omega}}

\newcommand{\gfilt}{\mathfrak{F}}

\newcommand{\base}{\mathsf{Base}}
\newcommand{\master}{\mathsf{Master}}

\usepackage{arrayjobx}
\usepackage{multido}
\newarray\Eqnotes

\newcounter{equationnotes}
\stepcounter{equationnotes}
\newcounter{writtennotes}
\stepcounter{writtennotes}

\newcommand{\eqnote}[1]{\Eqnotes(\theequationnotes)={#1}(\alph{equationnotes})\stepcounter{equationnotes}}
\newcommand{\writeeqnotes}{\addtocounter{equationnotes}{-\thewrittennotes}\multido{\iActor=\thewrittennotes+1}{\value{equationnotes}}{%
(\alph{writtennotes}) \Eqnotes(\iActor)\stepcounter{writtennotes}}\setcounter{equationnotes}{\thewrittennotes}\setcounter{writtennotes}{\theequationnotes}}
\newcommand{\resetnotes}{\setcounter{equationnotes}{1}\setcounter{writtennotes}{1}}
\newcommand{\topnote}[2]{\overset{\eqnote{#2}}{#1}}

\newcommand{\conv}{\operatorname{conv}}

\newcommand{\mfa}{\mathfrak{a}} 

\usepackage[textsize=tiny]{todonotes}
\setlength{\marginparwidth}{20ex}
\newcommand{\todoj}[2][]{\todo[size=\scriptsize,color=red!20!white,#1]{JZ: #2}}
\newcommand{\todocg}[2][]{\todo[size=\scriptsize,color=blue!20!white,#1]{CG: #2}}

\definecolor{dkblue}{cmyk}{1,.54,.04,.19}
    \hypersetup{
      bookmarks=true,         %
      unicode=false,          %
      pdftoolbar=true,        %
      pdfmenubar=true,        %
      pdffitwindow=false,     %
      pdfstartview={FitH},    %
      pdftitle={Adapting to Misspecification in Contextual Bandits},    %
      pdfauthor={},     %
      pdfnewwindow=true,      %
      colorlinks=true,        %
      linkcolor=blue!70!black,       %
      citecolor=blue!70!black,       %
      filecolor=blue!70!black,       %
      urlcolor=blue!70!black,        %
}
\usepackage{pgfplots}

\theoremstyle{plain}
\newtheorem{theorem}{Theorem}
\newtheorem{proposition}{Proposition}
\newtheorem{corollary}{Corollary}

\theoremstyle{definition}
\newtheorem{lemma}{Lemma}
\newtheorem{definition}{Definition}
\newtheorem{example}{Example}
\newtheorem{assumption}{Assumption}

\newtheorem*{theorem*}{Theorem}
\theoremstyle{remark}

\newcommand{\tr}{\operatorname{tr}}
\newcommand{\supp}{\operatorname{supp}}
\newcommand{\diam}{\operatorname{diam}}

\newcommand{\stab}{\operatorname{stab}}
\newcommand{\spann}{\operatorname{span}}
\renewcommand{\mid}{\,|\,}

\newcommand{\R}{\mathbb{R}}
\newcommand{\E}{\mathbb{E}}

\newcommand{\ip}[1]{\langle #1 \rangle}

\newcommand{\ber}{\mathrm{Ber}}
\newcommand{\set}[1]{\left\{ #1\right\}}
\newcommand{\norm}[1]{\Vert #1 \Vert}
\newcommand{\Reg}[1][]{\mathsf{Reg}_{#1}(T)}
\newcommand{\RegU}[1][]{\mathsf{Reg}^{#1}_{\mathrm{Unw}}(T)} %
\newcommand{\RegSquare}{\mathsf{Reg}_{\mathrm{Sq}}(T)}
\newcommand{\RegImp}[1][]{\mathsf{Reg}^{#1}_{\mathrm{Imp}}(T)}

\newcommand{\abelong}{\textsf{IGW}}
\newcommand{\logbarrier}{\textsf{log-barrier}}
\newcommand{\logdetbarrier}{\textsf{logdet-barrier}}

\newcommand{\squareCB}{\textup{\textsf{SquareCB}}\xspace}
\newcommand{\infalg}{\squareCB.\textup{\textsf{Lin}}\xspace}
\newcommand{\impSquareCB}{\textup{\textsf{SquareCB.Lin+}}\xspace}
\newcommand{\infSquareCB}{\textup{\textsf{SquareCB.Lin}}\xspace}

\newcommand{\corral}{\textsf{CORRAL}\xspace}

\newcommand{\SquareAlg}{\mathsf{Alg}_{\mathrm{Sq}}}

\newcommand{\dbar}{d_{\mathrm{avg}}}

\usepackage{inconsolata}

\let\epsilon\varepsilon
\newcommand{\ignore}[1]{}

\title{Adapting to Misspecification in Contextual Bandits}

\neurips{
\author{%
   Dylan J. Foster\thanks{Massachusetts Institute of Technology.}\\
   \texttt{dylanf@mit.edu}
  \And
  Claudio Gentile\thanks{Google Research.}\\
  \texttt{cgentile@google.com}
  \And
  Mehryar Mohri\footnotemark[2]\;\thanks{Courant Institute of Mathematical Sciences.}\\
  \texttt{mohri@google.com}
  \And
  Julian Zimmert\footnotemark[2]\\
  \texttt{zimmert@google.com}
}
}
\arxiv{
\author{%
   Dylan J. Foster\thanks{Massachusetts Institute of Technology.}\\
   {\small\texttt{dylanf@mit.edu}}
  \and
  Claudio Gentile\thanks{Google Research.}\\
  {\small\texttt{cgentile@google.com}}
  \and
  Mehryar Mohri\footnotemark[2]\;\thanks{Courant Institute of Mathematical Sciences.}\\
  {\small\texttt{mohri@google.com}}
  \and
  Julian Zimmert\footnotemark[2]\\
  {\small\texttt{zimmert@google.com}}
}
\date{}
}

\begin{document}

\maketitle

\begin{abstract}
A major research direction in contextual bandits is to
develop algorithms that are computationally efficient, yet support
flexible, general-purpose function approximation. Algorithms based on
modeling rewards have shown strong empirical performance, but
typically require a well-specified model, and can fail when this
assumption does not hold. Can we design algorithms that are efficient
and flexible, yet degrade gracefully in the face of model
misspecification? We introduce a new family of oracle-efficient algorithms for $\varepsilon$-misspecified contextual bandits
that adapt to unknown model misspecification---both for finite and
infinite action settings. Given access to an
\emph{online oracle} for square loss regression, our algorithm attains
optimal regret and---in particular---optimal dependence on the
misspecification level, with \emph{no prior knowledge}. Specializing
to linear contextual bandits with infinite actions in $d$ dimensions,
we obtain the first algorithm that achieves the optimal
$\bigoht(d\sqrt{T} + \varepsilon\sqrt{d}T)$ regret bound for unknown misspecification level $\veps$.

On a conceptual level, our results are enabled by a new
optimization-based perspective on the regression oracle reduction framework of
\citet{FR20}, which we anticipate will find broader use.

\end{abstract}

\section{Introduction}

The contextual bandit is a sequential decision making problem that is widely deployed in practice across applications including health services \citep{TM17},
online advertisement \citep{li2010contextual,AbeBiermannLong2003}, and
recommendation systems \citep{ABC16}. At each
round, the learner observes a feature vector (or \emph{context}) and
an action set, then selects an action and receives a loss for the action selected. To facilitate generalization across contexts, the
learner has access to a family of policies (e.g., linear models or neural networks) that map contexts to actions. The objective of the learner is 
to achieve cumulative loss close to that of the optimal policy in hindsight.

To develop efficient, general purpose algorithms, a common approach to contextual bandits is to reduce the problem to supervised learning primitives such as classification and regression
\citep{langford_epoch-greedy_2008,DHK11,A+12,AHK14,SyKrSch16,ABC16,LWA18}.
Recently, \citet{FR20} introduced \mainalg, which provides the first optimal and efficient reduction from
$K$-armed contextual bandits to square loss regression, and can be applied whenever the learner has access to a well-specified model for the loss function (``realizability''). In light of this result, a natural question is whether this approach can be generalized beyond the realizable setting and, more ambitiously, whether we can shift from realizable to misspecified models \emph{without prior knowledge} of the amount of misspecification. A secondary question, which is relevant for practical applications, is whether the approach can be generalized to large or infinite action spaces. This is precisely the setting we study in the present paper, where the action set is large or infinite, but where the learner has a ``good'' feature representation available---that is, up to some {\em unknown} amount of misspecification.

Adequately handling misspecification has been a subject of intense recent interest even for the simple special case of linear contextual bandits. \citet{DKW19} questioned whether ``good'' is indeed enough, that is, whether we can learn efficiently even without realizability. \citet{LS19feature} gave a
positive answer to this question---provided the misspecification level
$\veps$ is known in advance---and showed that the price of misspecification for linear contextual bandits scales as $\veps\sqrt{d}T$, where $d$ is the dimension and $T$ is the time horizon. However, they left the adapting to unknown misspecification as an open question.
\paragraph{Our results.}
We provide an affirmative answer to all of the questions above.
We generalize \mainalg to infinite action sets, and use this strategy to adapt to unknown misspecification by combining it with a \emph{bandit model selection} procedure akin to that of \citet{ALNS17}. Our algorithm is oracle-efficient, and adapts to misspecification efficiently and optimally whenever it has access to an online oracle for square loss regression. When specialized to linear contextual bandits, it resolves the question of \citet{LS19feature}.

On the technical side, a conceptual contribution of our work is to show that one can view the action selection scheme used by \mainalg as an approximation to a log-barrier regularized optimization problem, which paves the way for a generalization to infinite action spaces. Another byproduct of our results is a generalization of the original 
\corral algorithm \citep{ALNS17} for combining bandit algorithms, which is simple, flexible, and enjoys improved logarithmic factors.

\subsection{Related Work}

Contextual bandits are a well-studied problem, and misspecification in bandits and reinforcement learning has received much attention in recent years. Below we discuss a few results closely related to our own.

For linear bandits in $d$ dimensions, \citet{LS19feature} gave an algorithm with regret $\bigoh(d\sqrt{T} +\veps\sqrt{d}T)$, and left adapting to unknown misspecification for changing action sets as an open problem. Concurrent work of \citet{PPARZLS20} addresses this problem for the special case where contexts and action sets are stochastic, and also leverages \corral-type aggregation of contextual bandit algorithms. Our results resolve this question in the general, fully adversarial setting.

\arxiv{
  Within the literature general-purpose contextual bandit algorithms, our approach builds on a recent line of research that provides reductions to (online/offline) square loss regression \citep{foster2018practical,FR20,SX20,xu2020upper,foster2020instance}. In particular, our work builds on and provides a new perspective on the online regression reduction of \citet{FR20}. 
The infinite-action setting we consider was introduced in \citet{FR20}, but algorithms were only given for the special case where the action set is the sphere; our work extends this to arbitrary action sets. Concurrent work of \citet{xu2020upper} gives a reduction to offline oracles that also accommodates infinite action sets. These results are not strictly comparable: An online oracle can always be converted to an offline oracle through online-to-batch conversion and hence is a stronger assumption, but when an online oracle \emph{is} available, our algorithm is more efficient. In addition, by working with online oracles, we support adversarially chosen contexts.

It bears mentioning that misspecification in contextual bandits can be formalized in many ways, some of which go beyond the setting we consider. One line of work reduces stochastic contextual bandits to oracles for cost-sensitive classification \citep{langford_epoch-greedy_2008,DHK11,A+12,AHK14}. These results are agnostic, meaning they make no assumption on the model, and in particular do not require realizability. However, in the presence of misspecification, this type of guarantee is somewhat different from what we provide here: rather than giving a bound on regret to the true optimal policy, these results give bounds on the regret to the best-in-class policy. Another line of work considers a model in which the feedback received
by the learning algorithm at each round may be arbitrarily corrupted
by an adaptive adversary \citep{seldin2014one,lmp18,gkt19,BKS20}. Typical results for
this setting incur additive error $\bigoh(C)$, where $C$ is the
total number of corrupted rounds. While this model was originally
considered in the context of non-contextual stochastic bandits, it has recently been
extended to Gaussian process bandit optimization, which is closely
related to the contextual bandit setting, though these results only
tolerate $C\leq\sqrt{T}$. While these results are complementary to our own, we mention in passing that our notion of misspecification satisfies
$\veps\leq{}\sqrt{C/T}$, and hence our main theorem
(\pref{thm:main}) achieves additive error $\sqrt{CT}$ for this
corrupted setting (albeit, only with an oblivious adversary).
}

\neurips{
Within the literature general-purpose contextual bandit algorithms, our approach builds on a recent line of research that provides reductions to (online/offline) square loss regression \citep{foster2018practical,FR20,SX20,xu2020upper,foster2020instance}.

Besides the standard references on oracle-based agnostic contextual bandits (e.g., \cite{langford_epoch-greedy_2008,DHK11,A+12,AHK14}), $\epsilon$-misspecification is somewhat related to the recent stream of literature on bandits with adversarially-corrupted feedback~\cite{lmp18,gkt19,BKS20}. See the discussion in \ifsup Appendix \ref{sec:related_work}\else the supplementary material\fi.

}

\section{Problem Setting}

We consider the following contextual bandit protocol: At each round $t=1,\dots,T$, the learner first observes a context $x_t\in\cX$ and an action set $\cA_t\subseteq \cA$, where $\cA \subseteq \R^d$ is a compact action space; for simplicity, we assume throughout that $\cA = \{a \in \R^d \colon\norm{a}\leq 1\}$, but place no restriction on $\prn*{\cA_t}_{t=1}^{T}$. Given the context and action set, the learner chooses action $a_t\in\cA_t$, then observes a stochastic loss $\ell_t\in\lossrange$ depending on the action selected. We assume that the sequence of context vectors $x_t$ and the associated sequence of action sets $\cA_t$ are generated by an oblivious adversary. 

Let $\mu(a,x)\ldef{}\En\brk*{\ls_t\mid{}x_t=x,a_t=a}$ denote the mean loss function, which we assume to be time-invariant, and which is unknown to the learner. We adopt a semi-parametric approach to modeling the losses, in which $\mu(a,x)$ is modelled as (approximately) linear in the action $a$, but can depend on the context $x$ arbitrarily \citep{FR20,xu2020upper,chernozhukov2019semi}. In particular, we assume the learner has access to a class of functions $\cF \subseteq{}\{f\colon \cX \rightarrow \R^d \}$ where for each $f\in\cF$, $\tri*{a,f(x)}$ is a prediction for the value of $\mu(a,x)$. In the well-specified/realizable setting, one typically assumes that there exists some $\fstar\in\cF$ such that $\mu(a,x)=\tri*{a,f^{\star}(x)}$. In this paper, we make no such assumption, but the performance of our algorithms will depend on how far this is from being true. In particular, we measure performance of the learner in terms of pseudoregret $\Reg$ against the best unconstrained policy:
\begin{align*}
    \textstyle\Reg \ldef \E\brk*{\sum_{t=1}^T\mu(a_t,x_t)-\inf_{a\in\cA_t}\mu(a,x_t)}.
\end{align*}
Here, and throughout the paper, $\En\brk*{\cdot}$ denotes expectation with respect to both the randomized choices of the learner and the stochastic realization of the losses $\ell_t$.

This setup encompasses the finite-arm contextual bandit setting with $K$ arms by taking $\cA_t=\crl*{\mate_1,\ldots,\mate_K}$. Another important special case is the well-studied linear contextual bandit setting, where $\cF$ consists of constant vector-valued functions that do not depend on $\cX$. Specifically, for any $\Theta\subseteq\bbR^{d}$, take $\cF=\crl*{x\mapsto\theta\mid{}\theta\in\Theta}$. In this case, the prediction $\ip{a,f(x)}$ simplifies to $\ip{a,\theta}$, which a static linear function of the action. This special case recovers the most widely studied version of the linear contextual bandit problem \citep{AL99,Aue02,AST11,CLRS11,APS12,AG13,CG13}, as well as Gaussian process extensions \citep{skks10,KO11,DKC13,SGBK15}.

\subsection{Misspecification}
As mentioned above, contextual bandit algorithms based on modeling losses typically rely on the assumption of a \emph{well-specified model} (or, ``realizability''): That is, existence of a function $f^* \in \cF$ such that $\mu(a,x)=\ip{a,f^*(x)}$ for all $a \in \cA$ and $x \in \cX$ \citep{CLRS11,AST11,A+12,foster2018practical}. Since the assumption of \emph{exact} realizability does not typically hold in practice, a more recent line of work has begun to investigate algorithms for misspecified models.
In particular, \citet{CG13,GCG17,LS19feature,FR20,ZLKB20} consider a uniform $\epsilon$-misspecified setting in which
\begin{align}
\label{eq:old definition}
\textstyle\inf_{f\in \cF}\sup_{a \in \cA,x\in\cX}|\mu(a,x)-\ip{a,f(x)}| \leq \epsilon,
\end{align}
for some misspecification level $\epsilon >0$. Notably, \citet{LS19feature} show that for linear contextual bandits, regret must grow as $\bigom(d\sqrt{T}+\veps\sqrt{d}T)$. Since $d\sqrt{T}$ is the optimal regret for a well-specified model, $\veps\sqrt{d}T$ may be thought of as the price of misspecification.

We consider a weaker notion of misspecification. Given a sequence $S = (x_1,\cA_1), \ldots, (x_T,\cA_T)$ of context-action set pairs, we define the average misspecification level $\veps_T(S)$ as
\begin{align}
\label{eq:eps definition}
    \textstyle\veps_T(S) \ldef \inf_{f\in\cF}\prn*{\frac{1}{T}\sum_{t=1}^{T}\sup_{a\in\cA_t}(\ip{a,f(x_t)}-\mu(a,x_t))^2}^{1/2}.
\end{align}
This quantity measures the misspecification level for the specific sequence $S$ at hand, and hence offers tighter guarantees than uniform misspecification. In particular, the uniform bound in \cref{eq:old definition} directly implies $\veps_T(S)\leq\epsilon$ for all $S$ in \cref{eq:eps definition}, and $\veps_T(S)=0$ whenever the model is well-specified.

We provide regret bounds that optimally adapt to $\veps_T(S)$ for any given realization of the sequence $S$, with no prior knowledge of the misspecification level. The issue of adapting to unknown misspecification has not been addressed even for the stronger uniform notion \eqref{eq:old definition}. Indeed, previous efforts typically use prior knowledge of $\veps$ to encourage conservative exploration when misspecification is large; see \citet[Appendix E]{LS19feature}, \citet[Section 5.1]{FR20}, \citet[Section 4.2]{CG13}, and \citet{ZLKB20} for examples. Naively adapting such schemes using, e.g., doubling tricks, presents difficulties because the quantities in \pref{eq:old definition} and \pref{eq:eps definition} do not appear to be estimable without knowledge of $\mu$.

\subsection{Regression Oracles}

Following \citet{FR20}, we assume access to an \emph{online regression oracle} $\alg$, which is simply an algorithm for sequential prediction with the square loss, using $\cF$ as a benchmark class. Concretely, the oracle operates under the following protocol. At each round $t \in [T]$, the algorithm receives a context $x_t \in \cX$, outputs a prediction $\hat y_t \in \R^d$ (in particular, we interpret $\ip{a,\hat y_t}$ as the predicted loss for action $a$), then observes an action $a_t\in\cA$ and loss $\ls_t\in\lossrange$ and incurs error $(\ip{a_t,\hat y_t}-\ell_t)^2$. Formally, we make the following assumption.\footnote{As in \citet{FR20}, the \emph{square loss} itself does not play a crucial role, and can be replaced by any loss that is strongly convex with respect to the learner's predictions.}
\begin{assumption}
  \label{ass:regression}
  The regression oracle $\alg$ guarantees that for any (potentially adaptively chosen) sequence $\crl{(x_t,a_t,\ell_t)}_{t=1}^T$,
\begin{align*}
\textstyle\sum_{t=1}^T(\ip{a_t,\hat y_t}-\ell_t)^2 -\inf_{f\in\cF}\sum_{t=1}^{T}(\ip{a_t,f(x_t)}-\ell_t)^2 \leq \RegSquare\,,
\end{align*}
for some (non-data-dependent) function $\RegSquare$.
\end{assumption}
For the finite-action setting, this definition coincides with that of \citet{FR20}. To simplify the presentation of our results, we assume throughout the paper that $\RegSquare$ is a non-decreasing function of $T$.

While this type of oracle suffices for all of our results, our algorithms are stated more naturally in terms of a stronger notion of oracle that supports \emph{weighted} online regression. In this model, we follow the same protocol as in \pref{ass:regression}, except that at each time $t$, the regression oracle observes a weight $w_t\geq0$ at the same time as the context $x_t$, and the error incurred is given by $w_t\cdot(\tri{a_t,\yhat_t}-\ls_t)^{2}$. For technical reasons, we allow the oracle for this model to be randomized. We make the following assumption.
\begin{assumption}
  \label{ass:weighted regression}
  The weighted regression oracle $\alg$ guarantees that for any (potentially adaptively chosen) sequence $\crl{(w_t,x_t,a_t,\ell_t)}_{t=1}^T$, 
\begin{align*}
\textstyle\E\left[\sum_{t=1}^Tw_t(\ip{a_t,\hat y_t}-\ell_t)^2 - \inf_{f\in\cF}\sum_{t=1}^{T}w_t(\ip{a_t,f(x_t)}-\ell_t)^2\right] \leq \E\brk[\big]{\max_{t\in[T]}w_t}\cdot\RegSquare\,,
\end{align*}
for some (non-data-dependent) function $\RegSquare$, where the expectation is taken with respect to the oracle's randomization.
\end{assumption}
We show in \ifsup \cref{app:oracles} (\pref{alg:oracle}) \else the supplementary material\fi that any unweighted regression oracle satisfying \cref{ass:regression} can be transformed into a randomized oracle for weighted regression that satisfies \cref{ass:weighted regression}, with no overhead in runtime. Hence, to simplify exposition, for the remainder of the paper we state our results in terms of weighted regression oracles satisfying \pref{ass:weighted regression}.

Online regression is a well-studied problem, and efficient algorithms are known for many standard function classes. One example, which is important for our applications, is the case where $\cF$ is linear.
\begin{example}[Linear Models]
  \label{ex:ridge regression}
  Suppose $\cF=\crl*{x\mapsto{}\theta\mid{}\theta\in\Theta}$, where $\Theta\subseteq\bbR^{d}$ is a convex set with $\nrm*{\theta}\leq{}1$. Then the online Newton step algorithm \citep{hazan2007logarithmic}
satisfies \cref{ass:regression} with $\RegSquare = \cO(d\log(T))$ and---via reduction \ifsup(\pref{alg:oracle})\fi---can be augmented to satisfy \cref{ass:weighted regression}.
\end{example}

Further examples include kernels \citep{VKMFC13}, generalized linear models \citep{KKSK11}, and standard nonparametric classes \citep{gaillard2015chaining}. We refer to \citet{FR20} for a more comprehensive discussion.

\paragraph{Additional notation.}
For a set $X$, we let $\Delta(X)$ denote the set of all probability distributions over $X$. 
If $X$ is continuous, we restrict $\Delta(X)$ to distributions with \emph{countable} support.
We let $\nrm*{x}$ denote the euclidean norm for $x\in\bbR^{d}$. For any positive definite matrix $H\in\bbR^{d\times{}d}$, we denote the induced norm on $x \in \R^d$ by $\norm{x}^2_H=\ip{x,Hx}$. For functions
	$f,g:X\to\bbR_{+}$, we write $f=\bigoh(g)$ if there exists some constant
	$C>0$ such that $f(x)\leq{}Cg(x)$ for all $x\in{}X$. We write $f=\bigoht(g)$ if $f=\bigoh(g\max\crl*{1,\mathrm{polylog}(g)})$, and define $\bigomt(\cdot)$ analogously.

        For each $f\in\cF$, we let $\pi_f(\cdot,\cdot)$ denote the \emph{induced policy}, whose action at time $t$ is given by $\pi_f(x_t,\cA_t) \ldef\argmin_{a\in\cA_t}\ip{a,f(x_t)}$.

\section{Adapting to Misspecification: An Oracle-Efficient Algorithm}
\label{sec:main}

We now present our main result: an efficient reduction from contextual
bandits to online regression that adapts to unknown misspecification
$\veps_T(S)$ and supports infinite action sets. Our main theorem is as
follows.
\begin{theorem}
  \label{thm:main}
There exists an efficient reduction which, given access to a weighted regression oracle $\SquareAlg$
  satisfying \pref{ass:weighted regression}, guarantees that for all sequences $S = (x_1,\cA_1),\ldots, (x_T,\cA_T)$,
\begin{align*}
    \Reg = \cO\left(\sqrt{dT\RegSquare \log(T)}+ \epsilon_T(S)\sqrt{d}T\right).
\end{align*}
\end{theorem}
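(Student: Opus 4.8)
The plan is to combine two ingredients. First, I would establish a ``base'' algorithm---a generalization of \mainalg to infinite action sets---which, given a regression oracle satisfying \pref{ass:weighted regression} and run with a fixed exploration/learning-rate parameter $\gamma>0$, attains a regret bound of the (schematic) form $\Reg \lesssim \frac{d}{\gamma}\log(T) + \gamma\,\RegSquare + \gamma\,\veps_T(S)^2 T$, valid for \emph{any} realization of $S$. The conceptual device here, highlighted in the introduction, is to view \mainalg's action-selection rule as (an approximation to) the solution of a log-barrier--regularized optimization problem over $\Delta(\cA_t)$ of the form $\min_{p\in\Delta(\cA_t)} \gamma\,\En_{a\sim p}\ip{a,\fhat_t(x_t)} - \sum \log(\cdot)$; this makes the per-round decision well-defined even when $\cA_t$ is infinite, and the first-order optimality conditions give the decision-to-estimation (``IGW''-type) inequality relating the instantaneous regret to the oracle's square-loss error plus a $d/\gamma$ term. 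Carrying the per-round inequality through the telescoping sum, using the oracle guarantee to control $\sum_t \En_{a\sim p_t}(\ip{a,\fhat_t(x_t)}-\mu(a_t,x_t))^2$ against $\RegSquare$, and bounding the bias introduced by misspecification via Cauchy--Schwarz against \pref{eq:eps definition}, yields the displayed $\gamma$-dependent bound. Note that the $\max_t w_t$ factor in \pref{ass:weighted regression} is why we need $\veps_T(S)^2$ to enter multiplied by $\gamma$ rather than something worse---the weights in the base algorithm are $O(\gamma)$.

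Second, since the optimal $\gamma$ depends on the unknown $\veps_T(S)$ (balancing $d/\gamma$ against $\gamma\veps_T(S)^2 T$ gives $\gamma \asymp \sqrt{d/(\veps_T(S)^2 T)}$ when misspecification dominates, and $\gamma\asymp\sqrt{dT/\RegSquare}$ when it does not), I would run a \emph{bandit model selection / corralling} procedure over a geometrically-spaced grid of candidate base algorithms, each instantiated with a different $\gamma$ from $\{\gamma_{\min},2\gamma_{\min},4\gamma_{\min},\dots,\gamma_{\max}\}$, where the grid has $O(\log T)$ points. The master algorithm is the generalized \corral described in the introduction: it plays base algorithms according to an adaptively-tuned distribution and feeds importance-weighted losses back to the chosen base. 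The key property I would invoke is a regret bound for the master of the form: for every base index $i$ in the grid, $\Reg \lesssim \sqrt{M\, T} + (\text{blow-up factor})\cdot \Reg_i$, where $M = O(\log T)$ is the number of bases and $\Reg_i$ is the regret the $i$-th base \emph{would} have attained on its (importance-weighted) sub-sample. Because the base bound above is of the ``anytime / robust to importance weighting'' type---it holds for any weighted input sequence, which is exactly why \pref{ass:weighted regression} is phrased with weights---$\Reg_i$ remains controlled, and picking the grid point $i^\star$ closest to the ideal $\gamma$ gives $\Reg \lesssim \Otilde\big(\sqrt{dT\,\RegSquare}\big) + \Otilde\big(\veps_T(S)\sqrt d\, T\big)$, i.e.\ the claimed bound after absorbing the $\sqrt{\log T}$ into the stated $\log(T)$ factor and checking that the $\sqrt{MT}$ master overhead is dominated by $\sqrt{dT\,\RegSquare}$ (since $\RegSquare \gtrsim \log T$ and $M = O(\log T)$, this holds up to the logarithmic factors already present).

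Concretely, the steps in order would be: (1) formulate the log-barrier optimization problem defining the base algorithm's per-round distribution over $\cA_t$, verify it has a solution supported on finitely many actions (so the weighted oracle can be fed a single sampled action), and derive the first-order optimality inequality; (2) prove the per-round regret decomposition and telescope it to get the $\gamma$-parametrized base regret bound, carefully tracking where $\veps_T(S)^2$ enters and confirming the multiplicative $\gamma$ there; (3) state and prove (or cite the generalized-\corral machinery) the master regret bound with the per-base blow-up and $O(\log T)$ grid overhead; (4) instantiate the grid, choose the near-optimal $\gamma$ for the realized $S$, and combine (2)+(3) to get \pref{thm:main}; (5) observe efficiency: each round solves one convex program plus $M=O(\log T)$ oracle calls, all polynomial.

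The main obstacle I anticipate is step (3) interacting with step (2): standard \corral/model-selection bounds for combining \emph{stochastic-style} algorithms require the base learner to satisfy a ``stability'' or high-probability anytime regret guarantee on adaptively-subsampled, importance-reweighted data with potentially large weights, and getting this to play nicely with the $\En[\max_t w_t]$ factor in \pref{ass:weighted regression} (which could in principle be large once importance weights from the master are composed with the base's own weights) is the delicate point---one must ensure the effective weights stay bounded (e.g.\ by clipping the master's sampling probabilities away from zero, at the cost of a controlled additive term) so that the base bound degrades only mildly. A secondary but real technical point is justifying that the infimum defining the log-barrier problem over an infinite compact $\cA_t$ is attained with \emph{countable} (indeed finite) support, which is what lets us stay within the $\Delta(\cA_t)$ restricted to countable support used throughout and feed a genuine sampled action to the oracle; I would handle this via a Carath\'eodory-type argument on the relevant $d{+}1$ moment constraints.
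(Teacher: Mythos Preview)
Your high-level architecture---a base reduction with a $\gamma$-parametrized regret bound, aggregated by a \corral-style master over a geometric grid of $O(\log T)$ candidates---matches the paper's. Two substantive points differ, and your proposed fix for the obstacle you identify would not yield the stated bound.

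First, for infinite $\cA_t$ the correct regularizer is the \emph{log-determinant} barrier $p\mapsto -\gamma^{-1}\log\det(H_p-\bar a_p\bar a_p^{\trn})$ (\pref{eq:logdet_barrier}), not a coordinatewise sum of logs; the latter is the finite-$K$ specialization. The first-order conditions of the log-det problem are what give the per-round inequality with a $\dim(\cA_t)/\gamma$ term (\pref{lem:logdet bound}). Your Carath\'eodory instinct for finite support is correct and is how the paper argues computability.

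Second---and this is the real gap---clipping the master's probabilities does not close the argument. If you clip at $q_{\min}$, the base's importance-weighted regret picks up a factor $1/q_{\min}$ on the $\gamma\,\RegSquare$ term (the oracle weight is $w_t=\gamma/q_t$, not $O(\gamma)$ as you write), while the master's forced-exploration cost is $\Theta(Mq_{\min}T)$; no choice of $q_{\min}$ balances these to the target rate. The paper instead couples two ideas. (i) Inside each base instance $m$ the learning rate is \emph{adaptive}, $\gamma_{t,m}=\min\{\sqrt{d}/\veps'_m,\sqrt{dT/(\rho_{t,m}\RegSquare)}\}$, which ensures (\pref{thm:impreg}) that the importance-weighted regret takes the form $\tfrac32\E[\sqrt{\rho_{T,m}}]\sqrt{dT\,\RegSquare}+O(\veps_T\sqrt{d}T)$: only the $\RegSquare$ term carries $\rho$, the misspecification term does not. (ii) The master is the $(\tfrac12,R)$-hedged Tsallis-INF of \pref{cor:Tsallis-INF}, whose regret contains a \emph{negative} contribution $-\E[\sqrt{\rho_{T,m^*}}]\,R$; choosing $R=\tfrac32\sqrt{dT\,\RegSquare}$ makes this cancel the blowup in (i) exactly. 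The crucial observation is that $R$ depends on $d,T,\RegSquare$ but not on $\veps_T$, which is precisely what enables adaptation. Correspondingly, the grid is over candidate misspecification levels $\veps'_m=e^{-m}$ (which enter the adaptive $\gamma_{t,m}$), not over fixed $\gamma$ values.
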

The algorithm has two main building blocks: First, we extend the reduction of
\citet{FR20} to infinite action sets via a new optimization-based
perspective and---in particular---show that the resulting algorithm has favorable
dependence on misspecification level when it is known in
advance. Then, we combine this reduction with a scheme that
aggregates multiple instances of the algorithm to adapt to unknown
misspecification. When the time required for a single query to
$\SquareAlg$ is $\cT_{\SquareAlg}$, the per-step runtime of our algorithm is $\bigoht\prn*{\cT_{\SquareAlg}+\abs{\cA_t}\cdot\poly(d)}$.

As an application, we solve an open problem
recently posed by \citet{LS19feature}: we exhibit an efficient
algorithm for infinite-action linear contextual bandits which
optimally adapts to unknown misspecification.
\begin{corollary}
\label{cor:main}
Let $\cF=\{x\mapsto{}\theta\mid\theta \in \R^d,\nrm{\theta} \leq 1\}$.
Then there exists an efficient algorithm that, for any sequence $S =
(x_1,\cA_1),\ldots, (x_T,\cA_T)$, satisfies
\begin{align*}
    \Reg = \cO\left(d\sqrt{T}\log(T) + \epsilon_T(S)\sqrt{d}T\right).
\end{align*}
\end{corollary}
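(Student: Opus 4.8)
\textbf{Proof proposal for \pref{cor:main}.}
The plan is to obtain \pref{cor:main} directly from \pref{thm:main} by supplying a concrete online regression oracle for the linear class. First I would observe that $\cF = \{x\mapsto\theta\mid\theta\in\bbR^d,\ \nrm{\theta}\le 1\}$ is precisely the family of constant, norm-bounded, vector-valued functions treated in \pref{ex:ridge regression}, with parameter set $\Theta = \{\theta : \nrm{\theta}\le 1\}$ convex and bounded. Hence the online Newton step algorithm satisfies \pref{ass:regression} with $\RegSquare = \cO(d\log T)$, and, after applying the unweighted-to-weighted oracle reduction described in the paper, it yields a (randomized) oracle satisfying \pref{ass:weighted regression} with the same rate $\RegSquare = \cO(d\log T)$. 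This is exactly the kind of oracle required as input to \pref{thm:main}.

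Next I would substitute $\RegSquare = \cO(d\log T)$ into the guarantee of \pref{thm:main}:
\[
\Reg = \cO\!\left(\sqrt{dT\cdot\RegSquare\cdot\log T} + \veps_T(S)\sqrt{d}\,T\right) = \cO\!\left(\sqrt{d^2 T(\log T)^2} + \veps_T(S)\sqrt{d}\,T\right) = \cO\!\left(d\sqrt{T}\log T + \veps_T(S)\sqrt{d}\,T\right),
\]
which is the claimed bound (note $\epsilon_T(S)$ in the statement is $\veps_T(S)$). Efficiency follows from that of the reduction in \pref{thm:main} together with the $\poly(d)$ per-query cost of online Newton step, and the reduction's adaptivity means that no prior knowledge of $\veps_T(S)$ is used anywhere.

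There is no substantive obstacle in this corollary---all the real work is carried by \pref{thm:main} and by \pref{ex:ridge regression}. The only points that merit a line of justification are (i) that the weighted-regression reduction preserves the $\cO(d\log T)$ rate (it does, up to constants, and the $\max_{t}w_t$ prefactor appearing in \pref{ass:weighted regression} is already folded into the statement of \pref{thm:main}), and (ii) that $\cA_t$, although possibly infinite, still supports the action-selection and $\argmin$ steps efficiently (e.g.\ whenever $\cA_t$ is presented through a linear-optimization oracle); this too is handled inside the proof of \pref{thm:main} rather than here.
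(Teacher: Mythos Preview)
Your proposal is correct and follows essentially the same approach as the paper: instantiate \pref{thm:main} with the online Newton step oracle from \pref{ex:ridge regression} (lifted to a weighted oracle via the reduction), plug $\RegSquare=\cO(d\log T)$ into the bound, and simplify. The paper's own justification is a single sentence to this effect; your added remarks on efficiency and the weighted-oracle reduction are accurate elaborations but not required.
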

This result immediately follows from \cref{thm:main} by invoking the online
Newton step algorithm as the regression oracle, as in \pref{ex:ridge regression}.
Modulo logarithmic factors, this bound coincides with the one achieved
by \citet{LS19feature} for the simpler non-contextual linear bandit
problem, for which the authors present a matching lower bound.

The remainder of this section is dedicated to proving
\pref{thm:main}. The roadmap is as follows. First, we revisit the
reduction from $K$-armed contextual bandits to online regression by
\citet{FR20} and provide a new optimization-based perspective. This new viewpoint leads to a natural generalization from
the $K$-armed case to the infinite action case.  We then provide an
aggregation-type procedure which combines multiple instances of this
algorithm to adapt to unknown misspecification, and finally put all
the pieces together to prove the main result. As an extension, we also
give a variant of the algorithm which enjoys improved bounds when the
action sets $\cA_t$ lie in low-dimensional subspaces of $\R^d$.

Going forward, we abbreviate
$\veps_T(S)$ to $\veps_T$ whenever the sequence $S$ is clear from context.

\subsection{Oracle Reductions with Finite Actions: An Optimization-Based Perspective}
\label{sec:k-armed}
A canonical special case of our setting is the finite-arm contextual
bandit problem, where
$\cA_t=\cK\ldef\{\mathbf{e}_1,\dots,\mathbf{e}_K\}$. For this setting,
\citet{FR20} proposed an efficient and optimal reduction called
\squareCB, which is displayed in \cref{alg:dylan}. At each step,
the algorithm queries the oracle $\SquareAlg$ with the current context $x_t$ and
receives a loss predictor $\thetahat_t\in\bbR^{K}$, where
$(\thetahat_t)_i$ predicts the loss of action $i$. The algorithm then
samples an action using an \emph{inverse gap weighting} (IGW) scheme introduced by
\citet{AL99}.
\begin{wrapfigure}[11]{r}{6.7cm}
\vspace{-0.15cm}
\begin{algorithm}[H]
\caption{\squareCB\\\citep{FR20}}
\label{alg:dylan}
\DontPrintSemicolon
\LinesNumberedHidden
\KwIn{Learning rate $\gamma$, horizon $T$.}
\KwInit{} Regression oracle $\SquareAlg$. \;
\For{$t= 1,\dots,T$}{
Receive context $x_t$.\;
Let $\hat\theta_t$ be the oracle's prediction for $x_t$.\;
Sample $I_t\sim \abelong(\hat\theta_t,\gamma)$.\;
Play $a_t=\mathbf{e}_{I_t}$ and observe loss $\ell_t$.\;
Update $\alg$ with 
$(x_t, a_t, \ell_t)$.
}
\end{algorithm}
\end{wrapfigure}
Specifically for parameter $\theta\in\bbR^{K}$ and learning
rate $\gamma>0$, we define $\abelong(\theta,\gamma)$ as
the distribution $p\in\Delta(\brk*{K})$ obtained by first selecting
any $i^*\in\argmin_{i\in[K]}\theta_{i}$, then defining
\begin{align}
    p_{i} = \begin{cases} \frac{1}{K+\gamma(\theta_{i} %
    -\theta_{i^*})},&\mbox{ if }i\neq i^*,\\
    1-\sum_{i'\neq i^*}p_{i},&\mbox{ otherwise.}
    \end{cases}\label{eq:abe-long}
\end{align}
By choosing $\gamma\propto\sqrt{KT/(\RegSquare+\veps_T)}$, one can
show that this algorithm
guarantees \[\Reg\leq\bigoh\prn*{\sqrt{KT\RegSquare}+\veps_T\sqrt{K}T}.\] Since this
approach is the starting point for our results, it will be useful to sketch the
proof. For $p\in\Delta(\cA)$, let $H_p\ldef\E_{a\sim p}\brk{aa^\trn}$
be the second moment matrix, and $\bar a_p \ldef \E_{a\sim p}[a]$ be the
mean action.
Let the sequence $S$ be fixed, and let $f^*\in\cF$ be any regression
function that attains the value of $\veps_T(S)$ in \pref{eq:eps
  definition}.\footnote{If the infimum is not obtained, it suffices to
  apply the argument that follows with a limit sequence.} With $a^*_t
\ldef\pi_{f^*}(x_t,\cA_t)$ and $\theta^*_t\ldef{}f^*(x_t)$, we have
\begin{align*}
    &\textstyle\E\left[\sum_{t=1}^T\mu(a_t,x_t)-\inf_{a\in\cA_t}\mu(a,x_t)\right]\\&\leq \E\left[\sum_{t=1}^T\ip{a_t-a^*_t,\theta^*_t} \right] + 2 \veps_T T\\
    &\textstyle=  \E\left[\sum_{t=1}^T\ip{\bar a_{p_t}-a^*_t,\theta^*} -\frac{\gamma}{4}\norm{\theta^*-\hat\theta_t}^2_{H_{p_t}} \right]+\E\left[\sum_{t=1}^T\frac{\gamma}{4}\norm{\theta^*-\hat\theta_t}^2_{H_{p_t}}\right] + 2 \veps_T T\,.
\end{align*}
The first expectation term above is bounded by $\bigoh(KT/\gamma)$,
which is established by showing that $\abelong(\thetahat,\gamma)$ is
an approximate solution to the per-round minimax problem
\neurips{\vspace{-0.5em}}
\begin{align}
    \label{eq:minimax}
     \min_{p\in\Delta(\cK)}\max_{\theta\in\mathbb{R}^K}\max_{a^*\in\cK}\ip{\bar a_p-a^*,\theta}-\frac{\gamma}{4}\norm{\hat\theta-\theta}^2_{H_p}\,,
\end{align}
with value $\bigoh(K/\gamma)$.
The second expectation term is bounded by
$\bigoh(\gamma\cdot(\RegSquare+\veps_T{}T))$, which follows readily from the definition of the square loss regret
in \pref{ass:regression} (see the proof of \pref{thm:impreg} for
details). Choosing $\gamma$ to balance the terms leads to the result.

As a first step toward generalizing this result to infinite actions,
we propose a new distribution that \emph{exactly} solves the minimax
problem \eqref{eq:minimax}. This distribution is the solution to a
dual optimization problem based on \emph{log-barrier} regularization, and
provides a principled approach to deriving contextual bandit reductions.
\begin{lemma}
\label{lem:logbar opt}
For any $\theta\in\R^K$ and $\gamma >0$, the unique minimizer of \cref{eq:minimax} coincides with the unique minimizer of the $\logbarrier(\theta,\gamma)$ optimization problem defined by
\begin{align}
  \textstyle\label{eq:logbarrier_form}
    \logbarrier(\theta, \gamma) = \argmin_{p\in\Delta([K])}\crl[\bigg]{\langle p, \theta\rangle -\frac{1}{\gamma}\sum_{a\in[K]}\log(p_a)} = \left(\frac{1}{\lambda + \gamma\theta_{i} }\right)_{i=1}^K\,,
\end{align}
where $\lambda$ is the unique value that ensures that the weights on
the right-hand side above sum to one.
\end{lemma}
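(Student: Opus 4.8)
The plan is to reduce the minimax problem \cref{eq:minimax} to a single scalar optimality condition and then check that this condition pins down exactly the log-barrier distribution. Throughout I would write $\hat\theta$ for the fixed parameter appearing in \cref{eq:minimax} (this is the argument passed to $\logbarrier$) and keep $\theta$ for the inner maximization variable. First I would specialize to the finite-action geometry: with $\cA_t = \cK = \{\mate_1,\dots,\mate_K\}$ one has $\bar a_p = p$ and $H_p = \diag(p)$, so, taking $a^\star = \mate_{i^\star}$, the objective in \cref{eq:minimax} becomes $\langle p,\theta\rangle - \theta_{i^\star} - \tfrac{\gamma}{4}\sum_i p_i(\hat\theta_i - \theta_i)^2$, and the two inner maxima (over $a^\star$ and over $\theta$) commute, being suprema. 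A quick observation removes degenerate $p$: if some $p_j = 0$, choosing $i^\star = j$ makes the objective affine in $\theta_j$ with slope $p_j - 1 = -1$, so the value is $+\infty$; hence the outer minimizer lies in the interior of the simplex, and I may restrict attention there.

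The computational heart is the inner maximization. For interior $p$ and fixed $i^\star$ the objective is a strictly concave quadratic in $\theta$ that decouples across coordinates, with linear coefficient $p_i$ for $i\neq i^\star$ and $p_{i^\star}-1$ for $i=i^\star$; using the one-variable identity $\max_\theta\{a\theta - \tfrac{\gamma}{4}p(\hat\theta-\theta)^2\} = a\hat\theta + \tfrac{a^2}{\gamma p}$ coordinatewise and summing, the per-index value collapses to $\langle p,\hat\theta\rangle - \hat\theta_{i^\star} + \tfrac{1-p_{i^\star}}{\gamma p_{i^\star}}$. Maximizing over $i^\star$ then shows \cref{eq:minimax} equals $\min_p V(p)$ where $V(p) = \langle p,\hat\theta\rangle - \tfrac{1}{\gamma} + \max_{i^\star\in[K]}\big(\tfrac{1}{\gamma p_{i^\star}} - \hat\theta_{i^\star}\big)$.

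Next I would apply an averaging argument to $V$. The maximum $\max_{i^\star}(\tfrac{1}{\gamma p_{i^\star}} - \hat\theta_{i^\star})$ dominates the $p$-weighted average $\sum_j p_j(\tfrac{1}{\gamma p_j} - \hat\theta_j) = \tfrac{K}{\gamma} - \langle p,\hat\theta\rangle$, so $V(p) \geq \tfrac{K-1}{\gamma}$ for every feasible $p$; since every coordinate is strictly positive, equality holds if and only if $\tfrac{1}{\gamma p_j} - \hat\theta_j$ is independent of $j$. That equality condition forces $p_j = (\lambda + \gamma\hat\theta_j)^{-1}$ for a common scalar $\lambda$, and the map $\lambda \mapsto \sum_j(\lambda + \gamma\hat\theta_j)^{-1}$ is continuous and strictly decreasing from $+\infty$ to $0$ on $(-\gamma\min_j\hat\theta_j,\infty)$, so exactly one admissible $\lambda$ makes the sum equal $1$. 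This simultaneously establishes that \cref{eq:minimax} has a unique minimizer and gives its closed form, namely the right-hand side of \cref{eq:logbarrier_form}. To finish, I would note that $p \mapsto \langle p,\hat\theta\rangle - \tfrac{1}{\gamma}\sum_a\log p_a$ is strictly convex on the open simplex with a barrier at the boundary, hence has a unique minimizer, and that its stationarity condition $\hat\theta_j - \tfrac{1}{\gamma p_j} = \mathrm{const}$ is literally the equality condition above; so the minimizers of \cref{eq:minimax} and of $\logbarrier(\hat\theta,\gamma)$ coincide and equal the stated vector.

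I expect the main obstacle to be the inner maximization: one must handle the $i^\star$-th coordinate separately (its linear coefficient is shifted by $-1$) and then verify that the resulting sum telescopes to the clean formula for $V(p)$; if that simplification is botched, the averaging inequality and the monotonicity argument downstream no longer line up. Everything after the formula for $V(p)$ is routine.
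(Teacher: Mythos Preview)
Your proof is correct and follows essentially the same approach as the paper: rule out boundary $p$, solve the inner maximization over $\theta$ in closed form, lower-bound the resulting $\max_{i^\star}$ by the $p$-weighted average to obtain the value $(K-1)/\gamma$, and characterize equality as the constancy of $\tfrac{1}{\gamma p_j}-\hat\theta_j$, which is precisely the log-barrier stationarity condition. The only cosmetic difference is that you carry out the inner maximization coordinate-wise from the start, whereas the paper first writes the optimizer in matrix form as $\delta^\star = \tfrac{2}{\gamma}H_p^{-1}(\bar a_p - \mate_{i^\star})$ and then simplifies to the same scalar expression.
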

The $\abelong$ distribution
is closely related to the $\logbarrier$ distribution: Rather than finding
the optimal Lagrange multiplier $\lambda$ that solves the $\logbarrier$
problem, the $\abelong$ strategy simply plugs in $\lambda=K-\gamma\min_{i'}\theta_{i'}$, then shifts
weight to $p_{i^{\star}}$ to ensure the distribution is
normalized. Since the $\logbarrier$ strategy solves the minimax problem
\pref{eq:minimax} exactly, plugging it into the results of
\citet{FR20} and \citet{SX20} in place of $\abelong$ leads to slightly
improved constants. More importantly, this new perspective leads to a
principled way to extend these reductions to infinite actions.
\subsection{Moving to Infinite Action Sets: The Log-Determinant Barrier}
\label{sec:linear}
\begin{wrapfigure}[13]{r}{6.7cm}
\vspace{-0.1cm}
\begin{algorithm}[H]
\caption{\infSquareCB}
\label{alg:logdet}
\DontPrintSemicolon
\LinesNumberedHidden
\KwIn{Learning rate $\gamma$, horizon $T$.}
\KwInit{} Regression oracle $\SquareAlg$. \;
\For{$t= 1,\dots,T$}{
Receive context $x_t$.\;
Let $\hat\theta_t$ be the oracle's prediction for $x_t$.\;
Play $a_t\sim \logdetbarrier(\hat\theta_t,\gamma;\cA_t)$.\;
Observe loss $\ell_t$.\;
Update $\alg$ with 
$(x_t, a_t, \ell_t)$.
}
\end{algorithm}
\vspace{-0.0in}
\end{wrapfigure}
To support infinite action sets, we replace the $\logbarrier$
distribution with a generalization based on the log-determinant function.
In order to state the result, let $\dim(\cA)$ denote the dimension of the smallest affine linear subspace that contains $\cA$.
When $\dim(\cA)<d$, we adopt the convention that the determinant
function $\det(\cdot)$ takes
the product of only the first $\dim(\cA)$ eigenvalues of the matrix in
its argument.\footnote{This convention ensures that the solution to
  the $\logdetbarrier$ problem is well-defined.}
We define the $\logdetbarrier$ distribution as follows.
\begin{definition}
For parameter $\theta \in \R^d$, action set $\cA\subset\R^d$, and
learning rate
$\gamma>0$, $\logdetbarrier(\theta,\gamma;\cA)$ is defined as the set
of solutions to
\begin{align}
    \textstyle\argmin_{p\in\Delta(\cA)}  \crl*{\ip{\bar a_p,\theta} -\gamma^{-1} \log\det (H_p-\bar a_p\bar a^T_p)}\,.
    \label{eq:logdet_barrier}
\end{align}
\end{definition}
In general, \pref{eq:logdet_barrier} does not admit a unique
solution; all of our results apply to \emph{any} minimizer. Our key
result is that the $\logdetbarrier$ distribution also solves a minimax
problem analogous to that of \pref{eq:minimax}.
\begin{lemma}
\label{lem:logdet bound}
 Any solution to $\logdetbarrier(\thetahat,\gamma;\cA)$ satisfies
 \begin{align}
 \textstyle\max_{\theta\in\mathbb{R}^d}\max_{a^*\in\cA}\ip{\bar a_p-a^*,\theta} - \frac{\gamma}{4}\norm{\hat\theta-\theta}^2_{H_p}\leq \gamma^{-1}\dim(\cA).\label{eq:key_inequality}
 \end{align}
\end{lemma}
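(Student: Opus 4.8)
The plan is to mimic the finite-action argument: establish that the $\logdetbarrier$ distribution $p$ is a stationary point of the convex program \eqref{eq:logdet_barrier}, translate that first-order optimality condition into a bound on the inner linear term $\ip{\bar a_p - a^*, \theta}$ for every competitor action $a^*$, and then absorb the uncontrolled direction of $\theta$ using the quadratic penalty $-\tfrac{\gamma}{4}\nrm{\hat\theta-\theta}^2_{H_p}$. First I would compute the first-order optimality condition for \eqref{eq:logdet_barrier}. Writing $V_p \ldef H_p - \bar a_p \bar a_p^\trn$ for the (relative) covariance of $a\sim p$, the objective is $\ip{\bar a_p,\hat\theta} - \gamma^{-1}\log\det V_p$; perturbing $p$ by moving mass toward a point mass at some $a\in\cA$ and using $\partial \log\det V_p = V_p^{-1}$ (restricted to the affine hull of $\cA$) yields that for every $a\in\supp(\cA)$,
\[
  \ip{a - \bar a_p, \hat\theta} \;\geq\; \gamma^{-1}\, \ip{a - \bar a_p,\, V_p^{-1}(a-\bar a_p)} \;-\; \gamma^{-1}\dim(\cA),
\]
where the $\dim(\cA)$ comes from $\En_{a\sim p}\ip{a-\bar a_p, V_p^{-1}(a-\bar a_p)} = \tr(V_p^{-1}V_p) = \dim(\cA)$; in particular this holds with $a = a^*$ for any $a^*\in\cA$.

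Next I would bound the left-hand side of \eqref{eq:key_inequality} for a fixed $a^*$. Decompose $\ip{\bar a_p - a^*,\theta} = \ip{\bar a_p - a^*,\hat\theta} + \ip{\bar a_p - a^*,\theta - \hat\theta}$. The first term is controlled by the optimality condition above (with $a=a^*$), giving
$\ip{\bar a_p - a^*,\hat\theta} \leq -\gamma^{-1}\nrm{a^*-\bar a_p}^2_{V_p^{-1}} + \gamma^{-1}\dim(\cA)$. For the second term, the natural move is Cauchy--Schwarz / AM-GM in the $\nrm{\cdot}_{V_p^{-1}}$–$\nrm{\cdot}_{V_p}$ duality: $\ip{\bar a_p - a^*,\theta-\hat\theta} \leq \nrm{a^*-\bar a_p}_{V_p^{-1}}\nrm{\theta-\hat\theta}_{V_p} \leq \gamma^{-1}\nrm{a^*-\bar a_p}^2_{V_p^{-1}} + \tfrac{\gamma}{4}\nrm{\theta-\hat\theta}^2_{V_p}$. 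Adding the two bounds, the $\pm\gamma^{-1}\nrm{a^*-\bar a_p}^2_{V_p^{-1}}$ terms cancel, leaving $\ip{\bar a_p-a^*,\theta} \leq \tfrac{\gamma}{4}\nrm{\theta-\hat\theta}^2_{V_p} + \gamma^{-1}\dim(\cA)$, i.e. $\ip{\bar a_p-a^*,\theta} - \tfrac{\gamma}{4}\nrm{\theta-\hat\theta}^2_{V_p} \leq \gamma^{-1}\dim(\cA)$. Finally, since $V_p = H_p - \bar a_p\bar a_p^\trn \psdleq H_p$, we have $\nrm{\theta-\hat\theta}^2_{V_p}\leq \nrm{\theta-\hat\theta}^2_{H_p}$, so replacing $V_p$ by $H_p$ only strengthens the penalty term — wait, it weakens the bound, so I must be careful: the inequality $\tfrac{\gamma}{4}\nrm{\cdot}_{V_p}^2 \leq \tfrac{\gamma}{4}\nrm{\cdot}_{H_p}^2$ means $-\tfrac{\gamma}{4}\nrm{\cdot}_{H_p}^2 \leq -\tfrac{\gamma}{4}\nrm{\cdot}_{V_p}^2$, hence $\ip{\bar a_p-a^*,\theta} - \tfrac{\gamma}{4}\nrm{\theta-\hat\theta}^2_{H_p} \leq \ip{\bar a_p-a^*,\theta} - \tfrac{\gamma}{4}\nrm{\theta-\hat\theta}^2_{V_p} \leq \gamma^{-1}\dim(\cA)$, as desired. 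Taking the max over $a^*\in\cA$ and $\theta\in\R^d$ concludes, since every step was uniform in both.

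The main obstacle I anticipate is making the first-order optimality argument fully rigorous. There are three subtleties: (i) $\Delta(\cA)$ is restricted to countably-supported distributions, so I should work with the directional derivative of the objective along the feasible ray $(1-t)p + t\delta_a$ and argue that nonnegativity of this derivative at $t=0$ is both necessary for a minimizer and implies the pointwise inequality; (ii) when $\dim(\cA)<d$ the matrix $V_p$ is only invertible on the affine hull of $\cA$, so $V_p^{-1}$ must be read as the pseudoinverse and all vectors $a-\bar a_p$ lie in its range — the convention on $\det$ stated in the paper is exactly what makes $\log\det V_p$ and its gradient well-defined here; (iii) I need $V_p\psdgt 0$ on that subspace for the minimizer, i.e. the minimizer is not supported on a lower-dimensional set — this follows because $\log\det V_p\to-\infty$ as $V_p$ degenerates, so any minimizer is full-dimensional within the affine hull and the gradient computation is valid. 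Handling (i)–(iii) carefully is the crux; the algebra combining the optimality condition with AM-GM is routine once the derivative identity $\partial_p[-\gamma^{-1}\log\det V_p]\cdot(\delta_a - p) = -\gamma^{-1}(\nrm{a-\bar a_p}^2_{V_p^{-1}} - \dim(\cA))$ is in hand.
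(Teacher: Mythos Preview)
Your proposal is correct and follows essentially the same route as the paper. The paper first reduces to the full-dimensional case $\dim(\cA)=d$ by projecting onto the affine hull (your subtlety~(ii)), then derives the KKT inequality $\ip{\bar a_p-a,\hat\theta}+\gamma^{-1}\nrm{a-\bar a_p}_{V_p^{-1}}^{2}\le\gamma^{-1}\dim(\cA)$ for all $a\in\cA$ exactly as you outline (handling~(i) and~(iii) along the way), and finally combines this with the maximization over $\theta$ and the bound $V_p\psdleq H_p$; the only cosmetic difference is that the paper computes the maximizer $\theta^\star$ explicitly rather than invoking Cauchy--Schwarz/Young, which of course yields the identical expression.
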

By replacing the $\abelong$ distribution with the $\logdetbarrier$
distribution in \pref{alg:dylan}, we obtain an optimal reduction for
infinite action sets. This algorithm, which we call \infalg, is
displayed in \pref{alg:logdet}.
\begin{theorem}
  \label{thm:infinite}
  Given a regression oracle $\SquareAlg$
  that satisfies \pref{ass:regression}, \infalg with
  learning rate $\gamma\propto\sqrt{dT/(\RegSquare+\veps)}$ guarantees
  that for all sequences $S$ with $\veps_T(S)\leq\veps$,
\begin{align*}
    \textstyle\Reg = \cO\left(\sqrt{dT\RegSquare}+ \veps\sqrt{d}T\right)\,.
\end{align*}
\end{theorem}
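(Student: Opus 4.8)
The plan is to follow the template of the $K$-armed argument sketched in \pref{sec:k-armed}, with \pref{lem:logdet bound} in place of the per-round minimax bound used in the $K$-armed case. Fix a sequence $S$ with $\veps_T(S)\le\veps$, let $\fstar\in\cF$ attain the infimum defining $\veps_T(S)$, and abbreviate $\theta^\star_t\ldef\fstar(x_t)$, $a^\star_t\ldef\pi_{\fstar}(x_t,\cA_t)$, and $\veps_t\ldef\sup_{a\in\cA_t}\abs*{\ip{a,\theta^\star_t}-\mu(a,x_t)}$, so that $\sum_{t=1}^T\veps_t^2\le T\veps_T^2\le T\veps^2$. The first step is the regret decomposition: since $a_t\in\cA_t$ we have $\abs*{\mu(a_t,x_t)-\ip{a_t,\theta^\star_t}}\le\veps_t$, and since $a^\star_t$ minimizes $\ip{\cdot,\theta^\star_t}$ over $\cA_t$ we obtain $\mu(a_t,x_t)-\inf_{a\in\cA_t}\mu(a,x_t)\le\ip{a_t-a^\star_t,\theta^\star_t}+2\veps_t$ pointwise; summing and applying Cauchy--Schwarz to $\sum_t\veps_t\le\sqrt{T\sum_t\veps_t^2}\le T\veps$ yields $\Reg\le\E\brk*{\sum_{t=1}^T\ip{a_t-a^\star_t,\theta^\star_t}}+2T\veps$.

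Next, writing $p_t$ for the $\logdetbarrier(\thetahat_t,\gamma;\cA_t)$ distribution played at round $t$ and using $\E[a_t\mid p_t]=\bar a_{p_t}$, I would split the leading term as
\[
\E\brk*{\textstyle\sum_{t=1}^T\ip{a_t-a^\star_t,\theta^\star_t}}=\E\brk*{\textstyle\sum_{t=1}^T\prn*{\ip{\bar a_{p_t}-a^\star_t,\theta^\star_t}-\tfrac{\gamma}{4}\norm{\thetahat_t-\theta^\star_t}^2_{H_{p_t}}}}+\tfrac{\gamma}{4}\E\brk*{\textstyle\sum_{t=1}^T\norm{\thetahat_t-\theta^\star_t}^2_{H_{p_t}}}.
\]
For the first sum, instantiating \pref{lem:logdet bound} at each round with $\theta=\theta^\star_t$ and $a^\star=a^\star_t$ (which lies in $\cA_t$, the action set used to compute $p_t$) bounds each summand by $\gamma^{-1}\dim(\cA_t)\le\gamma^{-1}d$, hence the whole sum by $dT/\gamma$. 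For the second sum, I would carry out the standard conversion to square-loss regret: since $\E[\ell_t\mid a_t,x_t]=\mu(a_t,x_t)$ and neither $\thetahat_t$ nor $\theta^\star_t$ depends on $a_t$, the conditional variance of $\ell_t$ cancels, so conditionally on the history through round $t-1$ we have $\E[(\ip{a_t,\thetahat_t}-\ell_t)^2-(\ip{a_t,\theta^\star_t}-\ell_t)^2]=\E[(\ip{a_t,\thetahat_t}-\mu(a_t,x_t))^2-(\ip{a_t,\theta^\star_t}-\mu(a_t,x_t))^2]$, and the elementary bound $(u+v)^2\ge\tfrac12 u^2-v^2$ with $u=\ip{a_t,\thetahat_t-\theta^\star_t}$, $v=\ip{a_t,\theta^\star_t}-\mu(a_t,x_t)$ lower-bounds the right side by $\tfrac12\norm{\thetahat_t-\theta^\star_t}^2_{H_{p_t}}-2\veps_t^2$. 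Summing over $t$, taking expectations, and invoking \pref{ass:regression} with comparator $\fstar$ (legitimate, since the oracle bound holds for adaptively chosen sequences) gives $\E\brk*{\sum_{t=1}^T\norm{\thetahat_t-\theta^\star_t}^2_{H_{p_t}}}\le 2\RegSquare+4T\veps^2$.

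Putting the pieces together gives $\Reg\le dT/\gamma+\tfrac{\gamma}{2}\RegSquare+\gamma T\veps^2+2T\veps$, and choosing $\gamma\asymp\sqrt{dT/(\RegSquare+T\veps^2)}$ balances the first three terms: using $\sqrt{dT(\RegSquare+T\veps^2)}\le\sqrt{dT\RegSquare}+\sqrt d\,T\veps$ together with $\gamma T\veps^2\le\sqrt d\,T\veps$ and $2T\veps\le 2\sqrt d\,T\veps$ produces $\Reg=\cO(\sqrt{dT\RegSquare}+\veps\sqrt d\,T)$, as claimed. The only genuinely new ingredient is \pref{lem:logdet bound} (already established above); everything else is routine, and the one place needing care is the square-loss conversion, where the misspecification contributions $\veps_t^2$ and the conditioning must be tracked carefully so that \pref{ass:regression} applies verbatim to the adaptively generated data. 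I expect that bookkeeping, rather than any single deep step, to be the main source of friction. Note finally that $\gamma$ is tuned here using a \emph{known} upper bound $\veps$ on $\veps_T(S)$, which is precisely why this theorem is only an intermediate step toward the fully adaptive \pref{thm:main}.
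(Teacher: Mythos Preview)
Your proposal is correct and follows essentially the same approach as the paper, which establishes \pref{thm:infinite} as the unweighted special case of the proof of \pref{thm:impreg} (set $q_t\equiv 1$, $Z_t\equiv 1$, $\rho_t\equiv 1$, $\gamma_t\equiv\gamma$ there to recover exactly your bound $dT/\gamma+\tfrac{\gamma}{2}\RegSquare+\gamma T\veps^2+2T\veps$). The only cosmetic difference is in the square-loss conversion: the paper expands $(\ip{a_t,\thetahat_t}-\ip{a_t,\theta^\star_t})^2$ exactly and bounds the cross term via $2ab\le 2a^2+\tfrac12 b^2$, whereas you use the equivalent inequality $(u+v)^2\ge\tfrac12 u^2-v^2$; both routes yield $\E\brk*{\sum_t\nrm{\thetahat_t-\theta^\star_t}_{H_{p_t}}^2}\le 2\RegSquare+4T\veps^2$ and are interchangeable.
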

The $\logdetbarrier$ optimization problem is closely related to the
D-optimal experimental design problem, as well as the John ellipsoid problem
\citep{KT90,TY07}; the latter corresponds to the case where $\theta=0$ in
\pref{eq:logdet_barrier} \citep{KY05}. By adapting specialized
optimization algorithms for these problems (in particular, a
Frank-Wolfe-type scheme), we can efficiently solve the
$\logdetbarrier$ problem.
\begin{proposition}
\label{prop:logdet_efficient}
An approximate minimizer for \eqref{eq:logdet_barrier} that achieves the same
regret bound up to a constant factor can be computed in time
$\tilde\cO\prn*{|\cA_t|\cdot\mathrm{poly}(d)}$ and memory
$\bigoht(\log\abs{\cA_t}\cdot\poly(d))$ per round.
\end{proposition}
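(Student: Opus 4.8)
\emph{Plan and reformulation.} The plan is to recast \pref{eq:logdet_barrier} as a linearly-perturbed D-optimal design (equivalently, John-ellipsoid) problem, solve it with a Frank--Wolfe / Fedorov--Wynn exchange method of the type used for such problems \citep{KT90,TY07}, and then verify that an \emph{approximate} minimizer still certifies the conclusion of \pref{lem:logdet bound} up to a constant. Concretely, first change coordinates so that $\cA_t$ affinely spans $\R^{k}$ with $k=\dim(\cA_t)$; finding a basis of the affine hull and performing this reduction takes $\bigoh(\abs{\cA_t}\cdot\poly(d))$ time. Lift each action to $\tilde a \ldef (a,1)\in\R^{k+1}$ and set $M_p \ldef \E_{a\sim p}\brk{\tilde a\tilde a^{\trn}}$: this matrix is \emph{linear} in $p$, its last column is $(\bar a_p,1)$, and a Schur-complement identity gives $\det M_p = \det(H_p-\bar a_p\bar a_p^{\trn})$. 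Hence the objective of \pref{eq:logdet_barrier} equals $\phi(M_p)$ for the convex function $\phi(M) = \ip{\thetahat_t,\,[M]_{1:k,\,k+1}} - \gamma^{-1}\log\det M$, and the feasible set is the polytope $\crl{M_p : p\in\Delta(\cA_t)} = \conv\crl{\tilde a\tilde a^{\trn} : a\in\cA_t}$, the convex hull of $\abs{\cA_t}$ rank-one matrices---exactly a D-optimal design problem plus a linear term.

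\emph{Algorithm, runtime, and memory.} On this reformulation I would run a Frank--Wolfe iteration: at an iterate $M$ with associated sparse $p$, (i) form $\nabla\phi(M) = -\gamma^{-1}M^{-1} + L$, where $L$ is a fixed rank-two matrix encoding $\thetahat_t$, in $\poly(d)$ time; (ii) solve the linear minimization $\min_{a\in\cA_t}\ip{\nabla\phi(M),\tilde a\tilde a^{\trn}} = \min_{a\in\cA_t}\tilde a^{\trn}\nabla\phi(M)\tilde a$ by enumerating $\cA_t$, in $\bigoh(\abs{\cA_t}\cdot\poly(d))$ time; (iii) take a line-searched step toward the selected vertex---a rank-one update of $M$ and a sparse update of $p$---warm-starting from the uniform distribution over a maximal affinely independent subset of $\cA_t$. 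After $N$ iterations $p$ is supported on at most $N+\poly(d)$ actions, so storing $p$ costs $\bigoh((N+\poly(d))\log\abs{\cA_t})$ bits while $M$ and $M^{-1}$ cost $\poly(d)$; the per-round time is $\bigoh(N\abs{\cA_t}\poly(d))$. It therefore remains to show that $N = \poly(d)\cdot\polylog(T)$ iterations suffice.

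\emph{Approximate optimality suffices, and the main obstacle.} I only need the Frank--Wolfe duality gap of $\phi$ at the output to be at most $\gamma^{-1}\dim(\cA)$: using the block-inverse identity $\tilde a^{\trn}M_p^{-1}\tilde a = 1 + (a-\bar a_p)^{\trn}(H_p-\bar a_p\bar a_p^{\trn})^{-1}(a-\bar a_p)$ together with $H_p^{-1}\psdleq(H_p-\bar a_p\bar a_p^{\trn})^{-1}$ (since $H_p = (H_p-\bar a_p\bar a_p^{\trn})+\bar a_p\bar a_p^{\trn}$), a short calculation shows that a duality gap of at most $\gamma^{-1}\dim(\cA)$ implies $\max_{\theta\in\R^d}\max_{a^{*}\in\cA_t}\ip{\bar a_p - a^{*},\theta} - \tfrac{\gamma}{4}\norm{\thetahat_t-\theta}^2_{H_p} \le 2\gamma^{-1}\dim(\cA)$, i.e.\ \pref{eq:key_inequality} holds with its right-hand side enlarged by a factor of $2$; re-running the proofs of \pref{thm:infinite} and \pref{thm:main} with this constant-factor-weaker per-round bound only changes universal constants. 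The hard part is the iteration bound $N = \poly(d)\polylog(T)$: since $\gamma$ can be as large as $\sqrt{T}$, the minimizer of $\phi$ hugs the boundary of the PSD cone (the smallest eigenvalue of the optimal covariance is $\Theta(\gamma^{-1})$), so the naive Frank--Wolfe curvature constant scales like $\gamma^2$ and would force $N\approx T$. I would resolve this by exploiting self-concordance of $-\log\det$---either through a self-concordant convergence analysis of the exchange iteration, or, more simply, by first peeling off the greedy atom $\delta_{a^{\star}_t}$ with $a^{\star}_t\in\argmin_{a\in\cA_t}\ip{a,\thetahat_t}$ and optimizing its weight in closed form, which leaves a \emph{scale-invariant} residual design problem whose conditioning no longer depends on $\gamma$ and is solved by the same exchange iteration in $\poly(d)$ steps, exactly as for the John ellipsoid in \citet{KT90,TY07}. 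Checking that this peeling is compatible with the constant-factor slack above is where the bulk of the work lies.
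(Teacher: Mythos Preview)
Your proposal is correct and follows essentially the same route as the paper: the same lifting $\tilde a=(a,1)$ with the Schur-complement identity $\det\tilde H_p=\det(H_p-\bar a_p\bar a_p^{\trn})$, a Khachiyan/Frank--Wolfe exchange scheme on the resulting linearly-perturbed D-optimal design problem, and the observation that a constant-factor approximate solution only costs a constant in \pref{lem:logdet bound}. The paper's treatment of the $\gamma$ obstacle is a blend of your two suggested remedies: it replaces your additive FW-gap target by a multiplicative ``$\eta$-rounding'' criterion $\|\tilde a\|^2_{\tilde H_p^{-1}}\le(1+\eta)\bigl(\tilde d+\gamma\ip{a-\bar a_p,\hat\theta_t}\bigr)$, interleaves within each FW iteration an explicit line search toward $\mathbf{e}_{a^\star}$ with $a^\star=\argmin_{a}\ip{a,\hat\theta_t}$ (your ``peeling,'' but repeated rather than done once up front) to keep this ratio well-posed, and then proves a self-concordance-flavored multiplicative contraction $G(p_k)-G(p^\star)\le\bigl(1-\tfrac{1}{2\tilde d}\bigr)\bigl(G(p_{k-1})-G(p^\star)\bigr)$ whenever the current rounding parameter exceeds $2\tilde d$. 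This last step is what converts the initial suboptimality gap of $\cO(d\log d+\gamma)$ into only $\cO(d\log\gamma)$ iterations, giving a total of $\cO(d\log\gamma+d^2\log d+d^2/\eta)$ FW steps; your one-shot ``scale-invariant residual'' framing is not quite what happens, but the spirit is the same.
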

The minimization algorithm, along with a full analysis for runtime and
memory complexity and  impact on the regret, is provided in \ifsup \cref{app:logdet solver}\else the supplementary material\fi.

\subsection{Adapting to Misspecification: Algorithmic Framework}
\label{sec:adaptive}
The regret bound for \infalg in \pref{thm:infinite} achieves optimal
dependence on the dimension and misspecification level, but
requires an a-priori upper bound on $\veps_T(S)$ to set the learning rate. We now turn our
attention to adapting to this parameter.

At a high level, our approach is to run multiple instances of \infalg,
each tuned to a different level of misspecification, then run an
aggregation procedure on top to learn the best instance. Specifically,
we initialize a collection of $M\ldef{}\floor{\log(T)}$ instances of
\cref{alg:logdet} in which the learning rate for instance $m$ is tuned
for misspecification level $\veps'_m \ldef \exp(-m)$ (that is, we
follow a geometric grid). It is straightforward to show that
there exists $m^{\star}\in\brk{M}$ such that the $\mstar$th instance
would enjoy optimal regret if we were to run it on the sequence
$S$. Of course, $\mstar$ is not known a-priori, so we run an aggregation (or,
``Corralling'') procedure to select the best
instance \citep{ALNS17}. This approach is, in general, not suitable for model selection,
since it typically requires prior knowledge of the optimal regret
bound to tune certain parameters appropriately
\citep{foster2019model}. Our conceptual insight is to show that adaptation to
misspecification is an exception to this rule, and offers a simple
setting where model selection for contextual bandits is possible.

\begin{wrapfigure}[12]{r}{8cm}
\vspace{-0.1cm}
\begin{algorithm}[H]
\caption{Corralling \\\citep{ALNS17}}
\label{alg:master}
\DontPrintSemicolon
\LinesNumberedHidden
\KwIn{Master algorithm $\master$, $T$}
\KwInit{$(\base_m)_{m=1}^M$ \;}
\For{$t= 1,\dots, T$}{
Receive context $x_t$.\;
Receive $A_t$, $q_{t,A_t}$ from $\master$.\;
Pass $(x_t,\cA_t, q_{t,A_t},\rho_{t,A_t})$ to $\base_{A_t}$.\; 
$\base_{A_t}$ plays $a_t$ and observes $\ell_t$.\;
Update $\master$ with $\tilde\ell_{t,A_t}=(\ell_t+1)$.\;
}
\end{algorithm}
\end{wrapfigure}
We use the aggregation scheme in \cref{alg:master}, which is a
generalization of the \corral algorithm of \citet{ALNS17}.

The algorithm is initialized with $M$ \emph{base} algorithms, and uses a
multi-armed bandit algorithm with $M$ arms as a \emph{master}
algorithm whose role is to choose the base algorithm to follow at
each round.

In more detail, the master algorithm maintains a distribution $q_t\in\Delta(\brk*{M})$ over the
base algorithms. At each round $t$, it samples an algorithm $A_t\sim q_t$
and passes the current context $x_t$ into this algorithm, as well as
the sampling probability $q_{t,A_t}$, and an importance weight
$\rho_{t,A_t}$, where we define $\rho_{t,m}\ldef{}1/\min_{s\leq
  t}q_{s,m}$ for each $m$.
At this point, the base algorithm $A_t$ selected by the master
executes a standard contextual bandit round:
Given the context $x_t$, it selects an arm $a_t$,
receives the loss $\ls_t$, and updates its internal state. Finally,
the master updates its state with the action-loss pair
$(A_t,\tilde \ell_{t,A_t})$, where $\tilde
\ell_{t,A_t}\ldef{}\ls_t+1$; for technical reasons related to our
choice of master algorithm, it is useful to
shift the loss by $1$ to ensure non-negativity.

Define the \emph{importance-weighted regret} for base algorithm $m$ as
\neurips{$\RegImp[m]\ldef\E\brk*{\sum_{t=1}^T\frac{\bbI\set{A_t=m}}{q_{t,m}}\left(\mu(a_t,x_t)-\inf_{a\in\cA_t}\mu(a,x_t)\right)}$}
\arxiv{\[\RegImp[m]\ldef\E\brk*{\sum_{t=1}^T\frac{\bbI\set{A_t=m}}{q_{t,m}}\left(\mu(a_t,x_t)-\inf_{a\in\cA_t}\mu(a,x_t)\right)},\]}
which is simply the pseudoregret incurred in the rounds where
\pref{alg:master} follows this base algorithm, weighted inversely
proportional to the probability that this occurs. It is
straightforward to show that for any choice for the master and base
algorithms, this scheme guarantees that
\begin{align}
    \textstyle\Reg %
    = \E\brk*{\sum_{t=1}^T\tilde\ell_{t,A_t}-\tilde\ell_{t,m^*}}+\RegImp[m^*]~
    ,\label{eq:corral_decomp}
\end{align}
where $\tilde \ell_{t,m}$ denotes the loss that the algorithm would have
suffered at round $t$ if the master algorithm had chosen $A_t=m$.  In
other words, the regret of \pref{alg:master} is equal to the regret
$\Reg[M]\ldef\E[\sum_{t=1}^T\tilde\ell_{t,A_t}-\tilde\ell_{t,m^*}]$ of the master algorithm, plus the importance-weighted regret
of the optimal base algorithm  $m^*$.

The difficulty in instantiating this general scheme lies in the fact
that the importance-weighted regret $\RegImp[m^*] $ of the optimal base
algorithm typically scales
with $\E[\rho_{T,m^*}^\alpha]\cdot\RegU[m^*]$, where
$\alpha\in[\frac{1}{2},1]$ is an algorithm-dependent parameter and
$\RegU[m]\ldef\E[\sum_{t=1}^T\bbI\set{A_t=m}\left(\mu(a_t,x_t)-\inf_{a\in\cA_t}\mu(a,x_t)\right)]$
denotes the unweighted regret of algorithm $m$. A-priori, the
$\E[\rho_{T,m^*}^\alpha]$ can be unbounded, leading to large regret.
The key to the analysis of \citet{ALNS17}, and the approach we follow
here, is to use a master algorithm
with \emph{negative regret} proportional to $\E[\rho_{T,m^*}^\alpha]$,
allowing to cancel this factor.

\begin{wrapfigure}[14]{r}{8cm}
\vspace{-1.15cm}
\begin{algorithm}[H]
\caption{\impSquareCB (for base alg. $m$)}
\label{alg:imp logdet}
\DontPrintSemicolon
\LinesNumberedHidden
\KwIn{$T$, $\RegSquare$}
\KwInit{} Weighted regression oracle $\alg$.\;
\For{$t= (\tau_1,\tau_2,\ldots)\subset [T]$}{
Receive context $x_t$ and $(q_{t,m},\rho_{t,m})$.\;
Set $\gamma_{t,m} =
\min\set{\frac{\sqrt{d}}{\veps'_m},\sqrt{dT/(\rho_{t,m}\RegSquare)}}$.\;
Set $w_t=\gamma_{t,m}/q_{t,m}$.\;
Compute oracle's prediction $\thetahat_t$ for $x_t, w_t$.\;
Sample $a_t\sim \logdetbarrier(\theta_t,\gamma_{t,m};\cA_t)$.\;
Play $a_t$ and observe loss $\ell_t$.\;
Update $\alg$ with $\prn*{w_t, x_t, a_t, \ell_t}$.
}
\end{algorithm}
\vspace{-0.0in}
\end{wrapfigure}

\subsubsection{Choosing the Base Algorithm}
As the first step towards instantiating the aggregation scheme above,
we specify the base algorithm. We use a modification to \infalg (denoted by $\impSquareCB$) based
on 
importance weighting, which is designed to ensure that the
importance-weighted regret in \pref{eq:corral_decomp} is
bounded. Pseudocode for the $m$th base algorithm is given in
\pref{alg:imp logdet}.

\impSquareCB proceeds as follows. Let the instance $m$ be fixed, and
let $Z_{t,m}=\indic\crl{A_t=m}$ indicate the event that this instance
is chosen to select an arm; note that we have $Z_{t,m}\sim \ber(q_{t,m})$
  marginally. When $Z_{t,m}=1$, instance $m$ receives $q_{t,m}$ and
  $\rho_{t,m}=\max_{s\leq t}q_{s,m}^{-1}$ from the master
  algorithm. The instance then follows the same update scheme as in the
  vanilla version of \infalg, except that i) it uses an adaptive
  learning rate $\gamma_{t,m}$, which is tuned based on $\rho_{t,m}$,
  and ii) it uses a weighted square loss regression oracle
  (as in \pref{ass:weighted regression}), with the weight $w_t$ set as a function
  of $\gamma_{t,m}$ and $q_{t,m}$.

The importance weighted regret $\RegImp[m]$ for this scheme is bounded as follows.
\begin{theorem}
\label{thm:impreg}
When invoked within \cref{alg:master} using a weighted regression oracle satisfying \cref{ass:weighted regression},
the importance-weighted regret for each instance $m\in\brk*{M}$ of \cref{alg:imp logdet} satisfies
\begin{align}
    \textstyle\RegImp[m] \leq
  \frac{3}{2}\E[\sqrt{\rho_{T,m}}]\sqrt{dT\RegSquare} +
  \left(\left(\frac{\veps'_m}{\veps_T}+\frac{\veps_T}{\veps'_m}
  \right) \sqrt{d}+2\right)\veps_T T.
  \label{eq:impreg}
\end{align}
\end{theorem}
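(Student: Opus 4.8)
The plan is to follow the analysis of \infalg sketched after \pref{alg:dylan} (and made rigorous in \pref{thm:infinite}), incorporating the two features specific to the base algorithm: the importance-weighting factor $Z_{t,m}/q_{t,m}$, where $Z_{t,m}\ldef\indic\crl{A_t=m}$, and the data-dependent learning rate $\gamma_{t,m}$. Fix the sequence $S$, let $f^\star\in\cF$ attain the infimum in \pref{eq:eps definition}, and write $\theta^\star_t\ldef f^\star(x_t)$, $a^\star_t\ldef\pi_{f^\star}(x_t,\cA_t)=\argmin_{a\in\cA_t}\ip{a,\theta^\star_t}$, and $\delta_t\ldef\sup_{a\in\cA_t}\abs{\ip{a,\theta^\star_t}-\mu(a,x_t)}$. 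Because $x_t,\cA_t$ are fixed (oblivious adversary), each $\delta_t$ is a constant, and $\sum_{t=1}^T\delta_t^2=\veps_T^2 T$, hence $\sum_{t=1}^T\delta_t\leq\veps_T T$ by Cauchy--Schwarz. Exactly as in the \squareCB sketch, every round with $a_t\in\cA_t$ satisfies the pointwise bound $\mu(a_t,x_t)-\inf_{a\in\cA_t}\mu(a,x_t)\leq\ip{a_t-a^\star_t,\theta^\star_t}+2\delta_t$. Multiplying by $Z_{t,m}/q_{t,m}$ ($q_{t,m}>0$ being ensured by the master), summing over $t$, and taking expectations --- using $\E[Z_{t,m}/q_{t,m}\mid\text{round-}t\text{ history}]=1$, that conditionally on $\crl{Z_{t,m}=1}$ the action is drawn from $p_t\ldef\logdetbarrier(\thetahat_t,\gamma_{t,m};\cA_t)$, and that $\delta_t$ is deterministic --- yields
\[
\RegImp[m]\leq\E\brk*{\sum_{t=1}^{T}\ip{\bar a_{p_t}-a^\star_t,\theta^\star_t}}+2\veps_T T ,
\]
with the convention that the $t$-th summand is multiplied by $Z_{t,m}$, so $p_t$ appears only on rounds where it is defined.

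Next I would split each summand in the inverse-gap-weighting fashion,
\[
\ip{\bar a_{p_t}-a^\star_t,\theta^\star_t}=\Bigl(\ip{\bar a_{p_t}-a^\star_t,\theta^\star_t}-\tfrac{\gamma_{t,m}}{4}\norm{\theta^\star_t-\thetahat_t}^2_{H_{p_t}}\Bigr)+\tfrac{\gamma_{t,m}}{4}\norm{\theta^\star_t-\thetahat_t}^2_{H_{p_t}} ,
\]
and bound the two pieces separately. By \pref{lem:logdet bound} (applied with $\theta=\theta^\star_t$, $a^\star=a^\star_t$, $p=p_t$), the first bracket is at most $\gamma_{t,m}^{-1}\dim(\cA_t)\leq\gamma_{t,m}^{-1}d$. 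Since $\gamma_{t,m}^{-1}=\max\crl{\veps'_m/\sqrt d,\ \sqrt{\rho_{t,m}\RegSquare/(dT)}}\leq\veps'_m/\sqrt d+\sqrt{\rho_{T,m}\RegSquare/(dT)}$ by the monotonicity $\rho_{t,m}\leq\rho_{T,m}$, summing over $t$ gives $\sum_t\gamma_{t,m}^{-1}d\leq\veps'_m\sqrt d\,T+\sqrt{\rho_{T,m}\,dT\RegSquare}$, so in expectation this piece contributes at most $\veps'_m\sqrt d\,T+\E[\sqrt{\rho_{T,m}}]\sqrt{dT\RegSquare}$.

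It remains to control the ``variance'' piece $\E[\sum_t\tfrac{\gamma_{t,m}}{4}\norm{\theta^\star_t-\thetahat_t}^2_{H_{p_t}}]$, which, carrying the $Z_{t,m}$ factor and using $w_t=\gamma_{t,m}/q_{t,m}$ (so $q_{t,m}w_t=\gamma_{t,m}$), equals $\tfrac14\E[\sum_{t\,:\,Z_{t,m}=1}w_t\norm{\theta^\star_t-\thetahat_t}^2_{H_{p_t}}]$. Writing $z_t=\ip{a_t,\thetahat_t}$ and $z^\star_t=\ip{a_t,\theta^\star_t}$, on a round with $Z_{t,m}=1$, conditioning on the round-$t$ history and taking expectations over $a_t\sim p_t$ and $\ell_t$ gives $\E[(z_t-\ell_t)^2-(z^\star_t-\ell_t)^2]=\E_{a_t\sim p_t}\bigl[(z_t-z^\star_t)^2+2(z_t-z^\star_t)(z^\star_t-\mu(a_t,x_t))\bigr]$; since $\abs{z^\star_t-\mu(a_t,x_t)}\leq\delta_t$, AM--GM gives $2(z_t-z^\star_t)(z^\star_t-\mu(a_t,x_t))\geq-\tfrac12(z_t-z^\star_t)^2-2\delta_t^2$, so that this conditional expectation is at least $\tfrac12\norm{\theta^\star_t-\thetahat_t}^2_{H_{p_t}}-2\delta_t^2$. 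Multiplying by the nonnegative weight $w_t$ (determined before round $t$'s action and loss), summing over $\crl{t:Z_{t,m}=1}$, and invoking \pref{ass:weighted regression} with competitor $f^\star$ on this adaptively chosen subsequence (legitimate, since from the oracle's viewpoint it is just an adaptive stream) yields $\tfrac12\E[\sum_{t:Z_{t,m}=1}w_t\norm{\theta^\star_t-\thetahat_t}^2_{H_{p_t}}]\leq\E[\max_t w_t]\RegSquare+2\E[\sum_{t:Z_{t,m}=1}w_t\delta_t^2]$. Now $w_t=\gamma_{t,m}/q_{t,m}\leq\sqrt{dT/(\rho_{t,m}\RegSquare)}\cdot\rho_{t,m}=\sqrt{\rho_{t,m}dT/\RegSquare}\leq\sqrt{\rho_{T,m}dT/\RegSquare}$, using $1/q_{t,m}\leq\rho_{t,m}\leq\rho_{T,m}$, so $\E[\max_t w_t]\RegSquare\leq\E[\sqrt{\rho_{T,m}}]\sqrt{dT\RegSquare}$; and $\E[\sum_{t:Z_{t,m}=1}w_t\delta_t^2]=\E[\sum_t\gamma_{t,m}\delta_t^2]\leq(\sqrt d/\veps'_m)\sum_t\delta_t^2\leq(\veps_T/\veps'_m)\sqrt d\,\veps_T T$, using $\E[Z_{t,m}w_t\mid\text{history}]=\gamma_{t,m}$ and $\gamma_{t,m}\leq\sqrt d/\veps'_m$. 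Hence the variance piece is at most $\tfrac12\E[\sqrt{\rho_{T,m}}]\sqrt{dT\RegSquare}+(\veps_T/\veps'_m)\sqrt d\,\veps_T T$.

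Adding the three contributions --- $2\veps_T T$; the bracket term $\veps'_m\sqrt d\,T+\E[\sqrt{\rho_{T,m}}]\sqrt{dT\RegSquare}$; and the variance piece --- and rewriting $\veps'_m\sqrt d\,T=(\veps'_m/\veps_T)\sqrt d\,\veps_T T$ (the edge case $\veps_T=0$, where this term is simply $\veps'_m\sqrt d\,T$, is handled by the same convention) reproduces the claimed bound \pref{eq:impreg}. I expect the main obstacle to be not any individual inequality but the conditional-expectation bookkeeping that glues the pieces together: one must keep straight that $q_{t,m},\rho_{t,m},\gamma_{t,m},w_t$ are all determined before round $t$'s action and loss (with $w_t$ chosen precisely so that $q_{t,m}w_t=\gamma_{t,m}$), check that importance-weighting correctly turns the \emph{unweighted} minimax bound of \pref{lem:logdet bound} into a bound on $\RegImp[m]$, and verify that $\max_t w_t$ is controlled by $\sqrt{\rho_{T,m}dT/\RegSquare}$ rather than being unbounded --- which is exactly why $\gamma_{t,m}$ is capped at $\sqrt{dT/(\rho_{t,m}\RegSquare)}$ instead of being set to the $\veps'_m$-tuned value alone.
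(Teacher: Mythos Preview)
Your proof is correct and follows essentially the same route as the paper's: decompose via the minimax inequality of \pref{lem:logdet bound}, bound the exploration term using $\gamma_{t,m}^{-1}\leq\veps'_m/\sqrt d+\sqrt{\rho_{T,m}\RegSquare/(dT)}$, and control the quadratic term via the weighted-oracle guarantee plus the same AM--GM split $2ab\leq 2a^2+\tfrac12 b^2$ to separate oracle regret from misspecification; both arguments land on the identical intermediate bound $\E[\gamma_T^{-1}]dT+\tfrac12\E[\max_t w_t]\RegSquare+\E[\max_t\gamma_t]\veps_T^2T+2\veps_TT$ before substituting the learning rate. One small bookkeeping slip: your ``convention that the $t$-th summand is multiplied by $Z_{t,m}$'' should read $Z_{t,m}/q_{t,m}$---that is what makes $\tfrac{\gamma_{t,m}}{4}\cdot\tfrac{Z_{t,m}}{q_{t,m}}=\tfrac14 Z_{t,m}w_t$ and hence justifies your passage to $\tfrac14\E[\sum_{t:Z_{t,m}=1}w_t\nrm{\theta^\star_t-\hat\theta_t}^2_{H_{p_t}}]$; with only a bare $Z_{t,m}$ factor you would get $\gamma_{t,m}$, not $w_t$, in that sum. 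Since $q_{t,m},\rho_{t,m},\gamma_{t,m},\hat\theta_t,p_t$ are all measurable before $Z_{t,m}$ is drawn, the factor $Z_{t,m}/q_{t,m}$ can be freely introduced or removed in expectation, so this does not affect any of your numerical bounds.
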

The key feature of this regret bound is that only the leading term
involving $\RegSquare$ depends on the importance weights, not the
second misspecification term. This means that the optimal tuning for
the master algorithm will depend on $d$, $T$, and
$\RegSquare$, but not on $\veps_T$, which is critical to adapt without
prior knowledge of the misspecification. Another important feature is
that as long as $\veps'_m$ is within a
constant factor of $\veps_T$, the second term simplifies to
$\bigoh(\veps_T\sqrt{d}T)$ as desired.

\subsubsection{Improved Master Algorithms for Combining Bandit Algorithms}
It remains to provide a master algorithm for use within
\pref{alg:master}. While it turns out the master algorithm proposed in
\citet{ALNS17} suffices for this task, we go a step further and
propose a new master algorithm called \emph{$(\alpha,R)$--hedged
  FTRL}, which is simpler and enjoys slightly
improved regret, removing logarithmic factors. While this is not the
focus of the paper, we find it to be a useful secondary contribution because it 
provides a new approach to designing master algorithms for
bandit aggregation. We hope it will find use more broadly. 

The $(\alpha,R)$--hedged FTRL algorithm is parameterized by a regularizer and two scale parameters $\alpha\in(0,1)$ and $R>0$. We defer a precise definition and analysis to
\ifsup \cref{sec:hedged ftrl}\else supplementary material\fi, and state only the relevant result for our
aggregation setup here. 
We consider a special case of the $(\alpha,R)$--hedged FTRL algorithm that we call \emph{$(\alpha,R)$--hedged Tsallis-INF}, which
instantiates the framework using the Tsallis entropy as a regularizer
\citep{AB09,ALT15,ZiSe18}.  The key property of the algorithm is that
the regret with respect to a policy playing a fixed arm $m$ contains a
negative contribution proportional to $\rho_{T,m}^\alpha R$. The following result is a corollary of a
more general theorem, \ifsup \pref{thm:hedged ftrl} (\pref{sec:hedged ftrl})\else found in the supplementary material\fi.
\begin{corollary}
  \label{cor:Tsallis-INF}
  Consider the adversarial multi-armed bandit problem with $M$ arms
  and losses $\tilde\ell_{t,m} \in [0,2]$. For any $\alpha\in(0,1)$ and
  $R>0$, the $(\alpha,R)$--hedged Tsallis-INF algorithm with learning rate
  $\eta=\sqrt{1/(2T)}$ guarantees that for all $\mstar\in\brk*{M}$,
\begin{align}
    \textstyle\E\left[\sum_{t=1}^T\tilde\ell_{t,A_t}-\tilde\ell_{t,\mstar}\right] \leq 4\sqrt{2MT} + \E\left[\min\set{\frac{1}{1-\alpha},2\log(\rho_{T,\mstar })}M^\alpha -\rho_{T,\mstar}^\alpha \right]\cdot{}R\,.\label{eq:tsallis_regret}
\end{align}
\end{corollary}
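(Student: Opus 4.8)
The plan is to derive \pref{cor:Tsallis-INF} as an instantiation of the general $(\alpha,R)$--hedged FTRL guarantee (\pref{thm:hedged ftrl}, established in \pref{sec:hedged ftrl}), taking the base regularizer to be the $\tfrac{1}{2}$--Tsallis entropy $\psi_\eta(q) = -\tfrac{2}{\eta}\sum_{m=1}^{M}\sqrt{q_m}$ on $\Delta(\brk{M})$, with the master internally forming the importance-weighted estimator $\hat\ell_{t,m} = \tfrac{\indic\crl{A_t = m}}{q_{t,m}}\,\tilde\ell_{t,m}$ of the shifted losses $\tilde\ell_{t,m}\in[0,2]$. The general theorem decomposes the master regret $\E[\sum_{t}\tilde\ell_{t,A_t} - \tilde\ell_{t,\mstar}]$ into (a) the regret that plain FTRL with regularizer $\psi_\eta$ would incur on the bandit loss estimates, and (b) a hedging remainder that, by construction, contains the negative contribution $-\rho_{T,\mstar}^{\alpha}R$. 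So the two things left to verify are that term (a) is at most $4\sqrt{2MT}$ and that the positive part of term (b) is at most $\min\crl{\tfrac{1}{1-\alpha},\,2\log\rho_{T,\mstar}}\,M^{\alpha}R$.

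Part (a) is the standard Tsallis--INF calculation. The regularization penalty obeys $\psi_\eta(\mate_{\mstar}) - \min_{q\in\Delta(\brk{M})}\psi_\eta(q) \le \tfrac{2}{\eta}(\sqrt{M} - 1) \le \tfrac{2\sqrt{M}}{\eta}$. The stability term is bounded in the local norm induced by the Hessian $\nabla^2\psi_\eta(q) = \tfrac{1}{2\eta}\diag(q_1^{-3/2},\dots,q_M^{-3/2})$, contributing per round at most $\eta\sum_m q_{t,m}^{3/2}\hat\ell_{t,m}^{2}$; taking expectations and using $\tilde\ell_{t,m}\le 2$ gives $\E[\sum_m q_{t,m}^{3/2}\hat\ell_{t,m}^{2}] \le 4\,\E[\sum_m \sqrt{q_{t,m}}] \le 4\sqrt{M}$. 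Summing over $t\in\brk{T}$ and substituting $\eta = \sqrt{1/(2T)}$, the penalty and the accumulated stability term each equal $2\sqrt{2MT}$, for a total of $4\sqrt{2MT}$; the loss range $[0,2]$ affects only this constant. It is worth recording that non-negativity of $\tilde\ell_{t,m}$—the reason for the $+1$ shift in \pref{alg:master}—is what keeps the FTRL ``be-the-leader'' telescoping valid in the presence of the time-varying, arm-dependent penalty added by the hedging modification.

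Part (b) handles the hedging remainder. The $(\alpha,R)$--hedged construction augments the round-$t$ FTRL objective with a penalty depending on the running minima $\min_{s\le t}q_{s,m}$, and \pref{thm:hedged ftrl} is arranged so that this penalty telescopes into $-\rho_{T,\mstar}^{\alpha}R$ plus a nonnegative cumulative cost. Bounding that cost reduces to a Riemann-sum/integral estimate for a telescoping sum driven by the increments of the non-decreasing sequence $\rho_{\cdot,\mstar}$ (which starts at $\rho_{1,\mstar} = M$, since each arm has initial mass $1/M$): comparing it to $\int_{1}^{\infty} x^{\alpha-2}\,dx = \tfrac{1}{1-\alpha}$ gives the first bound for any $\alpha\in(0,1)$, while using $x^{\alpha-2}\le x^{-1}$ for $x\ge 1$ together with $\int_{1}^{\rho_{T,\mstar}} x^{-1}\,dx = \log\rho_{T,\mstar}$ gives the second (which is sharper when $\alpha$ is close to $1$), each after rescaling by $M^{\alpha}R$; taking the minimum produces the bracketed term. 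Substituting (a) and (b) into the general decomposition yields \eqref{eq:tsallis_regret}.

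The main obstacle is the interaction, inside part (a), between the hedging penalty and the stability analysis: the arm-dependent regularization introduced by hedging must be strong enough to synthesize the negative $\rho_{T,\mstar}^{\alpha}R$ term yet weak enough not to degrade the local-norm bound below the clean $\bigoh(\sqrt{MT})$ rate, and its Bregman divergences—time-varying and non-smooth at the simplex boundary—must still telescope correctly; this is precisely the content that \pref{thm:hedged ftrl} is designed to encapsulate, so once that theorem is in hand, the remaining work here (the $[0,2]$ rescaling, choosing $\eta=\sqrt{1/(2T)}$ to balance penalty against stability, and converting the $\rho$-increment sum into the stated $\min\crl{\tfrac{1}{1-\alpha},\,2\log\rho_{T,\mstar}}$ form) is routine bookkeeping.
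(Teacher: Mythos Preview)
Your high-level plan---instantiate \pref{thm:hedged ftrl} with the $\tfrac12$-Tsallis regularizer, read off $4\sqrt{2MT}$ from the stability and diameter bounds, then simplify the hedging remainder---is exactly the paper's approach, and your treatment of part (a) is correct.

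The gap is in part (b). You describe the nonnegative cost as ``a telescoping sum driven by the increments of $\rho_{\cdot,\mstar}$,'' but the bound that \pref{thm:hedged ftrl} actually delivers is the closed form
\[
\frac{\alpha}{1-\alpha}\sum_{i=1}^{M}\bigl(\rho_{1,i}^{\alpha-1}-\rho_{T,i}^{\alpha-1}\bigr)\;+\;\rho_{1,\mstar}^{\alpha},
\]
which sums over \emph{all} $M$ arms, not only $\mstar$. The Riemann-sum/integral estimate you describe lives inside the proof of \pref{thm:hedged ftrl}, not after it; once that theorem is applied, what remains is algebra on this expression. The paper substitutes $\rho_{1,i}=M$ (symmetry of the Tsallis potential gives $p_{1,i}=1/M$), factors each summand as $M^{\alpha-1}\cdot\frac{\alpha}{1-\alpha}\bigl(1-(\rho_{T,i}/M)^{\alpha-1}\bigr)$, and then (i) bounds each bracket by $1$ and sums to get $\frac{\alpha}{1-\alpha}M^{\alpha}$, which after adding $\rho_{1,\mstar}^{\alpha}=M^{\alpha}$ yields the $\frac{1}{1-\alpha}M^{\alpha}$ branch; (ii) uses that $\alpha\mapsto\frac{\alpha}{1-\alpha}(1-z^{\alpha-1})$ is increasing with limit $\log z$ to obtain the logarithmic branch. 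Your bound $x^{\alpha-2}\le x^{-1}$ is equivalent to (ii) \emph{after} the $M^{\alpha-1}$ factor has been pulled out and the variable rescaled by $M$; as you wrote it---integrating a single-arm quantity from $1$ and then ``rescaling by $M^{\alpha}$''---it does not explain where $M^{\alpha}$ comes from, and the direct bound $\sum_i\int_M^{\rho_{T,i}}x^{-1}\,dx=\sum_i\log(\rho_{T,i}/M)$ carries no $M^{\alpha}$ at all. The fix is to keep the per-arm factor $M^{\alpha-1}$ before applying $x^{\alpha-2}\le x^{-1}$, then sum over the $M$ arms.
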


\subsection{Putting Everything Together}
When invoked within \pref{alg:master}, $(\alpha,R)$-hedged Tsallis-INF
has a negative contribution to the cumulative regret which, for sufficiently large
$R$ and appropriate $\alpha$, can be used to offset the regret
incurred from importance-weighting the base algorithms. In particular,
$\prn*{\frac{1}{2},\frac{3}{2}\sqrt{dT\RegSquare}}$--hedged Tsallis-INF has
exactly the negative regret contribution needed to cancel the
importance weighting term in \pref{eq:impreg} if we use \impSquareCB
as the base algorithm. In more detail, we prove \pref{thm:main} by
combining the regret bounds for the master and base algorithms as follows.
\begin{proof}[Proof sketch for \cref{thm:main}]
  Using \pref{eq:corral_decomp}, it suffices to bound the regret of
  the bandit master $\Reg[M]$ and the importance-weighted regret
  $\RegImp[m^{\star}]$ for the optimal instance $m^*$.
By \cref{cor:Tsallis-INF}, using 
$\prn*{\frac{1}{2},\frac{3}{2}\sqrt{dT\RegSquare}}$--hedged Tsallis-INF as
the master algorithm gives
\begin{align*}
    \textstyle\Reg[M] \leq \cO\prn*{\sqrt{dT\RegSquare\log(T)}} - \frac{3}{2}\E[\sqrt{\rho_{T,m^*}}]\sqrt{dT\RegSquare}.
\end{align*}
Whenever the misspecification level is not
trivially small, the geometric grid ensures that there exists
$\mstar\in\brk{M}$ such that
$e^{-1}\veps_T\leq \veps'_{m^*}\leq \veps_T$. For this instance, \cref{thm:impreg} yields
\begin{align*}
    \textstyle\RegImp[m^*] \leq
  \frac{3}{2}\E[\sqrt{\rho_{T,m^*}}]\sqrt{dT\RegSquare} + \cO(\veps_T\sqrt{d}T).
\end{align*}
Summing the two bounds using \pref{eq:corral_decomp} completes the proof.
\end{proof}

\subsection{Extension: Adapting to the Average Dimension}
\label{sec:sparse}
A well-known application for linear contextual bandits is the problem of
online news article recommendation, where the context $x_t$ is taken
to be a feature vector containing information about the user, and each
action $a\in\cA_t$ is the concatenation of $x_t$ with a feature representation
for a candidate article (e.g., \citet{li2010contextual}). In this
and other similar applications, it is often the case that while
examples lie in a high-dimensional space, the true dimensionality $\dim(\cA_t)$ of the action set is small, so that $\dbar\ldef\frac{1}{T}\smash{\sum_{t=1}^T}\dim(\cA_t)\ll d$.
If we have prior knowledge of $\dbar$ (or an upper bound thereof), we can
exploit this low dimensionality for tighter regret. In fact, following
the proof of \cref{thm:impreg} and \cref{thm:main}, and bounding
$\smash{\sum_{t=1}^T}\dim(A_t)$ by $\dbar{}T$ instead of $dT$, it is fairly
immediate to show that
\cref{alg:master} enjoys improved regret $\Reg = \cO (\sqrt{\dbar{}
    \smash{T\RegSquare\log(T)}}+\veps_T\sqrt{\dbar{}}T)$, so long as $\dbar$ is replaced by
$d$ in the algorithm's various parameter settings.
Our final result shows that it is possible to adapt to unknown $\dbar{}$ and
unknown misspecification simultaneously. The key idea to apply a
doubling trick on top of \pref{alg:master}
\begin{theorem}
\label{thm:sparse}
There exists an algorithm that, under the same conditions as
\cref{thm:main}, satisfies 
$\Reg = \cO\left(\sqrt{\dbar T\RegSquare\log(T)}+\veps_T\sqrt{\dbar}T\right)$ without prior knowledge of $\dbar$ or $\veps_T$.
\end{theorem}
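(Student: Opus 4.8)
The plan is to run the misspecification‑adaptive algorithm of \pref{thm:main} inside an outer loop that geometrically doubles a guess for the total action‑set dimension $\sum_{t=1}^{T}\dim(\cA_t)=\dbar T$. First discard all rounds with $|\cA_t|=1$ (they contribute nothing to the regret), so that $\dim(\cA_t)\ge 1$ on every active round. Partition the active rounds into consecutive epochs $E_1,E_2,\dots$, where epoch $E_j$ is allotted a \emph{dimension budget} $B_j=2^{j-1}$: during $E_j$ we run a fresh instance of \pref{alg:master}, and we close the epoch (double to $B_{j+1}$ and restart) as soon as $\sum_{t\in E_j}\dim(\cA_t)$ reaches $B_j$. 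Inside $E_j$ every tunable parameter is obtained from the parameters of \pref{thm:main} by substituting a running over‑estimate of the epoch's cumulative dimension --- which always lies between $B_j$ and $2B_j$ --- for the quantity ``$dT$'': so the learning rate of \pref{alg:imp logdet} becomes $\gamma_{t,m}=\min\{\sqrt{\dim(\cA_t)}/\veps'_m,\ \sqrt{B_j/(\rho_{t,m}\RegSquare)}\}$, the hedged Tsallis‑INF scale becomes $R=\tfrac32\sqrt{B_j\RegSquare}$, and the master learning rate is tuned to horizon $B_j$; we keep the grid $\veps'_m=e^{-m}$ with $M=\lfloor\log T\rfloor$ instances so that it covers every realizable $\veps_{E_j}$. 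Two elementary facts make this well defined even though $|E_j|$ is not known in advance: since each active round adds at least $1$ to the budget we have $|E_j|\le B_j$, and since a \emph{closed} epoch spends exactly $\Theta(B_j)$ units of cumulative dimension, the budgets used over all closed epochs sum to $\Theta(\dbar T)$, whence $B_J=\cO(\dbar T)$ and $J=\cO(\log(dT))$.

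With this setup, rerunning the proof of \pref{thm:main} on a single epoch $E_j$ --- bounding $\sum_{t\in E_j}\dim(\cA_t)$ in \pref{eq:key_inequality} by $2B_j$ instead of $dT$ exactly as in the known‑$\dbar$ variant sketched before \pref{thm:sparse}, and using that $\RegSquare$ is non‑decreasing so the oracle's regret over the epoch is at most $\RegSquare$ --- gives an epoch regret of $\cO\!\bigl(\sqrt{B_j\RegSquare\log T}+\veps_{E_j}\sqrt{B_j\,|E_j|}\bigr)$, where $\veps_{E_j}$ is the misspecification restricted to the rounds of $E_j$. It then remains to sum over epochs using the one‑level‑up identity \pref{eq:corral_decomp}. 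For the first term, geometric growth of the budgets gives $\sum_{j\le J}\sqrt{B_j}=\cO(\sqrt{B_J})=\cO(\sqrt{\dbar T})$, hence $\sum_j\sqrt{B_j\RegSquare\log T}=\cO(\sqrt{\dbar T\RegSquare\log T})$. For the second term, let $f^{\star}$ attain the infimum in \pref{eq:eps definition}; since the epochs partition the active rounds, $\sum_j\veps_{E_j}^2|E_j|\le\sum_{t=1}^{T}\sup_{a\in\cA_t}(\ip{a,f^{\star}(x_t)}-\mu(a,x_t))^2=\veps_T^2T$, so Cauchy--Schwarz over $j$ together with $\sum_j B_j=\cO(B_J)=\cO(\dbar T)$ yields $\sum_j\veps_{E_j}\sqrt{B_j|E_j|}=\sum_j\sqrt{B_j}\,(\veps_{E_j}^2|E_j|)^{1/2}\le(\sum_j B_j)^{1/2}(\sum_j\veps_{E_j}^2|E_j|)^{1/2}=\cO(\veps_T\sqrt{\dbar}\,T)$. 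Adding the two bounds gives the claimed regret, with no prior knowledge of $\dbar$ or $\veps_T$.

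I expect the main obstacle to be the per‑epoch re‑derivation that forces $\dbar$, rather than $d$, into the misspecification term. Substituting $B_j$ for ``$dT$'' in the second branch of $\gamma_{t,m}$ is immediate and only costs a constant factor, but the first branch must use the per‑round dimension $\dim(\cA_t)$ in place of $d$; the delicate point is to re‑bound the term $\tfrac{\gamma_{t,m}}{4}\sum_{t}\|\theta^{\star}_t-\thetahat_t\|^2_{H_{p_t}}$ in the proof of \pref{thm:impreg} now that $\gamma_{t,m}$ varies with $t$. Writing the square‑loss/misspecification contribution as $\sum_t u_t$ with $0\le u_t\le 4$ and $\sum_{t\in E_j}u_t\le\veps_{E_j}^2|E_j|$, one applies Cauchy--Schwarz as $\sum_t\sqrt{\dim(\cA_t)}\,u_t\le(\sum_t\dim(\cA_t))^{1/2}(\sum_t u_t^2)^{1/2}\le(2B_j)^{1/2}(4\veps_{E_j}^2|E_j|)^{1/2}$, which produces a $\sqrt{B_j}$ rather than a $\sqrt{d}$, matching the contribution of the first term $\gamma_{t,m}^{-1}\dim(\cA_t)$ after summation. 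The remaining steps are routine bookkeeping: checking that the running over‑estimate of epoch‑cumulative dimension is always within a factor $2$ of $B_j$ so the overshoot of at most $d$ in the closing round is harmless, verifying that the $M=\lfloor\log T\rfloor$ grid resolves every $\veps_{E_j}$ up to a negligible $\cO(\sqrt d)$ per epoch (and $\cO(\log(dT))$ epochs in total), and collecting the lower‑order master‑regret contributions $\sum_j\cO(\sqrt{MB_j})=\cO(\sqrt{\dbar T\log T})$, which are dominated by the first term since $\RegSquare\ge 1$.
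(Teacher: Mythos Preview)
Your proposal and the paper share the same top-level idea—restart the \pref{thm:main} algorithm inside a doubling loop on the cumulative action-set dimension—but diverge in the per-epoch analysis. The paper avoids essentially all of the work you flag as the ``main obstacle'' via a short padding lemma (\pref{prop:anytime main thm}): extend each epoch's sub-sequence with trivial singleton-action rounds to full length $T$, so that the known-$\dbar$ variant of \pref{thm:main} applies verbatim with $\dbar\approx B_j/T$, yielding per-epoch regret $\cO(\sqrt{B_j\RegSquare\log T}+\veps_T(S)\sqrt{B_jT})$ directly in terms of the \emph{global} $\veps_T(S)$; summing is then a bare geometric series. This sidesteps your per-round $\sqrt{\dim(\cA_t)}$ branch in $\gamma_{t,m}$, the re-derivation of \pref{thm:impreg}, the local $\veps_{E_j}$, and the Cauchy--Schwarz aggregation. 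Your route also works, with two small caveats: the step $\sum_t u_t^2\le 4\sum_t u_t$ needs $u_t\le 4$, i.e., a uniform bound on $|\ip{a,f^{\star}(x_t)}|$ that the paper never states explicitly; and your appeal to \pref{eq:corral_decomp} for summing \emph{across} epochs is a slip—that identity is the within-epoch master/base split, while across epochs the regrets simply add.
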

We remark that while the bound in \pref{thm:sparse} replaces
the $d$ factor in the reduction with the data-dependent quantity $\dbar$,
the oracle's regret $\RegSquare$ may itself still depend on $d$ unless
a sufficiently sophisticated algorithm is used.

\section{Discussion}

We have given the first general-purpose, oracle-efficient
algorithms that adapt to unknown model
misspecification in contextual bandits. For infinite-action linear contextual bandits, our
results yield the first optimal algorithms that adapt to unknown
misspecification with changing action sets. Our results suggest a
number of interesting questions:
\begin{itemize}
\item Can our optimization-based perspective lead to new oracle-based
  algorithms for more rich types of infinite action sets? Examples
  include nonparametric action sets and structured (e.g., sparse) linear action sets.
\item Can our reduction-based techniques be lifted to more
  sophisticated interactive learning settings such as reinforcement learning?
\end{itemize}
On the technical side, we anticipate that our new approach to reductions will find
broader use; natural extensions include reductions for offline oracles
\citep{SX20} and adapting to low-noise conditions
\citep{foster2020instance}.

\neurips{\clearpage}

\subsection*{Acknowledgements}
DF acknowledges the support of NSF TRIPODS grant \#1740751. We thank
Teodor Marinov and Alexander Rakhlin for
discussions on related topics.

\newpage
\neurips{\bibliographystyle{plainnat}}
\bibliography{all}

\newpage

\ifsup

\appendix

\neurips{
  \section{Additional Related Work}\label{sec:related_work}
Our work builds on and provides a new perspective on the online square loss oracle reduction of \citet{FR20}. 
The infinite-action setting we consider was introduced in \citet{FR20}, but algorithms were only given for the special case where the action set is the sphere; our work extends this to arbitrary action sets. Concurrent work of \citet{xu2020upper} gives a reduction to offline oracles for infinite action sets. This result is not strictly comparable: On one hand, an online oracle can always be converted to an offline oracle through online-to-batch conversion, but when an online oracle \emph{is} available our algorithm is significantly more efficient.

Misspecification in contextual bandits can be formalized in different ways that go beyond the setting we consider. First, we mention a long line of work which reduces stochastic contextual bandits to oracles for cost-sensitive classification \citep{langford_epoch-greedy_2008,DHK11,A+12,AHK14}. These results are agnostic, meaning they make no assumption on the model. However, in the presence of misspecification, the type of guarantee is somewhat different than what we provide here: rather than giving a bound on regret to the true optimal policy, these results give bounds on the regret to the best-in-class policy.

Another line of works consider a model in which the feedback received
by the learning algorithm at each round may be arbitrarily corrupted
by an adaptive adversary \cite{lmp18,gkt19,BKS20}. Typical results for
this setting pick up additive error $\bigoh(C)$, where $C$ is the
total number of corrupted rounds. While this model was original
introduced for non-contextual stochastic bandits, it has recently been
extended to Gaussian process bandit optimization, which is closely
related to the contextual bandit setting (though these results only
tolerate $C\leq\sqrt{T}$). While this is not the focus of our paper,
we mention in passing that our notion of misspecification satisfies
$\veps_T(S)\leq{}\sqrt{C/T}$, and hence our main theorem
(\pref{thm:main}) picks up additive error $\sqrt{CT}$ for this
corrupted setting (albeit, only with an oblivious adversary).
}

\section{Reducing Weighted to Unweighted Regression}
\label{app:oracles}
\arxiv{\begin{algorithm}[htpb]}
\neurips{\begin{algorithm}[t]}
\caption{Randomized reduction from weighted to unweighted online regression}
\label{alg:oracle}
\DontPrintSemicolon
\LinesNumberedHidden
\KwIn{Online regression oracle $\alg$ satisfying \pref{ass:regression}.}
\KwInit{$w_{\max} \leftarrow 0$} \;
\For{$t= 1,\dots,T$}{
Receive weight $w_t$ and $x_t$.\;
\If{$w_t > w_{\max}$}{
    Reset $\alg$.\;
    $w_{\max}\leftarrow 2w_t$.
}
Predict $\hat y_t$, where $\hat{y}_t$ is the prediction from $\alg$
given $x_t$.\;
Observe $a_t$ and $\ell_t$.\;
\If{ $u_t\sim \ber(w_t/w_{\max})=1$ }{
    Update $\alg$ with $(x_t, a_t,\ell_t)$.
}
}
\end{algorithm}
In this section we show how to transform any unweighted online
regression oracle $\alg$ satisfying \pref{ass:regression} into a
weighted regression oracle satisfying \pref{ass:weighted
  regression}. The reduction is given in \pref{alg:oracle}, and the performance guarantee is as follows.
\begin{theorem}
If the oracle $\alg$ satisfies \cref{ass:regression} with regret bound
$\RegSquare$, \cref{alg:oracle} satisfies \cref{ass:weighted
  regression} with the same regret bound.
\end{theorem}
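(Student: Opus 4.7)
The plan is to partition $[T]$ into epochs induced by the resets of $w_{\max}$, apply \cref{ass:regression} to the subsampled sequence within each epoch, and lift the resulting per-epoch bounds to a weighted guarantee via the unbiasedness of the Bernoulli sampling.

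Let $E_1,\ldots,E_K$ denote the epochs and let $W_i$ be the value of $w_{\max}$ on epoch $E_i$. Because a reset fires when $w_t > w_{\max}$ and then sets $w_{\max} \leftarrow 2 w_t$, we have $W_{i+1} > 2 W_i$, so a geometric-series bound yields $\sum_i W_i \leq 2 W_K \leq 4 \max_t w_t$. Within a fixed epoch $E$ the oracle is freshly reset and is only updated at rounds in the random subset $\{t \in E : u_t = 1\}$, so \cref{ass:regression} applied to this subsampled sequence gives the pointwise inequality
\[\sum_{t \in E,\, u_t = 1}\bigl((\ip{a_t,\hat y_t}-\ell_t)^2 - (\ip{a_t, f(x_t)} - \ell_t)^2\bigr) \leq \RegSquare \quad\text{for every }f\in\cF.\]
Multiplying by $W_E$, summing over epochs, and taking the infimum on the right yields the pointwise bound
\[\textstyle\sum_{t=1}^T W_{E(t)}\, u_t\,(\ip{a_t,\hat y_t}-\ell_t)^2 - \inf_{f\in\cF}\sum_{t=1}^T W_{E(t)}\, u_t\,(\ip{a_t,f(x_t)}-\ell_t)^2 \leq 4\max_t w_t \cdot \RegSquare.\]

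It remains to take expectations and convert the subsampled sums back to their full weighted counterparts. Let $\gfilt_t$ be the filtration just before $u_t$ is drawn; then $w_t, W_{E(t)}, x_t, \hat y_t, a_t, \ell_t$ are all $\gfilt_t$-measurable and $u_t \mid \gfilt_t \sim \ber(w_t/W_{E(t)})$, so the tower property gives $\E[W_{E(t)}\, u_t X_t] = \E[w_t X_t]$ for any $\gfilt_t$-measurable $X_t$. Applying this with $X_t = (\ip{a_t,\hat y_t}-\ell_t)^2$ handles the first term directly. For the infimum term I would condition on the adversary's realized trajectory $\sigma = (w_t,x_t,a_t,\ell_t)_t$ and apply Jensen, giving $\E[\inf_f \sum_t W_{E(t)}\, u_t\,(\ip{a_t,f(x_t)}-\ell_t)^2 \mid \sigma] \leq \inf_f \sum_t w_t\,(\ip{a_t,f(x_t)}-\ell_t)^2$, where the right-hand identity uses that for each fixed $f$ the summand is $\sigma$-measurable and $u_t$ retains its conditional Bernoulli law. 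Integrating over $\sigma$ yields \cref{ass:weighted regression} with the same $\RegSquare$ up to an absorbable constant factor of $4$.

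The main obstacle is the Jensen/conditioning step: the minimizer $\tilde f := \argmin_{f\in\cF} \sum_t w_t\,(\ip{a_t,f(x_t)}-\ell_t)^2$ is random and may depend on the whole trajectory through the caller's use of the oracle's predictions $\hat y_s$. For oblivious callers the conditioning on $\sigma$ is immediate; for adaptive callers the same conclusion can be obtained by drawing the coins $u_t$ in advance of the protocol, which preserves the conditional law $u_t\mid\gfilt_t\sim\ber(w_t/W_{E(t)})$ while decoupling $u_t$ from any future trajectory-dependent randomness used to define $\tilde f$.
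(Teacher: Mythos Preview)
Your core decomposition---epoch partition by resets, per-epoch application of \cref{ass:regression} to the subsampled sequence, Bernoulli unbiasedness via the tower rule, and a geometric sum $\sum_i W_i\le 4\max_t w_t$---is exactly the paper's argument. The one place you diverge is in how the comparator $f$ is handled: the paper simply fixes an arbitrary $f\in\cF$ at the outset and carries the \emph{same} $f$ through every conditional expectation, so that $(\ip{a_t,f(x_t)}-\ell_t)^2$ is $\gfilt_t$-measurable and the unbiasedness step $\E[W_{E(t)}u_tX_t]=\E[w_tX_t]$ applies to both the prediction term and the comparator term symmetrically. This sidesteps the Jensen/conditioning issue altogether and yields the bound for each fixed $f$.

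Your attempt to keep the $\inf_f$ inside the expectation by conditioning on $\sigma=(w_t,x_t,a_t,\ell_t)_{t}$ does not go through in the adaptive setting. The future data $(a_s,\ell_s)_{s>t}$ can depend on $\hat y_s$, which depends on $u_{1:s-1}$, so conditioning on the entire $\sigma$ leaks information about $u_t$ and destroys the conditional Bernoulli law you need. Pre-drawing uniform seeds $U_t$ does not fix this: $u_t=\indic\{U_t\le w_t/W_{E(t)}\}$ still feeds into $\hat y_{t+1}$ and hence into future components of $\sigma$, so $U_t$ and $\sigma$ remain dependent. The paper's proof does not attempt this step; it establishes only the fixed-$f$ bound $\sup_{f\in\cF}\E[\cdot]\le 4\,\E[\max_t w_t]\,\RegSquare$, which is all that is invoked downstream (in the proof of \cref{thm:impreg} the comparator $f^\star$ is determined by the oblivious sequence $S$ and is therefore fixed before the protocol begins). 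Your argument already delivers that much once you drop the Jensen step and fix $f$ as the paper does.
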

\begin{proof}
Let $D_t = (w_t, x_t, a_t, \ell_t)$ and define a filtration
$\gfilt_t = \sigma\prn{D_{1:t}}$, with the convention
$\En_t\brk*{\cdot}=\En\brk*{\cdot\mid\gfilt_t}$. Let $\tau_1,\tau_2\dots, \tau_I$ denote the timesteps at which the
algorithm doubles $w_{\max}$ and resets $\alg$, with the convention that
for all $n>I$, $\tau_{n}=T+1$. Note that these random
variables are stopping times with respect to the filtration
$\gfilt_{1:T}$, and hence $\gfilt_{\tau_i}$ is well-defined for each
$i\in\bbN$. It will be helpful to note that we have
$\tau_{i+1}>\tau_i$ for all $i\leq I$ by construction, and otherwise $\tau_{i+1}=\tau_{i}$. We also observe that $\tau_1=1$ unless $w_1=0$.

For the first step, we show that the conditional regret of
\pref{alg:oracle} between any pair of doubling steps is bounded. Let
$i\leq I$ and $f\in\cF$ be fixed, and observe that $i\leq I$ holds iff $\tau_i\leq T$, which is $\gfilt_{\tau_i}$-measurable. Hence,
\begin{align*}
    &\E\brk*{\sum_{t=\tau_i}^{\tau_{i+1}-1}w_t\prn*{(\ip{a_t,\hat y_t}-\ell_t)^2- (\ip{a_t,f(x_t)}-\ell_t)^2}  \,\mid\, \gfilt_{\tau_i} } \\
&=\E\brk*{2w_{\tau_i}\sum_{t=\tau_i}^{\tau_{i+1}-1}\frac{w_t}{2w_{\tau_i}}\prn*{(\ip{a_t,\hat y_t}-\ell_t)^2- (\ip{a_t,f(x_t)}-\ell_t)^2}  \,\mid\, \gfilt_{\tau_i} }\\
    &\topnote{=}{follows from the conditional independence of $u_t$,
    }\E\brk*{2w_{\tau_i}\sum_{t=\tau_i}^{\tau_{i+1}-1}\E_t\brk*{u_t\prn*{(\ip{a_t,\hat y_t}-\ell_t)^2- (\ip{a_t,f(x_t)}-\ell_t)^2}}  \,\mid\, \gfilt_{\tau_i} }\\
    &\topnote{=}{is by the tower rule of expectation, and
    }\E\brk*{2w_{\tau_i}\sum_{t=\tau_i}^{\tau_{i+1}-1}u_t\prn*{(\ip{a_t,\hat y_t}-\ell_t)^2- (\ip{a_t,f(x_t)}-\ell_t)^2}  \,\mid\, \gfilt_{\tau_i} }\\
    &\topnote{\leq}{uses \cref{ass:regression} on the set $\{t\in\{\tau_i,\dots\tau_{i+1}-1\}\,\mid\, u_t=1\}$ (in particular, that regret is bounded by $\RegSquare$ on every sequence with probability $1$ and $\RegSquare$ is non-decreasing in $T$).
    }
    \E\brk*{2w_{\tau_i}  \,\mid\, \gfilt_{\tau_i} }\cdot\RegSquare\,,
\end{align*}
where \writeeqnotes
For $i>I$, the term is $0$ since the sum is empty.
To complete the proof that \cref{alg:oracle} satisfies \cref{ass:weighted
  regression}, we sum the bound above across all epochs as follows:

\begin{align*}
    &\E\brk*{\sum_{t=1}^{T}w_t\prn*{(\ip{a_t,\hat y_t}-\ell_t)^2- (\ip{a_t,f(x_t)}-\ell_t)^2}}\\
    &\topnote{=}{%
    uses that all $t<\tau_1$ have $w_t=0$, 
    }\E\brk*{\sum_{i=1}^\infty\sum_{t=\tau_i}^{\tau_{i+1}-1}w_t\prn*{(\ip{a_t,\hat y_t}-\ell_t)^2- (\ip{a_t,f(x_t)}-\ell_t)^2}}\\
    &\topnote{=}{%
    uses the tower rule of expectation,
    }\E\brk*{\sum_{i=1}^\infty\E\brk*{\sum_{t=\tau_i}^{\tau_{i+1}-1}w_t\prn*{(\ip{a_t,\hat y_t}-\ell_t)^2- (\ip{a_t,f(x_t)}-\ell_t)^2}\,\mid\,\gfilt_{\tau_i}}}\\
    &\topnote{\leq}{%
    applies the conditional bound between stopping times above,
    }\E\brk*{\sum_{i=1}^I\E\brk{2w_{\tau_i}\,\mid\,\gfilt_{\tau_i}}}\RegSquare\\
    &\topnote{=}{uses the tower rule of expectation again, }2\E\brk*{\sum_{i=0}^Iw_{\tau_i}}\RegSquare\\
    &\topnote{\leq}{%
    holds because the weights at least double between doubling steps, and
    } 2\E\brk*{2w_{\tau_I}}\RegSquare
    \topnote{\leq}{%
    follows because $\tau_I$ is a random variable with support over $[T]$.
    }
    4\E\brk*{\max_{t\in[T]}w_{t}}\RegSquare\,,
\end{align*}
where \writeeqnotes
\end{proof}
\resetnotes
\section{Proofs from Section \ref*{sec:main}}
In this section we provide complete proofs for all of the algorithmic
results from \cref{sec:main}.
\subsection{Proofs from Section \ref*{sec:k-armed}}

\begin{proof}[Proof of \cref{lem:logbar opt}]
We begin by showing that the $\logbarrier(\thetahat,\gamma)$ distribution takes the form
claimed in \pref{eq:logbarrier_form}.
The minimization problem of \cref{lem:logbar opt} is strictly convex and the value approaches $\infty$ at the boundary. Hence the unique solution lies in the interior of the domain.
By the K.K.T. conditions, the partial derivatives for each coordinate must coincide for the minimizer $\pstar$. That is, there exists $\tilde \lambda \in \R$ such that
\begin{align*}
    \forall a\in\brk{K}:\,\frac{\partial}{\partial p_a}\left(\ip{\pstar,\thetahat}-\frac{1}{\gamma}\sum_{a\in\brk{K}}\log(\pstar_a)\right)=\thetahat_a -\frac{1}{\gamma \pstar_a} = \tilde\lambda\,.
\end{align*}
Substituting $\tilde\lambda = \min_{a\in\brk{K}}\thetahat_a-\lambda/\gamma$ and rearranging finishes the proof.

We next show that the $\logbarrier(\thetahat,\gamma)$ distribution indeed solves the minimax problem \pref{eq:minimax}, which we rewrite as
\begin{align}
&\min_{p\in\Delta(\brk{K})}\sup_{\theta\in\mathbb{R}^K}\max_{i^*\in\brk{K}}\ip{\bar a_p-\mathbf{e}_{i^*},\theta}-\frac{\gamma}{4}\norm{\hat\theta-\theta}^2_{H_p}\notag\\
&=\min_{p\in\Delta(\brk{K})}\max_{i^*\in\brk{K}}\sup_{\delta\in\mathbb{R}^K}\ip{\bar a_p-\mathbf{e}_{i^*},\hat\theta+\delta}-\frac{\gamma}{4}\norm{\delta}^2_{H_p}\,.\label{eq:logbarrier_alt}
\end{align}
For any fixed $p$ and $\istar$, the derivative of the
expression in \pref{eq:logbarrier_alt} with respect to $\delta$ is given by
\begin{align}
    \frac{\partial}{\partial \delta}\left[\ip{\bar a_p-\mathbf{e}_{i^*},\delta}-\frac{\gamma}{4}\norm{\delta}^2_{H_p}\right]=\bar a_p-\mathbf{e}_{i^*}-\frac{\gamma}{2}H_p\delta\,.\label{eq:logbarrier_alt_subgradient}
\end{align}
For $p$ on the boundary of $\Delta(\brk{K})$ (i.e. $p$ for which there exists
$i\in\brk{K}$ such that $p_i=0$), the gradient is constant and the
supremum has value $+\infty$. Hence, we only need to consider the case where $p$ lies in the interior of $\Delta(\brk{K})$, which implies $H_p\psdgt{}0$. 
In this case, \pref{eq:logbarrier_alt_subgradient} is strongly convex in $\delta$ and the unique maximizer is
given by $\delta^* = \frac{2}{\gamma}H_p^{-1}(\bar
a_p-\mathbf{e}_{i^*})$. Hence, we can rewrite
\eqref{eq:logbarrier_alt} as
\begin{align}
    &\min_{p\in\Delta(\brk{K})}\max_{i^*\in\brk{K}}\max_{\delta\in\mathbb{R}^K}\ip{\bar a_p-\mathbf{e}_{i^*},\hat\theta+\delta}-\frac{\gamma}{4}\norm{\delta}^2_{H_p}\notag\\
    &=\min_{\substack{p\in\Delta(\brk{K}) \\ H_p\succ 0}}\max_{i^*\in\brk{K}}\ip{\bar a_p-\mathbf{e}_{i^*},\hat\theta}+\frac{1}{\gamma}\norm{\bar a_p-\mathbf{e}_{i^*}}^2_{H_p^{-1}}\notag\\
    &\geq\min_{\substack{p\in\Delta(\brk{K}) \\ H_p\succ 0}}\E_{\istar\sim p}\left[\ip{\bar a_p-\mathbf{e}_{i^*},\hat\theta}+\frac{1}{\gamma}\norm{\bar a_p-\mathbf{e}_{i^*}}^2_{H_p^{-1}}\right]\label{eq:logbarrier_inequality}\\
    &=\min_{\substack{p\in\Delta(\brk{K}) \\ H_p\succ 0}}\E_{\istar\sim p}\left[\frac{1}{\gamma}\left(\tr(H_pH^{-1}_p)-\norm{\bar a_p}^2_{H_p^{-1}}\right)\right]= \frac{K-1}{\gamma}\,.\notag
\end{align}
Now consider the inequality \eqref{eq:logbarrier_inequality}.
If we can show that there exists a unique solution $p$ such that this
step in fact holds with equality, then we have identified the minimizer over
$p\in\Delta(\brk{K})$. Consider an arbitrary candidate solution $p$ on the interior of $\Delta(\brk{K})$.
Then, letting $W_i\ldef \ip{\bar
  a_p-\mathbf{e}_{i^*},\hat\theta}+\frac{1}{\gamma}\norm{\bar
  a_p-\mathbf{e}_{i^*}}^2_{H_p^{-1}}$, the step
\eqref{eq:logbarrier_inequality} lower bounds $\max_{i\in\brk{K}}W_i$
by $\E_{i\sim p}\brk{W_i}$. This step holds with equality if and only if $\E_{i\sim p}\brk{W_i-\max_{i'\in\brk{K}}W_{i'}}=0$.
Since all probabilities $p_i$ are strictly positive, this can happen
if and only if
\begin{align*}
    \exists \tilde\lambda\in\mathbb{R}\quad\text{such that}\quad\forall i\in\brk{K}:\,W_i=\ip{\bar a_p-\mathbf{e}_{i},\hat\theta}+\frac{1}{\gamma}\norm{\bar a_p-\mathbf{e}_{i}}^2_{H_p^{-1}} = \tilde\lambda\,.
\end{align*}
Basic algebra shows that 
\begin{align*}
    \ip{\bar a_p-\mathbf{e}_{i},\hat\theta}+\frac{1}{\gamma}\norm{\bar a_p-\mathbf{e}_{i}}^2_{H_p^{-1}}=\sum_{i'\in[K]}p_{i'}\hat\theta_{i'}-\hat\theta_i -\frac{1}{\gamma}+\frac{1}{\gamma p_i} = \tilde\lambda\,.
\end{align*}
Substituting $\tilde \lambda = \sum_{i'\in[K]}p_{i'}\hat\theta_{i'}-\min_j \hat\theta_j
-\frac{1}{\gamma}+\lambda/\gamma$, rearranging, and picking the unique value for $\lambda$ such that the result is a probability distribution leads to the
precisely the $\logbarrier(\thetahat,\gamma)$ distribution. 
\end{proof}

\subsection{Proofs from Section \ref*{sec:linear}}
Recall that $\dim(\cA)$ is the dimension of the smallest affine linear
subspace containing $\cA$. In other words, $\dim(\cA)=
\dim(\spann(\cA-a))$ for all $a\in\cA$. Our main result in this section is the following slightly stronger version of \cref{lem:logdet bound}.
\begin{lemma}
\label{lem:logdet strong}
Any solution $p\in\Delta(\cA)$ to the problem $\logdetbarrier(\thetahat,\gamma;\cA)$
in \pref{eq:logdet_barrier} satisfies
\begin{align*}
    \max_{a^*\in\cA}\sup_{\theta\in\R^d}\ip{\bar a_p-a^*,\theta}-\frac{\gamma}{4}\norm{\hat\theta-\theta}^2_{H_p-\bar a_p\bar a_p^{\trn}}\leq \gamma^{-1}\dim(\cA)\,.
\end{align*}
\end{lemma}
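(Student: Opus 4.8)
The plan is to collapse the inner supremum over $\theta$ into a closed-form quadratic penalty, and then read off the claimed bound from the first-order optimality condition of the $\logdetbarrier$ problem. Throughout, write $\Sigma_p \ldef H_p - \bar a_p\bar a_p^{\trn}$ for the covariance of $a\sim p$, and let $V \ldef \spann(\cA - \bar a_p)$, a subspace of dimension $\dim(\cA)$ (which equals the direction space of the affine hull of $\cA$, so it does not depend on the base point). The first observation is that any minimizer $p$ of \pref{eq:logdet_barrier} has $\Sigma_p$ nondegenerate on $V$: some distribution supported on $\dim(\cA)+1$ affinely independent points of $\cA$ has covariance of full rank on $V$, so the (finite) infimum of the objective forces $\log\det(\Sigma_p) > -\infty$, whence $\Sigma_p$ is invertible as an operator on $V$ with $\mathrm{range}(\Sigma_p) = V$. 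Consequently, for every $a^*\in\cA$ the vector $\bar a_p - a^*$ lies in $V = \mathrm{range}(\Sigma_p)$, so the concave quadratic $\theta \mapsto \ip{\bar a_p - a^*,\theta} - \tfrac{\gamma}{4}\norm{\hat\theta - \theta}^2_{\Sigma_p}$ is maximized at $\theta^* = \hat\theta + \tfrac{2}{\gamma}\Sigma_p^{-1}(\bar a_p - a^*)$, giving
\[
  \sup_{\theta\in\R^d}\ip{\bar a_p - a^*,\theta} - \tfrac{\gamma}{4}\norm{\hat\theta - \theta}^2_{\Sigma_p} \;=\; \ip{\bar a_p - a^*,\hat\theta} + \tfrac{1}{\gamma}\norm{\bar a_p - a^*}^2_{\Sigma_p^{-1}}.
\]
It therefore suffices to show that $\ip{\bar a_p - a^*,\hat\theta} + \gamma^{-1}\norm{\bar a_p - a^*}^2_{\Sigma_p^{-1}} \le \gamma^{-1}\dim(\cA)$ for all $a^*\in\cA$.

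To this end, fix $a^*\in\cA$ and consider the perturbation $p_\epsilon \ldef (1-\epsilon)p + \epsilon\,\delta_{a^*} \in \Delta(\cA)$ for $\epsilon\in[0,1]$, where $\delta_{a^*}$ is the point mass at $a^*$. Since $p$ minimizes $F(q) \ldef \ip{\bar a_q,\hat\theta} - \gamma^{-1}\log\det(\Sigma_q)$ over $\Delta(\cA)$, we have $\tfrac{d}{d\epsilon}F(p_\epsilon)\big|_{\epsilon = 0} \ge 0$. Using $H_{p_\epsilon} = (1-\epsilon)H_p + \epsilon\,a^*(a^*)^{\trn}$ and $\bar a_{p_\epsilon} = (1-\epsilon)\bar a_p + \epsilon\,a^*$, a short computation (expanding $\Sigma_{p_\epsilon} = H_{p_\epsilon} - \bar a_{p_\epsilon}\bar a_{p_\epsilon}^{\trn}$ and cancelling terms) gives
\[
  \tfrac{d}{d\epsilon}\bar a_{p_\epsilon}\big|_0 = a^* - \bar a_p, \qquad \tfrac{d}{d\epsilon}\Sigma_{p_\epsilon}\big|_0 = (a^*-\bar a_p)(a^*-\bar a_p)^{\trn} - \Sigma_p.
\]
Interpreting $\det$, $(\cdot)^{-1}$, and $\tr$ on the subspace $V$ (consistent with the determinant convention in the definition; note both $(a^*-\bar a_p)(a^*-\bar a_p)^{\trn}$ and $\Sigma_p$ are supported on $V$, and $\Sigma_{p_\epsilon}$ retains full rank $\dim(\cA)$ with range $V$ for $\epsilon$ near $0$), we obtain $\tfrac{d}{d\epsilon}\log\det(\Sigma_{p_\epsilon})\big|_0 = \tr\big(\Sigma_p^{-1}[(a^*-\bar a_p)(a^*-\bar a_p)^{\trn} - \Sigma_p]\big) = \norm{a^*-\bar a_p}^2_{\Sigma_p^{-1}} - \dim(\cA)$, since $\tr(\Sigma_p^{-1}\Sigma_p) = \dim(\cA)$. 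Combining with $\tfrac{d}{d\epsilon}\ip{\bar a_{p_\epsilon},\hat\theta}\big|_0 = \ip{a^*-\bar a_p,\hat\theta}$ yields
\[
  0 \;\le\; \tfrac{d}{d\epsilon}F(p_\epsilon)\big|_{0} \;=\; \ip{a^*-\bar a_p,\hat\theta} - \gamma^{-1}\big(\norm{a^*-\bar a_p}^2_{\Sigma_p^{-1}} - \dim(\cA)\big),
\]
which rearranges to exactly the required inequality. Taking the maximum over $a^*\in\cA$ and combining with the first step proves \pref{lem:logdet strong}; \pref{lem:logdet bound} then follows because $H_p \psdgeq H_p - \bar a_p\bar a_p^{\trn}$ implies $\norm{\hat\theta-\theta}^2_{H_p} \ge \norm{\hat\theta-\theta}^2_{H_p - \bar a_p\bar a_p^{\trn}}$.

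The main obstacle is the bookkeeping in the degenerate case $\dim(\cA) < d$. One must argue that at any minimizer $\Sigma_p$ is nondegenerate on $V$ — so that the $\theta$-supremum is finite (directions of $\theta$ orthogonal to $V$ leave the objective unchanged, since $\bar a_p - a^* \in V$ and $\Sigma_p$ vanishes on $V^\perp$), and so that $\bar a_p - a^*$ lies in the range of $\Sigma_p$ — and then one must interpret $\det$, $\log\det$, $\Sigma_p^{-1}$, and the identity $\tr(\Sigma_p^{-1}\Sigma_p) = \dim(\cA)$ consistently as operations within $V$. Once this is in place, differentiating $\log\det$ along the curve $p_\epsilon$ is routine, since all of the matrices involved remain supported on $V$ for $\epsilon$ near $0$, and the final step is just algebra.
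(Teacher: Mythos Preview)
Your proof is correct and follows essentially the same approach as the paper: both solve the inner $\theta$-supremum in closed form and then extract the resulting norm inequality from the first-order optimality condition of the $\logdetbarrier$ problem. The only differences are cosmetic---the paper handles the degenerate case $\dim(\cA)<d$ by an explicit orthogonal projection onto $\bbR^{\dim(\cA)}$ (rather than working intrinsically on $V$), and it phrases optimality via KKT with a Lagrange multiplier that is then computed by averaging over $a\sim p$, which amounts to exactly your directional-derivative computation along $p_\epsilon$.
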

Since $-\norm{\hat\theta-\theta}^2_{H_p-\bar a_p\bar a_p^{\trn}}=-\norm{\hat\theta-\theta}^2_{H_p}+\ip{\hat\theta-\theta,\bar a_p}^2\geq -\norm{\hat\theta-\theta}^2_{H_p}$, \cref{lem:logdet bound} is a direct corollary of \cref{lem:logdet strong}.

\begin{proof}[Proof of \cref{lem:logdet strong}]
We begin by handling the generate case in which $\dim(\cA)<d$.
\paragraph{Case: $\dim(\cA)<d$.}
We first show that if $\dim(\cA)<d$, there exists a bijection from $\cA$
to a set $\tilde \cA\subset \R^{\dim(\cA)}$ and a projection $P$ taking
the loss estimator $\theta$ into $\R^{\dim(\cA)}$, such that
$\logdetbarrier(\theta,\gamma;\cA)$ and
$\logdetbarrier(P(\theta),\gamma;\tilde \cA)$ are (up to the
bijection) identical, and such that the objective in \cref{lem:logdet strong} coincides for $(\theta,\gamma,\cA)$ and $(P(\theta),\gamma,\tilde{\cA})$. This implies for all subsequent arguments, we can assume without loss of generality that
$\dim(\cA)=d$, since if this does not hold we can work in the subspace outlined in this section.

Pick an arbitrary anchor $\mfa\in\cA$, and let $P$ be the projection
onto $\spann(\cA-\mfa)$, represented with a arbitrary fixed orthonormal
basis for $\spann(\cA-\mfa)$. Let $\tilde \cA = P(\cA-\mfa)$, and for each
$p\in\Delta(\cA)$, let $\tilde p\in\Delta(\tilde \cA)$ be such that
$\tilde p_{P(a-\mfa)}=p_a$ (recall that we define $\Delta(\cA)$ to
have countable support). Observe that for all $\hat\theta\in\R^d$, we have
\begin{align*}
\ip{\bar a_p,\hat\theta}=\E_{a\sim p}\brk*{\ip{P(a-\mfa),P(\hat\theta)}}+\ip{\mfa,\hat\theta}=\ip{\bar a_{\tilde p},P(\hat\theta)}+\ip{\mfa,\hat\theta}\,.
\end{align*}
Recall that we define the determinant function $\det(\cdot)$ in $\logdetbarrier$ as the product over the first $\dim(\cA)$ eigenvalues of $H_p-\bar a_p\bar a_p^{\trn}$. Let $(\nu_i)_{i=1}^{\dim(\cA)}$ denote the corresponding eigenvectors (note that this requires $\nu_i\in\spann(\cA-\mfa)$). We have
\begin{align*}
    &\log\det(H_p-\bar a_p\bar a_p^{\trn})=\sum_{i=1}^{\dim(\cA)}\log(\norm{\nu_i}^2_{H_p-\bar a_p\bar a_p^{\trn}})=
    \sum_{i=1}^{\dim(\cA)}\log(\E_{a\sim p}\brk{\ip{\nu_i,a-\bar a_p}^2} )\\
    &=\sum_{i=1}^{\dim(\cA)}\log(\E_{a\sim p}\brk{\ip{\nu_i,a-\mfa-\E_{a'\sim p}(a'-\mfa)}^2} )=\sum_{i=1}^{\dim(\cA)}\log(\E_{a\sim p}\brk{\ip{P(\nu_i),P(a-\mfa)-\bar a_{\tilde p}}^2} )\\
    &=\sum_{i=1}^{\dim(\cA)}\log(\norm{P(\nu_i)}^2_{H_{\tilde p}-\bar a_{\tilde p}\bar a_{\tilde p}^{\trn}})=\log\det(H_{\tilde p}-\bar a_{\tilde p}\bar a_{\tilde p}^{\trn})\,,
\end{align*}
where we have used the fact that $P$ only changes the representation on $\spann(\cA-\mfa)$ and does not change the identity of the eigenvalues.
Combining these two results immediately shows that for any $p\in\logdetbarrier(\hat\theta,\gamma;\cA)$, we have $\tilde p \in \logdetbarrier(P(\hat\theta),\gamma;\tilde\cA)$ and vice versa.

For the objective in \cref{lem:logdet strong}, we note that
\begin{align*}
\ip{\bar a_p-a^*,\theta}=\ip{ \E_{a\sim p}\brk{P(a-\mfa)}-P(a^*-\mfa),P(\theta)}=\ip{ \bar a_{\tilde p}-(P(a^*-\mfa)),P(\theta)}.
\end{align*}
For the quadratic term, following the same steps as above for $\nu_i$,
we have
\begin{align*}
    \norm{\hat\theta-\theta}^2_{H_p-\bar a_p\bar a_p^{\trn}} = \norm{P(\hat\theta)-P(\theta)}^2_{H_{\tilde p}-\bar a_{\tilde p}\bar a_{\tilde p}^{\trn}}\,.
\end{align*}
and
\begin{align*}
    \ip{\bar a_p-a^*,\theta}-\frac{\gamma}{4}\norm{\hat\theta-\theta}^2_{H_p-\bar a_p\bar a_p^{\trn}} = 
    \ip{\bar a_{\tilde p}-P( a^*-\mfa),P(\theta)}-\frac{\gamma}{4}\norm{P(\hat\theta)-P(\theta)}^2_{H_{\tilde p}-\bar a_{\tilde p}\bar a_{\tilde p}^{\trn}}\,.
\end{align*}
Hence, we have
\begin{align*}
    \max_{a^*\in\cA}\sup_{\theta\in\R^d}\ip{\bar a_p-a^*,\theta}-\frac{\gamma}{4}\norm{\hat\theta-\theta}^2_{H_p-\bar a_p\bar a_p^{\trn}} = \max_{\tilde a^*\in\tilde \cA}\sup_{\tilde \theta\in\R^{\dim(\cA)}}\ip{\bar a_{\tilde p}-\tilde a^*,\tilde \theta}-\frac{\gamma}{4}\norm{P(\hat\theta)-\tilde \theta}^2_{H_{\tilde p}-\bar a_{\tilde p}\bar a_{\tilde p}^{\trn}}\,.
\end{align*}
\paragraph{Case: $\dim(\cA)=d$.}
We now handle the full-dimensional case. Our technical result here is as follows.
\begin{lemma}
\label{lem:logdet bound1}
When $\dim(\cA)=d$, any solution $p\in\Delta(\cA)$ to the problem $\logdetbarrier(\theta,\gamma;\cA)$
in \pref{eq:logdet_barrier} satisfies
 \begin{align*}
 \forall a\in\cA:\,\ip{\bar a_p-a,\theta} + \frac{1}{\gamma}\norm{\bar a_p-a}^2_{H_p^{-1}-\bar a_p\bar a_p^{\trn}} \leq \frac{\dim(\cA)}{\gamma}.
 \end{align*}
\end{lemma}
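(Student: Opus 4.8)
The plan is to derive the stated inequality directly from the first-order optimality conditions of the convex program defining $\logdetbarrier(\theta,\gamma;\cA)$, by perturbing the optimal distribution $p$ toward a point mass at an arbitrary fixed action $a\in\cA$. Write $\Sigma_p\ldef H_p-\bar a_p\bar a_p^{\trn}=\mathrm{Cov}_{a'\sim p}(a')$ for the covariance of $p$, so that the objective is $F(q)=\ip{\bar a_q,\theta}-\gamma^{-1}\log\det(\Sigma_q)$, and recall that under $\dim(\cA)=d$ the symbol $\det(\cdot)$ is the ordinary determinant. First I would observe that any minimizer $p$ has $\Sigma_p\succ0$: since $\cA$ affinely spans $\R^d$, the uniform distribution over $d+1$ affinely independent points of $\cA$ attains a finite value of $F$, whereas $-\gamma^{-1}\log\det(\Sigma_q)\to+\infty$ as $\Sigma_q$ approaches a singular matrix, so a singular $\Sigma_p$ cannot be optimal. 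In particular $\bar a_p-a$ lies in the range of $\Sigma_p$ and $\norm{\bar a_p-a}^2_{\Sigma_p^{-1}}$ is well-defined.

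Next, fix $a\in\cA$ and consider the segment $p_s\ldef(1-s)p+s\delta_a\in\Delta(\cA)$ for $s\in[0,1]$. Optimality of $p$ gives $F(p_s)\ge F(p)$ for all such $s$, and since $\Sigma_{p_s}\succ0$ for $s$ near $0$ by continuity, the map $s\mapsto F(p_s)$ is differentiable near $0$ and $\frac{d}{ds}F(p_s)\big|_{s=0^+}\ge0$. Computing this derivative is a short matrix calculation: $\bar a_{p_s}=(1-s)\bar a_p+sa$ and $H_{p_s}=(1-s)H_p+saa^{\trn}$ are affine in $s$, and expanding $\Sigma_{p_s}=H_{p_s}-\bar a_{p_s}\bar a_{p_s}^{\trn}$ and substituting $H_p=\Sigma_p+\bar a_p\bar a_p^{\trn}$ yields the clean identity
\[
\left.\frac{d}{ds}\Sigma_{p_s}\right|_{s=0}=(a-\bar a_p)(a-\bar a_p)^{\trn}-\Sigma_p.
\]
Combining this with $\frac{d}{ds}\log\det(\Sigma_{p_s})\big|_{s=0}=\tr\!\big(\Sigma_p^{-1}[(a-\bar a_p)(a-\bar a_p)^{\trn}-\Sigma_p]\big)=\norm{a-\bar a_p}^2_{\Sigma_p^{-1}}-d$ and $\frac{d}{ds}\ip{\bar a_{p_s},\theta}\big|_{s=0}=\ip{a-\bar a_p,\theta}$, the optimality inequality becomes
\[
\ip{a-\bar a_p,\theta}-\gamma^{-1}\big(\norm{a-\bar a_p}^2_{\Sigma_p^{-1}}-d\big)\ge0,
\]
which rearranges to $\ip{\bar a_p-a,\theta}+\gamma^{-1}\norm{\bar a_p-a}^2_{\Sigma_p^{-1}}\le\gamma^{-1}d=\gamma^{-1}\dim(\cA)$, exactly the claimed bound (with $(H_p-\bar a_p\bar a_p^{\trn})^{-1}=\Sigma_p^{-1}$ as the weighting matrix of the norm).

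I expect the only genuinely delicate points to be (i) justifying $\Sigma_p\succ0$ at the minimizer, which is needed both for the one-sided derivative to be finite and for the inverse-covariance norm to be meaningful, and (ii) making sure the $\log\det$ derivative along the segment is legitimate even though $p$ may have countably infinite support; this causes no trouble because $\bar a_{p_s}$ and $H_{p_s}$ are genuine affine functions of $s$ regardless of the cardinality of $\supp(p)$, so $s\mapsto\Sigma_{p_s}$ is matrix-valued affine and the chain rule for $\log\det$ of a positive-definite matrix-valued curve applies verbatim. Everything else is bookkeeping, and no dimension reduction is needed here since the lemma is already stated under $\dim(\cA)=d$.
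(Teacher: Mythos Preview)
Your proof is correct and follows essentially the same approach as the paper: both derive the inequality from first-order optimality of the convex $\logdetbarrier$ problem, after noting that any minimizer has nonsingular covariance. The only cosmetic difference is that the paper phrases optimality via KKT conditions with an explicit Lagrange multiplier $\tilde\lambda$ (computed by averaging the partial derivatives over $a\sim p$), whereas you take the directional derivative along the segment $p_s=(1-s)p+s\delta_a$, which stays in the simplex and therefore bypasses the multiplier; the resulting inequality is identical.
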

\begin{proof}
We first observe that any solution $p\in\Delta(\cA)$ to the problem
$\logdetbarrier(\thetahat,\gamma;\cA)$ must be positive definite in the
sense that $H_p-\bar a_p\bar a_p^{\trn}\succ 0$, since otherwise the
objective has value $\infty$; note that $\dim(\cA)=d$ implies that a distribution $p$ with $H_p-\bar a_p\bar a_p^{\trn}\succ 0$ indeed exists. Hence, going forward, we only consider $p$ for which $H_p-\bar a_p\bar
a^{\trn}_p\succ 0$.

Recall $p=\logdetbarrier(\thetahat,\gamma;\cA)$ is any solution to
\begin{align*}
    &\argmin_{p\in\Delta(\cA)}  \crl*{\ip{\bar a_p,\thetahat} -\gamma^{-1} \log\det (H_p-\bar a_p\bar a^{\trn}_p)}\,,
\end{align*}
where  $\Delta(\cA)$ is the set of distributions over countable subsets of $\cA$. Hence we can write
\begin{align*}
    \Delta(\cA) = \crl*{\sum_{i=1}^\infty w_i\mathbf{e}_{A_i} \,|\,w\in\R_+^\bbN,
    A\in\cA^{\bbN},\sum_{i=1}^\infty w_i=1}\,, 
\end{align*}
where $\mathbf{e}_a$ denotes the distribution that selects $a$ with probability $1$. 
By first-order optimality, $p$ is a solution to \pref{eq:logdet_barrier} if and only if
\begin{align*}
    \forall p'\in\Delta(\cA)\colon \sum_{a\in\supp(p)\cup\supp(p')}(p'_a-p_a)\frac{\partial}{\partial p_a}\left[\ip{\bar a_p,\hat \theta} - \frac{1}{\gamma}\log\det(H_p-\bar a_p\bar a_p^{\trn})\right]\geq 0\,.
\end{align*}
By the K.K.T. conditions, this holds if and only if there exists some $\tilde\lambda\in\R$ such that 
\begin{align}
    \forall a\in\supp(p):\frac{\partial}{\partial p_a}\left[\ip{\bar a_p,\hat \theta} - \frac{1}{\gamma}\log\det(H_p-\bar a_p\bar a_p^{\trn})\right]&=\tilde\lambda\label{eq:kkt1}\\
    \forall a\in\cA:\frac{\partial}{\partial p_a}\left[\ip{\bar a_p,\hat \theta} - \frac{1}{\gamma}\log\det(H_p-\bar a_p\bar a_p^{\trn})\right]&\geq\tilde\lambda\,.\label{eq:kkt2}
\end{align}
To find $\tilde\lambda$, we calculate the partial derivative for each action $a$ using the chain rule:
\begin{align*}
&\frac{\partial}{\partial p_a}\left[\ip{\bar a_p,\hat \theta} - \frac{1}{\gamma}\log\det(H_p-\bar a_p\bar a_p^{\trn})\right]\\
&=\ip{a,\hat \theta} - \frac{\det(H_p-\bar a_p\bar a_p^{\trn})\tr((H_p-\bar a_p\bar a_p^{\trn})^{-1} (aa^{\trn} - \bar a_pa^{\trn} - a\bar a_p^{\trn}) )}{\gamma\det(H_p-\bar a_p\bar a_p^{\trn})}\\
&= \ip{a-\bar a_p ,\hat \theta} - \frac{1}{\gamma}\norm{a-\bar a_p}^2_{(H_p-\bar a_p\bar a_p^{\trn})^{-1}}+\frac{1}{\gamma}\norm{\bar a_p}^2_{(H_p-\bar a_p\bar a_p^{\trn})^{-1}}+\ip{\bar a_p,\hat\theta}\,. 
\end{align*}
Using \pref{eq:kkt1} and taking the expectation over $p$ yields
\begin{align*}
    \tilde\lambda &= \E_{a\sim p}\brk*{\frac{\partial}{\partial p_a}\left[\ip{\bar a_p,\hat \theta} - \frac{1}{\gamma}\log\det(H_p-\bar a_p\bar a_p^{\trn})\right]}=
     -\frac{d}{\gamma}+\frac{1}{\gamma}\norm{\bar a_p}^2_{(H_p-\bar a_p\bar a_p^{\trn})^{-1}}+\ip{\bar a_p,\hat\theta}\,.
\end{align*}
Plugging this expression into \pref{eq:kkt2}, we deduce that
\begin{align*}
    \forall a\in\cA: \ip{a-\bar a_p ,\hat \theta} - \frac{1}{\gamma}\norm{a-\bar a_p}^2_{(H_p-\bar a_p\bar a_p^{\trn})^{-1}}\geq -\frac{d}{\gamma}\,.
\end{align*}
Rearranging finishes the proof.
\end{proof}

We now conclude the proof of \pref{lem:logdet strong}. Recall that for any solution $p\in\Delta(\cA)$
to the problem $\logdetbarrier(\thetahat,\gamma;\cA)$, the matrix $H_p-\bar a_p\bar a_p^{\trn}$ is positive definite.
In this case, for any fixed $a^*\in\cA$, the function
\begin{align*}
\theta\mapsto    \ip{\bar a_p-a^*,\theta} - \frac{\gamma}{4}\norm{\hat\theta-\theta}^2_{H_p-\bar a_p\bar a_p^{\trn}}
\end{align*}
is strictly concave in $\theta$, and the maximizer $\theta^*$ may be found by setting the derivative with respect to $\theta$ to $0$. In particular,
\begin{align*}
&\frac{\partial}{\partial \theta}\brk*{\ip{\bar a_p-a^*,\theta} - \frac{\gamma}{4}\norm{\hat\theta-\theta}^2_{H_p-\bar a_p\bar a_p^{\trn}}} =\bar a_p-a^*+\frac{\gamma}{2}(H_p-\bar a_p\bar a_p^T)(\hat\theta-\theta),
\intertext{so that the maximizer is given by}
    &\theta^* = \hat\theta + \frac{2}{\gamma}(H_p-\bar a_p\bar a_p^{\trn})^{-1}(a_p-a^*)\,.
\end{align*}
Substituting in this choice, we have that
\begin{align*}
    \max_{a^*\in\cA}\sup_{\theta\in\mathbb{R}^d}\ip{\bar a_p-a^*,\theta} - \frac{\gamma}{4}\norm{\hat\theta-\theta}^2_{H_p-\bar a_p\bar a_p^{\trn}}
    = \max_{a^*\in\cA}\ip{\bar a_p-a^*,\hat \theta} + \frac{1}{\gamma}\norm{\bar a_p-a}^2_{(H_p-\bar a_p\bar a_p^{\trn})^{-1}} \,.
\end{align*}
The result is obtained by applying \cref{lem:logdet bound1} to the
right-hand side above.
\end{proof}

\subsection{Proofs from Section \ref*{sec:adaptive}}

\begin{proof}[Proof of \cref{thm:impreg}]
  \resetnotes
  Let $m$ be fixed. To keep notation compact, we abbreviate
  $q_t\equiv{}q_{t,h}$, $\rho_{t}\equiv\rho_{t,m}$,
  $\gamma_{t}\equiv\gamma_{t,m}$, $Z_t\equiv{}Z_{t,m}$, and so forth. Consider a fixed sequence $S$, and let $f^*$ be any predictor achieving the value
of $\veps_T(S)$ (if the infimum is not achieved, we can consider a
limit sequence; we omit the details). Recall that since we assume an
oblivious adversary, $\fstar$ is fully determined before the
interaction protocol begins. Finally, let us abbreviate $\theta^*_t =
f^*(x_t)$, $a^*_t = \pi_{f^*}(x_t)$, and
$\pi_t^*(x_t)=\argmin_{a\in\cA_t}\mu(a,x_t)$, where ties are broken arbitrarily. Then we can bound
\begin{align*}
\RegImp&= \E\left[\sum_{t=1}^T\frac{Z_t}{q_t}\left(\mu(a_t,x_t)-\mu(\pi_t^*(x_t),x_t)\right)\right]\\
&\leq \E\left[\sum_{t=1}^T\frac{Z_t}{q_t}\left(\ip{a_t-\pi_t^*(x_t), \theta^*_t} + 2\max_{a\in\cA_t}|\mu(a,x_t)-\ip{a,\theta^*_t}|\right)\right]\\
&\topnote{\leq}{follows from the fact that $\E[Z_t]=q_t$ and the Cauchy-Schwarz inequality, together with the definition of $\veps_T$;
}  \E\left[\sum_{t=1}^T\frac{Z_t}{q_t}\left(\ip{a_t-\pi_t^*(x_t), \theta^*_t}\right)\right]+2\veps_TT\\
&\topnote{\leq}{follows from the definition of the policy $\pi_{f^*}$;
}  \E\left[\sum_{t=1}^T\frac{Z_t}{q_t}\ip{a_t-a^*_t, \theta^*_t}\right]+2\veps_T T\\
&\topnote{=}{is due to the fact that, conditioned on $Z_t=1$, we sample $a_t\sim p_t$ with $\E_{a_t\sim p_t}[a_t]=\bar a_{p_t}$;
}  \E\left[\sum_{t=1}^T\frac{Z_t}{q_t}\left(\ip{\bar a_{p_t}-a^*_t, \theta^*_t}-\frac{\gamma_t}{4}\norm{\hat\theta_t-\theta^*}_{H_{p_t}}^2+\frac{\gamma_t}{4}\norm{\hat\theta_t-\theta^*}_{H_{p_t}}^2\right)\right]+2\veps_T T\\
&\topnote{\leq}{uses \cref{lem:logdet bound};
} \E\left[\sum_{t=1}^T\frac{Z_t}{q_t}\left(\frac{\dim(\cA_t)}{\gamma_t}+\frac{\gamma_t}{4}\norm{\hat\theta_t-\theta^*}_{H_{p_t}}^2\right)\right]+2\veps_T T\\
&\topnote{\leq}{uses $\E_{a_t\sim p_t}[a_ta_t^{\trn}]= H_{p_t}$.
} \E\left[\max_{t\in[T]}\gamma_t^{-1}\right]\sum_{t=1}^T\dim(\cA_t)+\E\left[\sum_{t=1}^T\frac{Z_t}{q_t}\frac{\gamma_t}{4}(\ip{a_t,\hat\theta_t}-\ip{a_t,\theta^*_t})^2\right]+2\veps_T T~.
\end{align*}
Here \writeeqnotes
\resetnotes
Continuing with squared error term above, we have
\begin{align*}
    &\E\left[\sum_{t=1}^T\frac{Z_t}{q_t}\gamma_t(\ip{a_t,\hat\theta_t}-\ip{a_t,\theta^*_t})^2\right]&\\
    &=\E\left[\sum_{t=1}^T\frac{Z_t}{q_t}\gamma_t\prn*{(\langle a_t,\hat \theta_t\rangle-\ell_t)^2-(\langle a_t,\theta^*_t\rangle-\ell_t)^2+2(\ell_t-\langle a_t,\theta^*_t\rangle)\langle a_t,\hat \theta_t-\theta^*_t\rangle}\right] \\
    &\topnote{=}{uses that $\ell_t$ is conditionally independent of $Z_t$ and $a_t$.
    }\E\left[\sum_{t=1}^T\frac{Z_t}{q_t}\gamma_t\prn*{(\langle a_t,\hat \theta_t\rangle-\ell_t)^2-(\langle a_t,\theta^*_t\rangle-\ell_t)^2+2(\mu(a_t,x_t)-\langle a_t,\theta^*_t\rangle)\langle a_t,\hat \theta_t-\theta^*_t\rangle}\right],
    \end{align*}
    where \writeeqnotes
    We bound the term involving the difference of squares as
    \begin{align*}
        &\E\left[\sum_{t=1}^T\frac{Z_t}{q_t}\gamma_t((\langle a_t,\hat \theta_t\rangle-\ell_t)^2-(\langle a_t,\theta^*_t\rangle-\ell_t)^2)\right]
        \leq \E\left[\max_{t\in[T]}\frac{\gamma_{t}}{q_{t}}\right]\RegSquare,
    \end{align*}
    by \cref{ass:weighted regression}.\footnote{Note that \cref{ass:weighted regression} holds with bound $\RegSquare$ even if $\SquareAlg$ is run for less than $T$ timesteps, since we could extend the sequence with 0 weight until time $T$.} 
    For the linear term, we apply the sequence of inequalities
    \resetnotes
    \begin{align*}
    &2\E\left[\sum_{t=1}^T\frac{Z_t}{q_t}\gamma_t(\mu(a_t,x_t)-\langle a_t,\theta^*_t\rangle)\langle a_t,\hat \theta_t-\theta^*_t\rangle\right]\\
    &\topnote{\leq}{is by the AM-GM inequality: $2ab \leq 2a^2+\frac{1}{2}b^2$;
    }
    2\E\left[\sum_{t=1}^T\frac{Z_t}{q_t}\gamma_t((\mu(a_t,x_t)-\langle
      a_t,\theta^*_t\rangle)^2+\frac{1}{4}\langle a_t,\hat
      y_t-\theta^*_t\rangle^2\right] \\
      &\leq{}2\E\left[\sum_{t=1}^T\frac{Z_t}{q_t}\gamma_t\max_{a\in\cA_t}((\mu(a,x_t)-\langle a,\theta^*_t\rangle)^2+\frac{1}{4}\langle a_t,\hat y_t-\theta^*_t\rangle^2\right] \\
    &\topnote{\leq}{follows from the fact that $Z_t$ is conditionally independent of $\gamma_t$, and the definition of $\veps_T$.     }
      2\E\brk*{\max_{t\in[T]}\gamma_t}\veps_T^2 T+\frac{1}{2}\E\left[\sum_{t=1}^T\frac{Z_t}{q_t}\gamma_t(\ip{a_t,\hat\theta_t}-\ip{a_t,\theta^*_t})^2\right],
\end{align*}
where \writeeqnotes

Altogether, we have
\begin{align*}
  &\E\left[\sum_{t=1}^T\frac{Z_t}{q_t}\gamma_t(\ip{a_t,\hat\theta_t}-\ip{a_t,\theta^*_t})^2\right]\\
  &\leq
  \E\left[\max_{t\in[T]}\frac{\gamma_{t}}{q_{t}}\right]\RegSquare + 2\E\brk*{\max_{t\in[T]}\gamma_t}\veps_T^2 T+\frac{1}{2}\E\left[\sum_{t=1}^T\frac{Z_t}{q_t}\gamma_t(\ip{a_t,\hat\theta_t}-\ip{a_t,\theta^*_t})^2\right].
\end{align*}
Rearranging yields
\begin{align*}
    \E\left[\sum_{t=1}^T\frac{Z_t}{q_t}\gamma_t(\ip{a_t,\hat\theta_t}-\ip{a_t,\theta^*_t})^2\right]\leq 2\E\left[\max_{t\in[T]}\frac{\gamma_t}{q_t}\right]\RegSquare  + 4\E\brk*{\max_{t\in[T]}\gamma_t}\veps_T^2T\,.
\end{align*}
Combining all of the developments so far, we have
\begin{align}
  \RegImp &\leq \sum_{t=1}^T\E\left[\gamma_t^{-1}\right]\dim(\cA_t) + \frac{1}{2}\E\left[\max_{t\in[T]}\frac{\gamma_t}{q_t}\right]\RegSquare  + \E\brk*{\max_{t\in[T]}\gamma_t}\veps_T^2T+2\veps_T T\,.\label{eq:nice}
\end{align}
The proof is completed by noting that the learning rate
$\gamma_t=\min\set{\frac{\sqrt{d}}{\veps'},\sqrt{dT/(\rho_t\RegSquare)}}$
is non-increasing, but $\gamma_t\rho_t\geq \frac{\gamma_t}{q_t}$ is
non-decreasing. Hence, we can upper bound the expression above by
\begin{align*}
    \RegImp &\leq \E\left[\gamma_T^{-1}\right]dT + \frac{1}{2}\E\left[\gamma_T\rho_T\right]\RegSquare  + \E[\gamma_1]\veps_T^2T+2\veps_T T\\
    &\leq \left(\frac{\veps'}{\sqrt{d}} + \E[\sqrt{\rho_T}]\sqrt{\frac{\RegSquare}{dT}}\right)dT+\frac{1}{2}\E[\sqrt{\rho_T}]\sqrt{dT\RegSquare}+\frac{\sqrt{d}}{\veps'}\veps_T^2T+2\veps_T T\,.
\end{align*}
\end{proof}

\begin{proof}[Proof of \cref{thm:main}]
Let $\mstar \ldef
\argmin_{m\in\brk*{M}}\frac{\veps_T}{\veps'_m}+\frac{\veps'_m}{\veps_T}$
if $\veps_T\geq T^{-1}$ and $\mstar\ldef{}M$ otherwise. 
We begin by formally verifying the claim
\begin{align}
    \Reg = \E\brk*{\sum_{t=1}^T\tilde\ell_{t,A_t}-\tilde\ell_{t,\mstar}}+\RegImp[\mstar]\,.\label{eq:regsplit}
\end{align}
By the definition
$\tilde\ell_{t,A_t} \ldef \ell_t+1$, we have
\begin{align*}
    \E\brk*{\tilde\ell_{t,A_t}-\tilde\ell_{t,\mstar}}=\E\brk*{\ell_t+1-\frac{Z_{t,\mstar}}{p_{t,\mstar}}(\ell_t+1)}=\E\brk*{\mu(a_t,x_t)-\frac{Z_{t,\mstar}}{p_{t,\mstar}}\mu(a_t,x_t)}\,.
\end{align*}
On the other hand, the second term on the right-hand side of \pref{eq:regsplit} is
\begin{align*}
    \RegImp[\mstar]=\E\brk*{\sum_{t=1}^T\frac{Z_{t,\mstar}}{p_{t,\mstar}}\prn*{\mu(a_t,x_t)-\mu(\pi^*_t(x_t),x_t)}}=\E\brk*{\sum_{t=1}^T\frac{Z_{t,\mstar}}{p_{t,\mstar}}\mu(a_t,x_t)-\mu(\pi^*_t(x_t),x_t)}\,.
\end{align*}
Combining both lines leads to the identity in \pref{eq:regsplit}.

Proceeding with the proof, recall that the losses $\tilde\ell$ satisfy $\tilde\ell_{t,m}\in[0,2]$ for all $m$, since $\ell_t\in[-1,1]$ and we shift the loss by $1$. Hence, we can apply \cref{cor:Tsallis-INF} with $\alpha=\frac{1}{2}$ and
$R=\frac{3}{2}\sqrt{dT\RegSquare }$ to obtain
\begin{align*}
    \E\brk*{\sum_{t=1}^T\tilde\ell_{t,A_t}-\tilde\ell_{t,\mstar}} \leq 4\sqrt{2MT} +3\sqrt{dT\RegSquare M}-\frac{3}{2}\E[\sqrt{\rho_{T,a^*}}]\sqrt{dT\RegSquare }\,, 
\end{align*}
and by \cref{thm:impreg},
\begin{align*}
    \RegImp[\mstar]\leq\left(\left(\frac{\veps'_{\mstar}}{\veps_T}+\frac{\veps_T}{\veps'_{\mstar}} \right)\sqrt{d}+2\right)\veps_TT + \frac{3}{2}\E[\sqrt{\rho_{T,a^*}}]\sqrt{dT\RegSquare }\,.
\end{align*}
We now consider two cases. First, if $\veps_T>T^{-1}$, we can pick $\mstar$ such that $\veps'_{\mstar}\in[\veps_T,e\veps_T]$, which ensures that
$\left(\frac{\veps'_{\mstar}}{\veps_T}+\frac{\veps_T}{\veps'_{\mstar}}\right)\leq
e+e^{-1}$. Otherwise, we pick $\veps'_{\mstar} = T^{-1}$ so that the misspecification term is bounded by 
\begin{align*}
    \left(\left(\frac{\veps'_{\mstar}}{\veps_T}+\frac{\veps_T}{\veps'_{\mstar}} \right)\sqrt{d}+2\right)\veps_T T = \left(\left(\veps'_{\mstar}+\frac{\veps_T^2}{\veps'_{\mstar}} \right)\sqrt{d}+2\veps_T\right) T\leq 2\sqrt{d}+2\,.
\end{align*}
Summing the regret bounds for the base and master algorithms completes
the proof.
\end{proof}

\subsection{Proofs from Section \ref*{sec:sparse}}

\paragraph{Algorithm description}
We begin by outlining the algorithm that achieves the bound in \pref{thm:sparse}. The algorithm proceeds in episodes. At the begin of episode 1, the algorithm defines $D_1 \ldef \sum_{t=1}^T\dim(\cA_t)\leq 2T$ and $\tau_1=1$ and plays the algorithm from \pref{thm:main} with the learning rate tuned for $d=D_1$.
Within each episode $i\geq{}1$, if the agent observes at time $t$ that
$\sum_{s=\tau_i}^t\dim(\cA_s)>D_i$, it restarts the algorithm from \pref{thm:main} with
$D_{i+1}\ldef2D_i$; we denote this time by $\tau_{i+1}=t$.
Note that we can assume $\dim(\cA)\leq d <T$ without loss of generality (otherwise the result is trivial), and hence we never need to double more than once at each time step.

To analyze this algorithm, we first show that the bound from \cref{thm:main} continues to hold even if the learner plays only on a subset of time steps.
\begin{proposition}
\label{prop:anytime main thm}
Let $\cT \subset [T]$ be an obliviously chosen subset of timesteps.
Then the upper bound from \cref{thm:main} on a sequence $S$ continues to hold if the algorithm is run on a sub-sequence $S_\cT$.
\end{proposition}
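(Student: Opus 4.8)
The plan is to observe that running the \pref{thm:main} algorithm ``on $S_\cT$'' is nothing more than running it---with every internal parameter (the master learning rate, the geometric grid $(\veps'_m)$, the base learning rates $\gamma_{t,m}$, and the regression oracle) left at its \pref{thm:main} value, tuned for the \emph{global} horizon $T$---on a fresh contextual bandit instance of length $T'\ldef\abs{\cT}\le{}T$ whose $j$-th round presents the pair $(x_t,\cA_t)$ for the $j$-th element $t$ of $\cT$. Since $\cT$ is obliviously chosen, this relabeled instance $S_\cT$ is again generated by an oblivious adversary, so the regression function $f^{\star}$ attaining $\veps_{T'}(S_\cT)$ is determined before the interaction begins, as the proofs of \pref{thm:impreg} and \pref{thm:main} require. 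It therefore suffices to replay those proofs with $T'$ actual rounds while keeping the tuning horizon at $T$, check that the resulting bound is unchanged, and finally relate $\veps_\cT\ldef\veps_{T'}(S_\cT)$ back to $\veps_T(S)$.

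First I would verify that each ingredient of the proof of \pref{thm:main} survives the restriction. The regression oracle still satisfies \pref{ass:weighted regression} with the same bound $\RegSquare$, since the rounds outside $\cT$ can be simulated by feeding the oracle weight $0$---this is exactly the zero-weight padding already used in the footnote of the proof of \pref{thm:impreg}, formalized by the reduction of \pref{alg:oracle}, and it relies on $\RegSquare$ being non-decreasing in $T$. The master regret bound of \pref{cor:Tsallis-INF} is monotone non-decreasing in the number of rounds at a fixed learning rate $\eta=\sqrt{1/(2T)}$, and the losses $\tilde\ell_{t,m}$ still lie in $[0,2]$, so playing only $T'\le{}T$ rounds inherits the $T'=T$ guarantee; in particular the negative contribution $-\E[\rho_{T,m^{\star}}^{1/2}]R$, with $R=\tfrac32\sqrt{dT\RegSquare}$, is unaffected (here $\rho$ is accumulated over the played rounds).

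Next I would re-run the proof of \pref{thm:impreg} with $T'$ rounds. The horizon enters that argument only through $\sum_{t\in\cT}\dim(\cA_t)\le{}dT'$ and through the schedule $\gamma_{t,m}=\min\crl{\sqrt d/\veps'_m,\,\sqrt{dT/(\rho_{t,m}\RegSquare)}}$, whose $T$ is the global one. Carrying $T'$ through inequality \eqref{eq:nice} and using $\sqrt{dT'\RegSquare}\cdot\sqrt{T'/T}\le\sqrt{dT\RegSquare}$ together with the monotonicity of $\RegSquare$, one obtains $\RegImp[m]\le\tfrac32\E[\sqrt{\rho_{T,m}}]\sqrt{dT\RegSquare}+\prn[\big]{(\veps'_m/\veps_\cT+\veps_\cT/\veps'_m)\sqrt d+2}\veps_\cT T'$---precisely \eqref{eq:impreg} with $T\mapsto T'$ and $\veps_T\mapsto\veps_\cT$ in the misspecification term while the leading coefficient remains $\sqrt{dT\RegSquare}$. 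Plugging this and the master bound into the decomposition \eqref{eq:corral_decomp} for the restricted run, the $\pm\tfrac32\E[\sqrt{\rho_{T,m^{\star}}}]\sqrt{dT\RegSquare}$ terms cancel exactly as in the proof of \pref{thm:main}---this cancellation persists because the master's $R$ and the base's leading coefficient both use the global $T$---and choosing $m^{\star}$ with $\veps'_{m^{\star}}\in[\veps_\cT,e\veps_\cT]$ (or $\veps'_{m^{\star}}=T^{-1}$ when $\veps_\cT$ is negligibly small) gives regret over $\cT$ of at most $\bigoh(\sqrt{dT\RegSquare\log T}+\veps_\cT\sqrt d\,T')$. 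Finally, since $T'\veps_\cT^2=\inf_{f\in\cF}\sum_{t\in\cT}\sup_{a\in\cA_t}(\ip{a,f(x_t)}-\mu(a,x_t))^2\le\inf_{f\in\cF}\sum_{t=1}^{T}\sup_{a\in\cA_t}(\ip{a,f(x_t)}-\mu(a,x_t))^2=T\veps_T(S)^2$, we get $\veps_\cT\sqrt d\,T'=\sqrt d\sqrt{T'}\cdot(\veps_\cT\sqrt{T'})\le\sqrt d\sqrt{T'}\cdot\veps_T(S)\sqrt T\le\veps_T(S)\sqrt d\,T$, so the bound collapses to the \pref{thm:main} bound $\bigoh(\sqrt{dT\RegSquare\log T}+\veps_T(S)\sqrt d\,T)$.

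The main obstacle is purely bookkeeping: one must check that freezing the learning rates at their global-$T$ settings while playing only $T'\le{}T$ rounds never blows up any term (this is where the $\min\crl{\cdot,\,\sqrt{\cdot/T}}$ form of the learning rates and the monotonicity of $\RegSquare$ get used) and that the negative-regret cancellation in the Corralling step goes through verbatim; once these are confirmed, the rest is a line-by-line reprise of the proofs of \pref{thm:impreg} and \pref{thm:main}.
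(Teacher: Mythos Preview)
Your proof sketch is correct but takes a different (and noticeably longer) route than the paper. The paper uses a clean black-box reduction: it pads the subsequence $S_\cT$ out to length exactly $T$ by appending $T-\abs{\cT}$ ``dummy'' rounds with trivial action set $\{0\}$ and a context $\mathfrak{x}$ chosen so that $\mu(0,\mathfrak{x})=0$ (enlarging $\cX$ by one point if need be). These dummy rounds contribute zero regret (there is only one action) and zero misspecification (since $\ip{0,f(\cdot)}=0$ for every $f$), so the padded sequence $\tilde S$ has $\veps_T(\tilde S)\le\veps_T(S)$, and \pref{thm:main} applies \emph{directly} to $\tilde S$ without reopening any proofs.

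Your approach---freezing all parameters at their global-$T$ values and replaying the proofs of \pref{thm:impreg} and \pref{thm:main} on $T'$ rounds, then using $T'\veps_\cT^2\le T\veps_T(S)^2$ to close the argument---reaches the same conclusion but requires verifying by hand that each ingredient (the oracle guarantee via zero-weight padding, the master's regret bound at a fixed learning rate, and the $\pm\tfrac32\E[\sqrt{\rho}]\sqrt{dT\RegSquare}$ cancellation) survives the horizon mismatch. The paper's padding trick sidesteps all of this bookkeeping; your route, on the other hand, makes explicit exactly which properties of the algorithm are being exploited, and would be the way to go if one did not want to (or could not) modify $\cX$.
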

\begin{proof}
We extend the sequence $S$ by adding an ``end'' sequence $E=(\{0\},\mathfrak{x})_{t=1}^T$, where $\mathfrak{x}\in\cX$ is picked such that $\mu(0,\mathfrak{x})=0$ (If there is no such context, we add a context with that property to $\cX$).
Let the extended sequence be $S'=S + E$ and consider the sequence $\tilde S = S'_{\cT\cup\{T+1,\dots,2T-|\cT|\}}$, which has length $T$.
The contribution to regret from playing on the $E$ section on the sequence is always 0, since there is
only one action. 
Furthermore $\veps_{T}(\tilde S)\leq \veps_T(S)$. Hence, by \cref{thm:main}, we have
\begin{align*}
    \E\left[\sum_{t\in\cT}\mu(a_t,x_t)-\min_{a\in\cA_t}\mu(a,x_t)\right]
    &=\E\left[\sum_{t\in\cT\cup\{T+1,\dots,2T-|\cT|\}}\mu(a_t,x_t)-\min_{a\in\cA_t}\mu(a,x_t)\right] \\
    &\leq\cO\prn*{\sqrt{d}\veps_T(S)T+\sqrt{d\RegSquare\log(T)}}\,.
\end{align*}
\end{proof}
Note that since $\dbar(\tilde S)\leq \dbar(S)$, this argument also applies to the refined version of the bound where $d$ is replaced by $\dbar$, as long as the algorithm's parameters are tuned accordingly.

\begin{proof}[Proof of \cref{thm:sparse}]
Let $\tau_1,\dots,\tau_L$ denote the times where the algorithm is
restarted, with $\tau_1=1$ and $\tau_{L+1}=T+1$ by convention. Since the adversary fixes the action sets in advance, these doubling times are deterministic.
The regret is given by
    \begin{align*}
    \textstyle\Reg &= \E\left[\sum_{t=1}^T\mu(a_t,x_t)-\min_{a\in\cA_t}\mu(a,x_t)\right]\\
    &\leq\sum_{i=1}^L\E\left[\sum_{t=\tau_{i}}^{\tau_{i+1}-1}\mu(a_t,x_t)-\min_{a\in\cA_t}\mu(a,x_t)\right]\,.
    \end{align*}
By applying
\cref{prop:anytime main thm} to each episode, we have
\begin{align*}
    \E\left[\sum_{t=\tau_{i-1}+1}^{\tau_i}\mu(a_t,x_t)-\min_{a\in\cA_t}\mu(a,x_t)\right]= \sqrt{2^i}\cdot\cO\prn*{\veps_T(S)T+\sqrt{\RegSquare\log(T)}}\,.
\end{align*}
Summing over these terms and observing that
\begin{align*}
    \sum_{i=1}^L 2^{i/2} = \cO(2^{L/2}) = \cO\prn*{1}\cdot\sqrt{\frac{1}{T}\sum_{t=1}^T\dim(\cA_t)}=\bigoh\prn*{\dbar^{1/2}}
\end{align*}
completes the proof.
\end{proof}

\section{Improved Master Algorithms for Bandit Aggregation}
\label{sec:hedged ftrl}

In this section we present a new family of algorithms that can be
used for the master algorithm within the framework of \pref{alg:master}.
For the remainder of this section, we work in a generic adversarial multi-armed bandit setting, at each time step, the agent selects an action $A_t\in [M]$, then observes a loss $\ell_{t,A_t}\in[0,L]$ for the action they selected. Compared to the log-barrier-based master algorithm used within the original \corral algorithm of \citet{ALNS17}, the algorithms we describe here are simpler to analyze, more flexible, and have improved logarithmic factors.

\subsection{Background and Motivation}
The \corral algorithm is a special case of \cref{alg:master} that
uses a bandit variant of the Online Mirror Descent (OMD) algorithm with
log-barrier regularization as the master.\footnote{Note that the use
  of the log-barrier in \corral is not related to our use of the
  log-barrier within the contextual bandit framework.}
The bandit variant of the OMD algorithm used within \corral is parameterized by a Legendre
potential $F(x) = \sum_{i=1}^d\eta_i^{-1}f(x_i)$ where
$\eta_1,\ldots,\eta_d$ are per-coordinate learning rates. It is initialized using the distribution $p_1 = \argmin_{p\in\Delta([M])}F(p)$. Then, at each time $t$, the bandit OMD algorithm samples an arm $A_t\sim p_t$,
observes the loss $\ell_{t,A_t}$, and constructs an unbiased importance-weighted loss
estimator $\hat\ell_t =
\frac{\ell_{t,A_t}}{p_{t,A_t}}\mathbf{e}_{A_t}$. It then updates the
action distribution via
\begin{align}
\label{eq:OMD update}
    \textstyle p_{t+1} = \argmin_{p\in\Delta([M])} \crl*{\ip{p,\hat\ell} + D_F(p,p_t)}\,,
\end{align}
where $D_F(x,y)\ldef{}F(x)-F(y)-\ip{x-y,\nabla F(y)}$ is the Bregman divergence associated with $F$. An important feature which leads to the guarantee for the \corral master is a time-dependent learning rate schedule for each of the per-arm learning rates, which increases the learning rate for each arm whenever the probability for that arm falls below a certain threshold.\footnote{For time-dependent learning rates, we replace $\eta$ by $\eta_t$ in the update rule of \cref{eq:OMD update}.}

Online Mirror Descent is closely related to the Follow-the-Regularized-Leader (FTRL) algorithm. In particular, for any sequence of loss vector estimates $(\hat\ell_t)_{t=1}^{T}$, there exists a sequence of (vector) biases $b_t$ such that FTRL running on the loss sequence $(\hat\ell_t-b_t)_{t=1}^{T}$ using the same learning rate as its OMD counterpart has an identical trajectory of plays $p_t$. We can view the \corral master through the lens of FTRL. In particular, the FTRL variant of the algorithm performs two steps whenever it increases the learning rate of arm $i$. First it subtracts a bias $b_{t, i}>0$ from the loss estimates for arm $i$. Then it increases the learning rate for that arm. We show that only the former step is actually required, while the latter is unnecessary. This motivates the \emph{$(\alpha,R)$-hedged FTRL}
algorithm, which achieves a slightly improved guarantee by removing
the per-coordinate learning rates.
\subsection{The Hedged FTRL Algorithm}
Following the intuition in the prequel, we present $(\alpha, R)$-hedged FTRL, a modified variant of the FTRL
algorithm with strong guarantees for aggregating bandit
algorithms. To do so, we first describe a basic bandit variant of FTRL algorithm.

The FTRL family of algorithms is parameterized by a potential $F$ and
learning rate $\eta > 0$. At each round $t$, the algorithm selects 
\begin{align*}
    \textstyle p_t = \argmin_{p\in\Delta([M])}\crl*{\ip{p,\hat L_{t-1}} + \eta^{-1}F(p)}\,,\quad\text{where}\quad\hat L_{t}=\sum_{s=1}^t\hat\ell_s\,.
\end{align*}
Two relevant properties of $F$ that arise in our analysis are \emph{stability} and \emph{diameter}.
Define
\begin{align*}
    \textstyle\bar F_\eta^*(-L) = \max_{p\in\Delta([M])}\crl*{\ip{p,- L} - \eta^{-1}F(p)}\,.
\end{align*}
The stability $\stab(F)$ and diameter $\diam(F)$ of $F$ for loss range $[0,L]$ are defined as follows:
\begin{align*}
    & \stab(F) =\sup_{\eta>0} \sup_{x\in\Delta([M])}\sup_{\ell\in[0,L]^M} \eta^{-1}\E_{A\sim x}\left[ D_{\bar F_\eta^*}\prn*{\eta^{-1}\nabla F(x)-\frac{\ell_{A}}{x_{A}}\mathbf{e}_{A},\eta^{-1}\nabla F(x)}\right]\,,\\
    & \diam(F) = \max_{p\in\Delta([M])} F(p)-\min_{p\in\Delta([M])} F(p)\,.
\end{align*}
Given a potential with bounded $\stab(F)$ and $\diam(F)$, setting
the learning rate as $\eta = \sqrt{\diam(F)/(\stab(F)T)}$ leads
to regret at most $2\sqrt{\stab(F)\diam(F)T}$ \citep{ALT15}.\footnote{\citet{ALT15} present this result slightly
  differently. See our proof of \cref{thm:hedged ftrl} with
  $R=0$ for an alternative.} Well-known algorithms that arise as
special cases of this result include:
\begin{itemize}
\item EXP3 \citep{ACFS02} is an instantiation of bandit FTRL with
  $F(x) = \sum_{i=1}^M x_i\log(x_i)$, $\diam(F) = \log(M)$ and
  $\stab(F) \leq \frac{L^2M}{2}$.
  \item Tsallis-INF
  \citep{AB09,ALT15,ZiSe18} is an instantiation of bandit FTRL that gives the best
  known regret bound for multi-armed bandits. It is given by $F(x)=-2\sum_{i=1}^M\sqrt{x_i}$,
  which has $\diam(F)\leq 2\sqrt{M}$ and $\stab(F)\leq L^2\sqrt{M}$.
\end{itemize}

We now present the $(\alpha, R)$-hedged FTRL algorithm. The
algorithm augments the basic bandit FTRL strategy using an additional pair of
parameters $\alpha\in(0,1)$, $R\in\R$. The algorithm begins by initializing a collection of parameters $(B_{0,i})_{i=1}^M$ with $B_{0,i}=\rho_{1,i}^\alpha R$.
At each step $t$, it plays $A_t\sim p_t$, then computes
\begin{align*}
    \textstyle\tilde p_{t+1} = \argmin_{p\in\Delta([M])}\ip{p,\hat L_{t}-(B_{t-1}-B_0)} + \eta^{-1}F(p)\,,\quad\text{where}\quad\hat L_{t}=\sum_{s=1}^t\hat\ell_s\,.
\end{align*}
If $\tilde p_{t+1, A_t}^{-\alpha}R \leq B_{t-1,A_t}$, the algorithm sets $B_t=B_{t-1}$ and $p_{t+1}=\tilde p_{t+1}$.
Otherwise it chooses the unique $b_t>0$, such that for $B_t = B_{t-1}+b_t\mathbf{e}_{A_t}$, the following properties hold simultaneously:
\begin{align*}
p_{t+1} = \argmin_{p\in\Delta([M])}\ip{p,\hat L_{t}-(B_{t}-B_0)} + \eta^{-1}F(p)\qquad
\mbox{and}
\qquad
p_{t+1,A_t}^{-\alpha}R = B_{t,A_t}\,.
\end{align*}

This algorithm is always well defined when the potential $F$ is symmetric; see \ifsup \cref{app:hedged
  ftrl} \else the supplementary material \fi for details. Letting $\rho_{t,i} = \max_{s\leq t}p_{s,i}^{-1}$, the main
regret guarantee is as follows. 
\begin{theorem}
\label{thm:hedged ftrl}
For any potential $F$ with $\stab(F), \diam(F) < \infty$, the pseudo-regret $\Reg[M]=\E\left[\sum_{t=1}^T\ell_{t,A_t}-\ell_{t,a^*}\right]$ for $(\alpha,R)$-hedged FTRL with learning rate $\eta =\sqrt{\diam(F)/(\stab(F)T)}$ is bounded by
\begin{equation*}
    \Reg[M]\leq 2\sqrt{\stab(F)\diam(F)T}+ \left[\frac{\alpha}{1-\alpha}\,\sum_{i=1}^M\left( \rho_{1,i}^{\alpha-1} -  \E[\rho_{T,i}^{\alpha-1}]\right)+ \rho_{1,a^*}^{\alpha}- \E[\rho_{T,a^*}^{\alpha}]\right]\cdot{}R\,
\end{equation*}
for all arms $a^*\in[M]$.
\end{theorem}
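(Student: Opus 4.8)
The plan is to first observe that, with the convention $b_t=0$ on rounds where no hedging step is taken, the iterate $p_{t+1}$ is exactly the FTRL update with regularizer $\eta^{-1}F$ on the modified loss sequence $g_t\ldef\hat\ell_t-b_t\mate_{A_t}$: indeed $B_t-B_0=\sum_{s\le t}b_s\mate_{A_s}$, so $\hat L_t-(B_t-B_0)=\sum_{s\le t}g_s$, and $p_1=\argmin_{p}\eta^{-1}F(p)$. A short induction on $t$, using the hedging rule together with the monotonicity of the FTRL map (for separable $F$: adding nonnegative cumulative loss to a coordinate can only decrease that coordinate's probability and only increase the others'), establishes the invariant $B_{t,i}=\rho_{t+1,i}^{\alpha}R$ for all $t,i$ (base case $B_{0,i}=\rho_{1,i}^\alpha R$). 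Telescoping the increments over the rounds on which arm $i$ is played then gives $\sum_{t:\,A_t=i}b_t=(\rho_{T,i}^{\alpha}-\rho_{1,i}^{\alpha})R$, and in particular each $b_t=(\rho_{t+1,A_t}^\alpha-\rho_{t,A_t}^\alpha)R\ge 0$.

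\textbf{Regret decomposition.} Since $\hat\ell_t$ is an unbiased estimator of the true loss vector, $\Reg[M]=\E[\sum_t\ip{p_t-\mate_{a^*},\hat\ell_t}]$, and writing $\hat\ell_t=g_t+b_t\mate_{A_t}$ splits this as
\[
\Reg[M]=\E\brk*{\textstyle\sum_t\ip{p_t-\mate_{a^*},g_t}}+\E\brk*{\textstyle\sum_tb_tp_{t,A_t}}-\E\brk*{\textstyle\sum_tb_t\indic\crl{A_t=a^*}}.
\]
The last term, by the telescoping identity above, equals $(\rho_{T,a^*}^\alpha-\rho_{1,a^*}^\alpha)R$; its negative is exactly the $(\rho_{1,a^*}^\alpha-\E[\rho_{T,a^*}^\alpha])R$ contribution in the claimed bound. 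So it remains to control the first (FTRL) term and the second (bias-cost) term.

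\textbf{The FTRL term.} I would next show that any hedging step at round $t$ \emph{strictly} decreases $p_{\cdot,A_t}$: the invariant forces $p_{t+1,A_t}=\rho_{t+1,A_t}^{-1}<\rho_{t,A_t}^{-1}=\min_{s\le t}p_{s,A_t}\le p_{t,A_t}$, and by monotonicity this can only happen when the net injected loss on coordinate $A_t$, namely $g_{t,A_t}=\hat\ell_{t,A_t}-b_t$, is nonnegative. Hence at every round $g_t$ is a nonnegative multiple of $\mate_{A_t}$ no larger than $\hat\ell_t$ coordinatewise; since the one-step FTRL stability term $D_{\bar F_\eta^*}\prn*{\eta^{-1}\nabla F(p_t)-g_t,\,\eta^{-1}\nabla F(p_t)}$ is convex and minimized (with zero gradient) at $0$ along the ray through $\mate_{A_t}$, it is at most its value at $\hat\ell_t$, whose expectation under $A_t\sim p_t$ is at most $\eta\stab(F)$ by the definition of $\stab(F)$. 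Combined with $F(\mate_{a^*})-F(p_1)\le\diam(F)$ and $\eta=\sqrt{\diam(F)/(\stab(F)T)}$, the standard FTRL analysis (as in \citet{ALT15}) yields $\E[\sum_t\ip{p_t-\mate_{a^*},g_t}]\le\eta^{-1}\diam(F)+\eta\stab(F)T=2\sqrt{\stab(F)\diam(F)T}$. Taking $R=0$ forces every $b_t=0$ and recovers the classical guarantee, as promised in the footnote.

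\textbf{The bias-cost term, and the main obstacle.} The crux is to bound $\E[\sum_tb_tp_{t,A_t}]$ by $\frac{\alpha}{1-\alpha}\sum_i(\rho_{1,i}^{\alpha-1}-\E[\rho_{T,i}^{\alpha-1}])R$. The key point I would establish is that a hedge at round $t$ can only be triggered when $p_{t,A_t}$ is essentially equal to its running minimum $\rho_{t,A_t}^{-1}$: a hedge needs the update to push $p_{\cdot,A_t}$ strictly below every past value $\crl{p_{s,A_t}}_{s\le t}$, and the one-step stability estimate from the previous paragraph --- together with $b_t\le\hat\ell_{t,A_t}\le L/p_{t,A_t}$ --- forbids such a drop unless $p_{t,A_t}$ is already near $\rho_{t,A_t}^{-1}$. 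Substituting this into $b_tp_{t,A_t}=(\rho_{t+1,A_t}^\alpha-\rho_{t,A_t}^\alpha)R\,p_{t,A_t}\lesssim\int_{\rho_{t,A_t}}^{\rho_{t+1,A_t}}\alpha v^{\alpha-2}\,dv\cdot R$ and telescoping over $\crl{t:A_t=i}$ yields the desired per-arm bound; summing over $i$ and assembling the three estimates via the decomposition finishes the proof. The delicate part is precisely this last step: the naive bounds $b_tp_{t,A_t}\le L$ and $\sum_{t:A_t=i}b_t=(\rho_{T,i}^\alpha-\rho_{1,i}^\alpha)R$ are both far too weak, since the target must not grow with $\rho_{T,i}$, so one genuinely needs to couple the hedging rule to the FTRL stability and carry this out for a general potential $F$, not merely the Tsallis entropy.
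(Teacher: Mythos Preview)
Your overall strategy---view the algorithm as FTRL on biased losses $g_t=\hat\ell_t-b_t\mate_{A_t}$, split the regret into an FTRL part and two bias parts, and telescope $\sum_{t:A_t=i}b_t=(\rho_{T,i}^\alpha-\rho_{1,i}^\alpha)R$---is close in spirit to the paper's, and your handling of the third term and of the FTRL stability with $g_t$ is correct. The genuine gap is the bias-cost term $\E[\sum_t b_t\,p_{t,A_t}]$. You propose to show that hedging can only be triggered when $p_{t,A_t}$ is ``essentially'' the running minimum $\rho_{t,A_t}^{-1}$, appealing to one-step stability. But nothing in the algorithm forces $p_{t,A_t}$ to be near its running minimum: $p_{t,A_t}$ may be well above $\rho_{t,A_t}^{-1}$, and a single importance-weighted loss of size $L/p_{t,A_t}$ can still push $\tilde p_{t+1,A_t}$ below $\rho_{t,A_t}^{-1}$ and trigger a hedge. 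Turning the Bregman-divergence bound in $\stab(F)$ into a multiplicative control $p_{t,A_t}\le C\rho_{t,A_t}^{-1}$ is not straightforward for a general potential, and even if it worked it would put a constant $C>1$ in front of the $\frac{\alpha}{1-\alpha}$ term, so you would not recover the theorem as stated.

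The paper avoids this difficulty by \emph{not} using FTRL as a black box on $g_t$. It writes $\ip{p_t,\hat\ell_t}$ via the dual identity and decomposes into a stability term $D_{\bar F_\eta^*}(-\hat L_t+\tilde B_{t-1},\,-\hat L_{t-1}+\tilde B_{t-1})$ that involves the \emph{raw} loss $\hat\ell_t$ (so $\stab(F)$ applies directly), together with a telescoping remainder. Bounding $\bar F_\eta^*(-\hat L_t+\tilde B_t)$ and $-\bar F_\eta^*(-\hat L_t+\tilde B_{t-1})$ from above by their values at $p_{t+1}$ yields $\sum_t\ip{p_{t+1},B_t-B_{t-1}}=\sum_t b_t\,p_{t+1,A_t}$, with $p_{t+1}$ rather than $p_t$. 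The point is that the hedging rule is \emph{defined} so that whenever $b_t>0$ one has $p_{t+1,A_t}^{-\alpha}R=B_{t,A_t}$, i.e.\ $p_{t+1,A_t}=\rho_{t+1,A_t}^{-1}$ exactly; the integral estimate $\rho_{t+1}^{-1}(\rho_{t+1}^\alpha-\rho_t^\alpha)\le\frac{\alpha}{1-\alpha}(\rho_t^{\alpha-1}-\rho_{t+1}^{\alpha-1})$ then gives the claimed coefficient with no slack and no appeal to stability. In short, your ``delicate step'' disappears once the FTRL analysis is re-sliced so that the bias cost carries $p_{t+1}$ instead of $p_t$.
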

This algorithm may be viewed ``hedging'' against the event that the arm
$a^*$ experiences a very small probability, as this leads to a negative regret contribution proportional to $\rho_{T,a^*}^{-\alpha}R$.
\subsection{Proofs}
\label{app:hedged ftrl}
Before proving the main result, we first establish that the
$(\alpha,R)$-hedged FTRL strategy as described is in fact well-defined. Recall that the algorithm initializes with $B_0$ such that $\nabla \bar F^*_\eta(B_0)_i^{-\alpha}R=B_{0,i}$.
For symmetric potentials $F(x)=\sum_{i=1}^Mf(x_i)$, $\nabla \bar F^*_\eta(c\mathbf{1}_M)=\frac{1}{M}\mathbf{1}_M$ for all $c\in\R$. Hence $B_0=M^{-\alpha}R\mathbf{1}_M$ satisfies the initialization condition.
Otherwise a
solution exists by the observation that $\nabla \bar
F^*_\eta(B_0)_i^{-\alpha}R$ is a continuous, decreasing function in
$B_{0,i}$ that has positive values at $B_{0}=0$. Hence a solution to the equation must
exist. 

The same argument holds during the update at subsequent rounds $t$. Only the arm that was played can decrease in probability, which means we only need to ensure that $\rho_{t+1,A_t}^{\alpha}R = B_{t,A_t}$. The left-hand side is continuously decreasing with increasing $b_t$, while the right-hand side is increasing.
Hence, the optimal value must exist, is unique, and lies in $[0,\hat\ell_{t,A_t}]$. 

\begin{proof}[Proof of \cref{thm:hedged ftrl}]
We follow the standard FTRL analysis. Let $\tilde B_t=B_t-B_0$ and
note that $p_t = \nabla \bar F^*_\eta(-\hat L_{t-1}+\tilde B_{t-1})$,
so $\ip{p_t,\hat\ell_t} = \ip{\nabla \bar F^*_\eta(-\hat
  L_{t-1}+\tilde B_{t-1}), \hat L_t-\hat L_{t-1}}$. Hence, we can write
\begin{align*}
    &\E\left[\sum_{t=1}^T\ell_{t,A_t}-\ell_{t,a^*}\right] =
    \E\left[\sum_{t=1}^T\ip{p_t,\hat\ell_{t}}-\hat\ell_{t,a^*}\right] \\
    &=\E\left[\sum_{t=1}^T D_{\bar F^*_\eta}(-\hat L_t+\tilde B_{t-1},-\hat L_{t-1}+\tilde B_{t-1})\right]\\
    &\qquad\qquad +\E\left[\sum_{t=1}^T\left(-\bar F^*_\eta(-\hat L_t+\tilde B_{t-1})+\bar F^*_\eta(-\hat L_{t-1}+\tilde B_{t-1})-\hat\ell_{t,a^*}\right)\right].
\end{align*}
Note that there exists $\lambda$ such that $-\hat L_{t-1}+\tilde B_{t-1}=\lambda\mathbf{1}_M + \eta^{-1}\nabla F(p_t)$. Furthermore, adding or subtracting the same $\lambda \mathbf{1}_M$ term to both arguments does not change the value of the Bregman divergence, because $\bar F_\eta(-L+\lambda \mathbf{1}_M) = F_\eta(-L)+\lambda$. Thus,
\begin{align*}
    &\E\left[\sum_{t=1}^T D_{\bar F^*_\eta}(-\hat L_t+\tilde B_{t-1},-\hat L_{t-1}+\tilde B_{t-1})\right] \\
    &=\eta \E\left[\sum_{t=1}^T \eta^{-1}D_{\bar F^*_\eta}(\eta^{-1}\nabla F(p_t)-\hat\ell_t,\eta^{-1}\nabla F(p_t))\right] \leq \eta \stab(F)T\,.
\end{align*}
Rearranging the second term gives
\begin{align*}
    &\sum_{t=1}^T\left(-\bar F^*_\eta(-\hat L_t+\tilde B_{t-1})+\bar F^*_\eta(-\hat L_{t-1}+\tilde B_{t-1})-\hat\ell_{t,a^*}\right)\\
    &=\bar F^*_\eta(0)-\bar F^*_\eta(-\hat L_T+\tilde B_{T-1})-\hat L_{T,a^*} +\sum_{t=1}^{T-1}\bar F^*_\eta(-\hat L_t+\tilde B_t)-\bar F^*_\eta(-\hat L_t+\tilde B_{t-1})\,.
\end{align*}
Note that $\bar F^*_\eta(-\hat L_t+\tilde B_t) = \ip{p_{t+1},-\hat L_t+\tilde B_t} +\eta^{-1}F(p_{t+1})$.
Furthermore we have the bounds
\begin{align*}
    &-\bar F^*_\eta(-\hat L_T+\tilde B_{T-1}) \leq -\left( \ip{\mathbf{e}_{a^*},-\hat L_T+\tilde B_{T-1}}-\eta^{-1}F(\mathbf{e}_{a^*}\right),\intertext{and}
    &-\bar F^*_\eta(-\hat L_t+\tilde B_{t-1}) \leq -\left( \ip{p_{t+1},-\hat L_t+\tilde B_{t-1}}-\eta^{-1}F(p_{t+1}\right).\\
\end{align*}
Plugging these inequalities in above leads to
\begin{align*}
    &\bar F^*_\eta(0)-\bar F^*_\eta(-\hat L_T+\tilde B_{T-1})-\hat L_{T,a^*} +\sum_{t=1}^{T-1}\bar F^*_\eta(-\hat L_t+\tilde B_t)-\bar F^*_\eta(-\hat L_t+\tilde B_{t-1})\\
    &\leq \frac{F(\mathbf{e}_{a^*})-F(p_1)}{\eta}-\tilde B_{T-1,a^*}+\sum_{t=1}^{T-1}\ip{p_{t+1},\tilde B_{t}-\tilde B_{t-1}}\\
    &\leq (\rho_{1,a^*}^\alpha-\rho_{T,a^*}^{\alpha})R +\frac{\diam(F)}{\eta}+\sum_{t=1}^{T-1}\ip{p_{t+1},B_{t}-B_{t-1}}\,.
\end{align*}
To bound the final sum, note that for each coordinate $i$, the difference
$B_{t,i}-B_{t-1,i}$ can be non-zero only if $p_{t+1,i}$ satisfies
$p_{t+1,i}=\rho_{t+1,i}^{-1}$. It follows that
\begin{align*}
    p_{t+1,i}(B_{t,i}-B_{t-1,i}) &= R\rho_{t+1,i}^{-1}\left(\rho_{t+1,i}^\alpha - \rho_{t,i}^\alpha\right)\\
    &= \alpha R\int_{\rho_{t,i}}^{\rho_{t+1,i}}x^{\alpha-1}\rho_{t+1}^{-1}\,dx\\
    &\leq\alpha R\int_{\rho_{t,i}}^{\rho_{t+1,i}}x^{\alpha-2}\,dx\\
    &=\frac{\alpha R}{1-\alpha}(\rho_{t,i}^{\alpha-1}-\rho_{t+1,i}^{\alpha-1})~.
\end{align*}
Applying this bound to each coordinate, we have
\begin{align*}
    \sum_{t=1}^{T-1}\ip{p_{t+1},B_{t}-B_{t-1}} = \sum_{i=1}^M \frac{\alpha R}{1-\alpha} \left(\rho_{1,i}^{\alpha-1}-\rho_{T,i}^{\alpha-1}\right)=\sum_{i=1}^M \frac{\alpha R}{1-\alpha} \left(\rho_{1,i}^{\alpha-1}-\rho_{T,i}^{\alpha-1}\right)\,.
\end{align*}
Combining all of the bounds above concludes the proof.
\end{proof}

\begin{proof}[Proof of \cref{cor:Tsallis-INF}]
Recall that the Tsallis regularizer is given by
\begin{align*}
    F(x) = -\sum_{i=1}^M 2\sqrt{x_i}\,.
\end{align*}
For loss range $[0,L]$, the regularizer has stability at most
$L^2\sqrt{M}$ and diameter at most $\sqrt{M}$ \citep{ZiSe18}.\footnote{\citet{ZiSe18} show this for $L=1$, but the extension
  to general $L$ is trivial.}
Furthermore, since the potential is symmetric, we have $\forall i:\,p_{1,i}= 1/M$. 
Using \cref{thm:hedged ftrl} with the loss range $[0,2]$ leads to
\begin{align*}
    \Reg[M] &\leq 4\sqrt{2MT} + \left[\frac{\alpha}{1-\alpha}\sum_{i=1}^M(M^{\alpha-1}-\E[\rho_{T,i}^{\alpha-1}])+M^\alpha-\E[\rho_{T,m^*}^\alpha]\right]R\\
    &\leq 4\sqrt{2MT} + \left[\frac{\alpha}{1-\alpha}M^\alpha\left(1-M^{1-\alpha}\min_{j\in[M]}\E[\rho_{T,j}^{\alpha-1}]\right)+M^\alpha-\E[\rho_{T,m^*}^\alpha]\right]R\,.
\end{align*}
Dropping the negative
$-M^{1-\alpha}\min_{j\in[M]}\E[\rho_{T,j}^{\alpha-1}]$ term above leads to the
first part of the $\min\crl{\cdot}$ expression in
\pref{eq:tsallis_regret}. For the other term in the $\min\crl{\cdot}$,
note that the function
\begin{align*}
\alpha\mapsto    \frac{\alpha}{1-\alpha}\left(1-z^{\alpha-1}\right)
\end{align*}
is monotonically increasing in $\alpha$, with
\begin{align*}
    \lim_{\alpha\rightarrow 1}\frac{\alpha}{1-\alpha}\left(1-z^{\alpha-1}\right)=\log(z)\,.
\end{align*}
Bounding $\log(\max_{j\in[M]}\E[\rho_{T,j}]/M)+1$ by
$2\log(\max_{j\in[M]}\E[\rho_{T,j}])$ using that $\rho_{1,i}=M$ completes the proof.
\end{proof}

\ignore{
\todocg{is the following section ever referenced in the main body ?}
\todoj{we decide later whether we reference to it. I'll keep it here until then.}
\subsection{Alternative reduction of hedged FTRL}
Assume we have access to an adversarial bandit algorithm with the following methods:
sample(), advance($a$,$\ell$) and peek($a$,$\ell$), where sample() returns the next choice of the algorithm, advance($a,\ell)$ advances the state of the algorithm with a pair of selected arm $a$ and observed loss $\ell$ and peek returns the next distribution $p$ from which the algorithm would sample after advancing by $a,\ell$, without actually advancing its state.
\begin{assumption}
\label{ass:base}
The adversarial bandit algorithm satisfies for all trajectories
\begin{enumerate}
    \item only arm $A_t$ can obtain a new minimal probability $p_{t+1,A_t}^{-1}$ and only if $\ell_t > 0$,
    \item the probability $p_{t+1,A_t}$ is continuous in $\ell_t$,
    \item there exists $c>0$ such that $p_{t,i}\leq cp_{t+1,i}$.
\end{enumerate}
\end{assumption}
\begin{example}
For FTRL with constant learning rate (I believe non-increasing works with these potentials too) and losses in [0,L], \logbarrier{} satisfies this with $c=1+\eta L$, Tsallis-INF with action set restricted to $\{p\in\Delta([M]) \mid \min_ip_i\geq1/\eta^2\}$ satisfies this with $c= (1+L)^2$ and EXP3 with action set restricted to $\{p\in\Delta([M]) \mid \min_ip_i\geq1/\eta\}$ satisfies this with $c= \exp(L)$.
\end{example}
The new algorithm is given by

\begin{algorithm}[H]
\caption{$(\alpha,R)$-Hedge adv. Bandits}
\label{alg:alt}
\DontPrintSemicolon
\LinesNumberedHidden
\KwIn{FTRL}
\KwInit{$\hat B_{0,i} = p_{1,i}^{-\alpha}R$ }  \;
\For{$t= 1,\dots,T$}{
Receive $A_t= sample()$\; 
Play $A_t$ and observe $\ell_{t,A_t}$\;
Get $\tilde{p}_{t+1}=peek(A_t,\ell_{t,A_t})$\;
\If{$\tilde{p}_{t+1,A_t}^{-1}\geq \rho_{t,A_t}$}{
    Select $b_{t,A_t}\in[0,\ell_{t,A_t}]$ by \cref{eq:ftrl bias}\;}
\Else{
    Select $b_{t,A_t}=0$\;
}
Update $\hat B_{t+1}=\hat B_t+\frac{b_{t,A_t}}{p_{t,A_t}}\mathbf{e}_{A_t}$\;
advance($A_t, \ell_{t,A_t}-b_{t,A_t}$)\;
}
\end{algorithm}

\begin{theorem}
For base algorithm that satisfy \cref{ass:base} and that for any sequence of adaptive losses $\tilde\ell_{t}\in[0,L]^M$ obtains the pseudo-regret bound
\begin{align*}
    \E\left[\sum_{t=1}^T\tilde\ell_{t,A_t}-\tilde\ell_{t,a^*}\right] \leq \Reg[Base]\,,
\end{align*}
the regret of $(\alpha,R)$-Hedged Bandits is bounded by
\begin{align*}
    \Reg[H] \leq \Reg[Base] +c\sum_{i=1}^M\left( \frac{p_{1,i}^{1-\alpha}}{1-\alpha} -  \frac{\alpha\E[\rho_{T,i}^{\alpha-1}]}{1-\alpha}\right)R - \E[\rho_{Ta^*}^{-\alpha}]R\,.
\end{align*}
\end{theorem}
\begin{proof}
Part 1 and 2 from \cref{ass:base} ensure that if $\tilde p_{t+1}^{-1} \geq \rho_{t,A_t}$, there exists $b_{t,A_t}\in[0,\ell_{t,A_t}]$ such that
\begin{align}
\label{eq:ftrl bias}
p_{t+1,A_t}^{-\alpha}R = \hat B_{t-1,A_t}+\frac{b_{t,A_t}}{p_{t,A_t}}
\end{align}
Let $b_t$ be the vector such that $b_{t,i}$ is the value the algorithm would calculate if $A_t=i$. By convention $b_T=0$.
The sequence $\ell_t-b_t$ is guaranteed to be in $[0,L]^M$ and a valid loss sequence of an adaptive adversary. Hence we have
\begin{align*}
    \E\left[\sum_{t=1}^T\ell_{t,A_t}-\ell_{t,a^*}\right]&=\E\left[\sum_{t=1}^T(\ell_{t,A_t}-b_{t,A_t})-(\ell_{t,a^*}-b_{t,a^*})\right]+\left[\sum_{t=1}^Tb_{t,A_t}-b_{t,a^*}\right]\\
    &\leq \Reg[Base] +\E[\sum_{t=1}^{T-1}\ip{p_t,b_t} - \hat B_{T-1,a^*}+\hat B_{0,a^*}]\\\\
    &\leq \Reg[Base] +c\E[\sum_{t=1}^{T-1}\ip{p_{t+1},\hat B_{t+1}-\hat B_t} - (\rho_{T,a^*}^\alpha-\rho_{0,a^*}^\alpha)  R]\\
\end{align*}
Regarding the middle term, note that in each coordinate, $\hat B_{ti}-\hat B_{t-1,i}$ is only non-zero if $p_{t+1,i}$ is the new minimum $p_{t+1,i}=\rho_{t+1,i}^{-1}$.
Therefore we have
\begin{align*}
    &p_{t+1,i}(\hat B_{ti}-\hat B_{t-1,i}) = R\rho_{t+1,i}^{-1}\left(\rho_{t+1,i}^\alpha - \rho_{t,i}^\alpha\right)\\
    =& \alpha R\int_{\rho_{t,i}}^{\rho_{t+1,i}}x^{\alpha-1}\rho_{t+1}^{-1}\,dx\leq\alpha R\int_{\rho_{t,i}}^{\rho_{t+1,i}}x^{\alpha-2}\,dx=\frac{\alpha R}{1-\alpha}(\rho_{t,i}^{\alpha-1}-\rho_{t+1,i}^{\alpha-1})
\end{align*}
Using these for every coordinate, we get
\begin{align*}
    \sum_{t=1}^{T-1}\ip{p_{t+1},B_{t}-B_{t-1}} = \sum_{i=1}^M \frac{\alpha R}{1-\alpha} \left(\rho_{1,i}^{\alpha-1}-\rho_{T,i}^{\alpha-1}\right)=\sum_{i=1}^M \frac{\alpha R}{1-\alpha} \left(\rho_{1,i}^{\alpha-1}-\rho_{T,i}^{\alpha-1}\right)\,.
\end{align*}
Combining everything concludes the proof.
\end{proof}
}

\section{\neurips{Approximation }Algorithms for the Log-Determinant
    Barrier Problem}
\label{app:logdet solver}
Recall that at each step, \infalg (\pref{alg:logdet}) samples from the $\logdetbarrier(\hat\theta,\gamma;\cA)$ distribution, which we define as any (not necessarily unique) distribution in the set
\begin{align}
\label{eq:logdetbarrier}
    \pstar \in \argmin_{p\in\Delta(\cA)}\gamma\ip{\bar a_p,\hat\theta} - \frac{1}{\gamma}\log\det\left(H_p-\bar a_p\bar a_p^{\trn}\right)\,,
\end{align}
where $\bar a_p = \E_{a\sim p}[a]$ and $H_p = \E_{a\sim
  p}[aa^{\trn}]$. In this section, we develop optimization
algorithms to efficiently find approximate solutions to the problem
\pref{eq:logdetbarrier}. Our main result here is to prove \pref{prop:logdet_efficient} as a consequence of a more general result, \pref{thm:fw}.

While \pref{eq:logdetbarrier} is a convex optimization problem,
developing efficient algorithms presents a number of technical
difficulties. First, the optimization problem is non-smooth due to the
presence of the log-determinant function, which prevents us from
applying standard first-order methods such as gradient descent out of
the box. Second, representing distributions in $\Delta(\cA)$ naively
requires $\Omega(\abs*{\cA})$ memory. To get the result in
\pref{prop:logdet_efficient}, we employ a specialized Frank-Wolfe-type method,
which maintains a sparse distribution and requires only
$\bigoh(\log\abs*{\cA})$ memory.

As a first step toward solving \pref{eq:logdetbarrier} numerically, we move to an
equivalent but slightly more convenient formulation which lifts the actions to $d+1$
dimensions. Define the \emph{lifting} operator, which adds a new coordinate with 1 to each
vector, by
\begin{align*}
  \tilde a \ldef \left(
  \begin{array}{l}
    a\\1
  \end{array}\right),
\end{align*}
and define
\begin{align*}
   \tilde a_p \ldef \E_{a\sim p}[\tilde a],\quad
  \tilde H_p \ldef \E_{a\sim p}\brk*{\tilde a\tilde a^{\trn}},\quad
    \tilde \theta \ldef \left(
  \begin{array}{l}
    \thetahat\\0
  \end{array}
\right),
  \quad\text{and}\quad
    \tilde d \ldef d+1\,.
\end{align*}
Finally, define
\begin{align}
  \label{eq:Gp}
G(p) = \ip{\tilde a_p,\tilde\theta} - \frac{1}{\gamma}\log\det(\tilde H_p).
\end{align}
\begin{proposition}
  \label{prop:lifted}
The set of solutions for the lifted problem
\begin{align}
\label{eq:enhanced}
\argmin_{p\in\Delta(\cA)} G(p)=\argmin_{p\in\Delta(\cA)} \ip{\tilde a_p,\tilde\theta} - \frac{1}{\gamma}\log\det(\tilde H_p)\,,
\end{align}
is identical to the set of solutions for  \cref{eq:logdetbarrier}, and vice-versa.
\end{proposition}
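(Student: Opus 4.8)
The plan is to prove the stronger statement that the two objectives agree \emph{pointwise} on $\Delta(\cA)$: for every $p\in\Delta(\cA)$,
$G(p) = \ip{\tilde a_p,\tilde\theta} - \gamma^{-1}\log\det(\tilde H_p)$ equals the $\logdetbarrier(\hat\theta,\gamma;\cA)$ objective $\ip{\bar a_p,\hat\theta} - \gamma^{-1}\log\det(H_p - \bar a_p\bar a_p^{\trn})$ of \pref{eq:logdet_barrier}. Two functions that agree everywhere on $\Delta(\cA)$ (and, in particular, take the value $+\infty$ on the same set) have the same set of minimizers, so, together with the fact that $a\mapsto\tilde a$ is a bijection $\cA\to\tilde\cA$ inducing a bijection $\Delta(\cA)\to\Delta(\tilde\cA)$, this gives the claimed correspondence of solution sets immediately.

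There are two terms to match. The linear term is trivial: since $\tilde a = (a,1)$ and $\tilde\theta = (\hat\theta,0)$, we have $\ip{\tilde a,\tilde\theta} = \ip{a,\hat\theta}$ for every $a$, and averaging under $p$ gives $\ip{\tilde a_p,\tilde\theta} = \ip{\bar a_p,\hat\theta}$. For the log-determinant term I would use the block form $\tilde H_p = \E_{a\sim p}\brk*{\tilde a\tilde a^{\trn}} = \begin{pmatrix} H_p & \bar a_p \\ \bar a_p^{\trn} & 1\end{pmatrix}$ together with the congruence by $M\ldef\begin{pmatrix} I & -\bar a_p \\ 0 & 1\end{pmatrix}$, which satisfies $\det M = 1$ and $M\tilde H_p M^{\trn} = \begin{pmatrix} H_p - \bar a_p\bar a_p^{\trn} & 0 \\ 0 & 1\end{pmatrix}$. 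This is precisely the Schur-complement identity; it yields both $\det(\tilde H_p) = \det(H_p - \bar a_p\bar a_p^{\trn})$ and the equivalence $\tilde H_p \succ 0 \iff H_p - \bar a_p\bar a_p^{\trn}\succ 0$ (so the two $\log\det$ terms are $-\infty$ on the same set of $p$, i.e.\ $G$ and the $\logdetbarrier$ objective are $+\infty$ simultaneously).

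The one point requiring care is the determinant convention when $\dim(\cA)<d$, where $\det(\cdot)$ denotes a product of top eigenvalues rather than the ordinary determinant and $M$ is no longer eigenvalue-preserving. I would dispatch this exactly as in the proof of \pref{lem:logdet strong}: fix an anchor $\mfa\in\cA$, pass to the projection $P$ onto $\spann(\cA-\mfa)$, and reuse the computation carried out there, which shows $\ip{\bar a_p,\hat\theta}$ and $\log\det(H_p - \bar a_p\bar a_p^{\trn})$ are preserved (the former up to the $p$-independent constant $\ip{\mfa,\hat\theta}$) when the problem is transported into $\spann(\cA-\mfa)$; the identical bookkeeping applies verbatim to $\ip{\tilde a_p,\tilde\theta}$ and $\log\det(\tilde H_p)$. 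This reduces the whole statement to the full-dimensional case $\dim(\cA)=d$, in which $\det(\cdot)$ is the ordinary determinant and the Schur-complement computation above applies directly, both sides of $\det(\tilde H_p)=\det(H_p-\bar a_p\bar a_p^{\trn})$ vanishing exactly at degenerate $p$. The main obstacle is therefore entirely this bookkeeping across the lift and in the degenerate regime; the algebraic core — the block-matrix identity — is routine, and once the objectives are seen to coincide term by term the equality of the two optimization problems follows at once.
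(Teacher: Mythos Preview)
Your proposal is correct and takes a genuinely different, simpler route than the paper. You prove the stronger claim that the two objectives coincide pointwise on $\Delta(\cA)$: the linear terms match since $\tilde\theta$ has a zero in the last coordinate, and the block-determinant (Schur complement) identity gives $\det(\tilde H_p)=\det(H_p-\bar a_p\bar a_p^{\trn})$, so the log-determinant terms match as well. Equal functions have equal minimizers, and you are done. The paper instead argues at the level of first-order optimality: it writes down the KKT conditions for the lifted problem, computes $\tilde H_p^{-1}$ via the Schur complement, derives the norm identity $\nrm{\tilde a}_{\tilde H_p^{-1}}^{2}=\nrm{a-\bar a_p}_{(H_p-\bar a_p\bar a_p^{\trn})^{-1}}^{2}+1$, and checks that the resulting stationarity conditions coincide with those characterizing solutions of \pref{eq:logdetbarrier}. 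Your argument is shorter and proves more; the paper's detour pays off later, since the norm identity \pref{eq:norm identity} it extracts is reused in the proof of \pref{lem:rounding_to_regret}. You are also slightly more careful than the paper in explicitly flagging the $\dim(\cA)<d$ case and reducing it via the projection from the proof of \pref{lem:logdet strong}.
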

\begin{proof}
By \cref{lem:logdet bound1},
 any solution $\pstar$ to
\cref{eq:logdetbarrier} must satisfy the optimality condition
\begin{align*}
    \forall a\in\cA\colon \ip{\bar a_{\pstar}-a,\hat\theta}+\frac{1}{\gamma}\norm{\bar a_{\pstar}-a}^2_{(H_{\pstar}-\bar a_{\pstar}\bar a_{\pstar}^{\trn})^{-1}}\leq \frac{d}{\gamma}\,.
\end{align*}
Now, let $\pstar$ be a minimizer for the optimization problem in
\eqref{eq:enhanced}. 
By first order optimality, we have
\begin{align*}
    \forall p'\in\Delta(\cA)\colon \sum_{a\in\supp(\pstar)\cup\supp(p')}(p'_a-\pstar_a)\prn*{\ip{\tilde a,\tilde \theta} - \frac{1}{\gamma}\norm{\tilde a}^{2}_{\tilde H_{\pstar}^{-1}}}\geq 0\,.
\end{align*}
By the K.K.T. conditions, this condition holds if and only if there exists $\lambda \in \R$ such that
\begin{align}
    \forall a\in\supp(\pstar)\colon &\ip{\tilde a,\tilde \theta} - \frac{1}{\gamma}\norm{\tilde a}^{2}_{\tilde H_{\pstar}^{-1}} = \lambda\label{eq:logdet_opt1}\intertext{and}
    \forall a\in\cA\colon &\ip{\tilde a,\tilde \theta} - \frac{1}{\gamma}\norm{\tilde a}^{2}_{\tilde H_{\pstar}^{-1}} \geq \lambda\,.\label{eq:logdet_opt2}
\end{align}
Note that \pref{eq:logdet_opt1} implies that
\[
  \E_{a\sim\pstar}\brk*{\ip{\tilde a,\tilde \theta} - \frac{1}{\gamma}\norm{\tilde a}^{2}_{\tilde H_{\pstar}^{-1}}}=\ip{\tilde a_{\pstar},\hat\theta}
  -\frac{\tilde d}{\gamma}=\lambda.
\]
Combining this identity with \pref{eq:logdet_opt2} and rearranging, we
conclude that
\begin{align}
    \forall a\in\cA\,:\ip{\tilde a_{\pstar}-a,\hat\theta} +\frac{1}{\gamma}\norm{\tilde a}^{2}_{\tilde H_{\pstar}^{-1}}  \leq \frac{\tilde d}{\gamma}.\label{eq:logdet_opt3}
\end{align}
Finally, observe that for any $p\in\Delta(\cA)$
\begin{align*}
    &\tilde H_{p} = \begin{pmatrix} H_{p} & \bar a_{p}\\\bar a^{\trn}_{p}&1\end{pmatrix},\quad\text{and}\quad
    &\tilde H_{p}^{-1} = \begin{pmatrix}
    \left(H_{p}-\bar a_{p}\bar a_{p}^{\trn}\right)^{-1}&-\left(H_{p}-\bar a_{p}\bar a_{p}^{\trn}\right)^{-1}\bar a_{p}\\
    -\bar a_{p}^{\trn}\left(H_{p}-\bar a_{p}\bar a_{p}^{\trn}\right)^{-1}& 1+\norm{\bar a_{p}}^2_{\left(H_{p}-\bar a_{p}\bar a_{p}^{\trn}\right)^{-1}}
    \end{pmatrix}\,,
\end{align*}
where the second expression uses the identity for the Schur
complement. Using the latter expression, we have that
\begin{align}
    \norm{\tilde a}^{2}_{\tilde H_{p}^{-1}} &=
                                                       \norm{a}^2_{(H_{
                                                       p}-\bar
                                                       a_{p}\bar
                                                       a_{p}^{\trn})^{-1}}- 2 a^{\trn}\left(H_{p}-\bar a_{p}\bar a_{p}^{\trn}\right)^{-1}\bar a_{p}+\norm{\bar a_{p}}^2_{\left(H_{p}-\bar a_{p}\bar a_{p}^{\trn}\right)^{-1}}+1\notag\\
    &=\norm{a-\bar a_{p}}^2_{(H_{p}-\bar a_{p}\bar a_{p}^{\trn})^{-1}}+1\,.\label{eq:norm identity}
\end{align}
By plugging this expression into \pref{eq:logdet_opt3}, it follows that the optimality conditions for the problems \eqref{eq:enhanced} and \eqref{eq:logdetbarrier} are identical. Any solution $\pstar$ to the problem
\eqref{eq:enhanced} yields a solution to the problem
\eqref{eq:logdetbarrier}, and vice-versa.
\end{proof}

In light of \pref{prop:lifted}, we work exclusively with the
lifted problem going forward. Before describing our algorithm, it will be useful to introduce the following approximate version of the
optimality condition in \pref{eq:minimax}, which quantifies the
quality of a candidate solution $p\in\Delta(\cA)$.
\begin{definition}
For any action set $\cA$, parameter $\hat{\theta}\in\bbR^{d}$, and learning rate $\gamma>0$, a distribution $p\in\Delta(\cA)$ is called an \emph{$\eta$-rounding} for $\logdetbarrier(\cA,\hat\theta, \gamma)$ if it satisfies
\begin{align}
\label{eq:eta_rounding}
    \forall a\in\cA\colon\quad \frac{1}{\gamma}\norm{\tilde a}^{2}_{\tilde H_{p}^{-1}}  \leq (1+\eta)\left(\frac{\tilde d}{\gamma}+\ip{\tilde{a}-\tilde a_{p},\tilde\theta} \right)\,.
\end{align}
\end{definition}
The following lemma quantifies the loss in regret incurred by sampling
from an $\eta$-rounding for the $\logdetbarrier$ objective rather than an exact solution.
\begin{lemma}
  \label{lem:rounding_to_regret}
  Suppose that for all steps $t$, we sample from an $\eta$-rounding
  for $\logdetbarrier(\cA_t,\hat\theta_t,\gamma/(1+\eta))$ within
  \pref{alg:logdet}. Then the regret bound from \cref{lem:logdet bound1} increases by at most a factor of $1+2\eta$.
\end{lemma}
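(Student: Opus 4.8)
The plan is to translate the $\eta$-rounding condition back into the exact form of \pref{lem:logdet bound1}, and then note that it enters the regret analysis (of \pref{thm:impreg}, and hence of \pref{thm:infinite} and \pref{thm:main}) through a single nonnegative term. First I would unpack \pref{eq:eta_rounding} with the learning rate $\gamma/(1+\eta)$ in place of $\gamma$: a valid rounding forces $\tilde H_p\psdgt 0$, and the factors $(1+\eta)$ on the two sides cancel up to one power, leaving $\frac{1}{\gamma}\norm{\tilde a}^{2}_{\tilde H_{p}^{-1}}\le\frac{(1+\eta)\tilde d}{\gamma}+\ip{\tilde a-\tilde a_{p},\tilde\theta}$ for all $a\in\cA$. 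Substituting $\norm{\tilde a}^{2}_{\tilde H_{p}^{-1}}=\norm{a-\bar a_{p}}^{2}_{(H_p-\bar a_p\bar a_p^{\trn})^{-1}}+1$ from \pref{eq:norm identity}, together with $\ip{\tilde a-\tilde a_{p},\tilde\theta}=\ip{a-\bar a_{p},\hat\theta}$ and $\tilde d=d+1$, and rearranging yields
\[
\forall a\in\cA\colon\quad \ip{\bar a_{p}-a,\hat\theta}+\frac{1}{\gamma}\norm{\bar a_{p}-a}^{2}_{(H_p-\bar a_p\bar a_p^{\trn})^{-1}}\;\le\;\frac{d+\eta(d+1)}{\gamma}\;\le\;\frac{(1+2\eta)d}{\gamma}\,,
\]
where the last step uses $d\ge 1$; for $\dim(\cA)<d$ the same bound holds with $d$ replaced by $\dim(\cA)$ after invoking the projection reduction from the proof of \pref{lem:logdet strong} and working in the effective $\dim(\cA)$-dimensional space. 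In other words, an $\eta$-rounding for $\logdetbarrier(\cA,\hat\theta,\gamma/(1+\eta))$ satisfies the conclusion of \pref{lem:logdet bound1} with its right-hand side inflated by the factor $1+2\eta$.

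Next I would feed this into the deduction of \pref{lem:logdet bound} from \pref{lem:logdet strong}: using $\norm{\hat\theta-\theta}^{2}_{H_p}\ge\norm{\hat\theta-\theta}^{2}_{H_p-\bar a_p\bar a_p^{\trn}}$ and performing the explicit $\theta$-maximization exactly as at the end of the proof of \pref{lem:logdet strong}, the display above gives $\max_{\theta}\max_{a^{*}\in\cA}\ip{\bar a_{p}-a^{*},\theta}-\frac{\gamma}{4}\norm{\hat\theta-\theta}^{2}_{H_p}\le(1+2\eta)\dim(\cA)/\gamma$. This is the only appearance of \pref{lem:logdet bound1} in the analysis; carrying it through the proof of \pref{thm:impreg}, the sole change is that the first summand $\sum_{t=1}^{T}\E[\gamma_t^{-1}]\dim(\cA_t)$ of \pref{eq:nice} is multiplied by $1+2\eta$, while the remaining summands are unchanged. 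Since every term of \pref{eq:nice} is nonnegative and only this one is scaled by the factor $1+2\eta\ge 1$, the whole bound---hence $\RegImp$, and via \pref{eq:corral_decomp} the regret bounds of \pref{thm:infinite} and \pref{thm:main}---increases by at most a factor of $1+2\eta$, with no re-tuning of $\gamma$ required.

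The hard part---really the only nontrivial point---is the bookkeeping in the first step: one must track the $(1+\eta)$-rescaling of the learning rate together with the additive $+1$ coming from the lifting so that the inflation factor comes out to exactly $1+2\eta$ and not something worse, and dispatch the degenerate case $\dim(\cA)<d$ through the projection argument. Everything afterwards is immediate once the rounding error has been localized to one nonnegative summand of the regret bound.
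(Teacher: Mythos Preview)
Your proposal is correct and follows essentially the same route as the paper: substitute $\gamma'=\gamma/(1+\eta)$ into the $\eta$-rounding condition, cancel one factor of $1+\eta$, apply the lifting identity \pref{eq:norm identity} to rewrite $\nrm{\tilde a}^{2}_{\tilde H_p^{-1}}$, and rearrange to obtain the conclusion of \pref{lem:logdet bound1} with right-hand side inflated by $1+\eta+\eta/d\le 1+2\eta$. Your additional paragraph tracing the factor through \pref{eq:nice} and \pref{eq:corral_decomp} is more detail than the paper provides, but the core argument is identical.
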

\pref{lem:rounding_to_regret} implies that to achieve the regret bound
from \pref{thm:infinite} up to a factor of $2$, it suffices to find a $1/2$-rounding.

\begin{proof}
We first prove an analogue of the inequality in \cref{lem:logdet
  bound1}. Let $t$ be fixed and abbreviate $\thetahat\equiv\thetahat_t$. Assume without loss of generality that $d=\dim(\cA_t)$.
For an $\eta$-rounding $p$ that satisfies \pref{eq:eta_rounding} with learning rate $\gamma'\ldef{}\gamma/(1+\eta)$, 
by the identity \eqref{eq:norm identity} the following inequalities
are equivalent: 
\begin{align*}
      &\frac{1}{\gamma'}\norm{\tilde a}^{2}_{\tilde H_{p}^{-1}}  \leq (1+\eta)\left(\frac{\tilde d}{\gamma'}+\ip{a-\bar a_{p},\hat\theta} \right)\\
    &\iff\frac{1+\eta}{\gamma}\norm{\tilde a}^{2}_{\tilde H_{p}^{-1}}  \leq (1+\eta)\left(\frac{\tilde d(1+\eta)}{\gamma}+\ip{a-\bar a_{p},\hat\theta} \right)\\
  &\iff\frac{1}{\gamma}\left(\norm{a-\bar a_p}^2_{(H_p-\bar a_p\bar a^{\trn}_p)^{-1}}+1\right)\leq \frac{(d+1)(1+\eta)}{\gamma}+\ip{a-\bar a_{p},\hat\theta}\\
  &\iff\ip{\bar a_{p}-a,\hat\theta}+\frac{1}{\gamma}\norm{a-\bar a_p}^2_{(H_p-\bar a_p\bar a^{\trn}_p)^{-1}}\leq \frac{d}{\gamma}\left(1+\eta+\frac{\eta}{d}\right)\,.
\end{align*}
It follows that the bound from \cref{lem:logdet bound1} increases by at
most a
factor of 
$(1+\eta+\frac{\eta}{d})<1+2\eta$ if we use an $\eta$-rounding rather
than an exact solution.
\end{proof}

\subsection{Algorithm}
\paragraph{Preliminaries.}
To keep notation
compact, throughout this section we drop
the learning rate parameter and work with the objective
\begin{align}
    G(p) \ldef \ip{\tilde a_p,\tilde\theta} - \log\det(\tilde H_p),\mathand{}\pstar\in\argmin_{p\in\Delta(\cA)}G(p).\label{eq:Gp2}
\end{align}
Note that this suffices to capture the case where $\gamma\neq{}1$ (\pref{eq:Gp}), since we can multiply both terms
by $\gamma^{-1}$ and absorb a gamma factor into $\theta$. Consequently, for
the remainder of the section we work under the assumption that
$\norm{\theta}\leq \gamma$ rather than
$\nrm{\theta}\leq{}1$. The definition of an $\eta$-rounding  remains
unaffected, since we can multiply both sides in \pref{eq:eta_rounding}
by $\gamma$.

\paragraph{Additional notation.}
For each $a\in\cA$, let $\mathbf{e}_a\in\Delta(\cA)$ be the distribution that selects $a$ with probability $1$. For distributions $p_1,p_2\in\Delta(\cA)$, let $\conv[p_1,p_2]=\{\lambda
p_1+(1-\lambda)p_2\,|\,\lambda\in[0,1]\}$ be their convex hull.
To improve readability, we abbreviate $\norm{\cdot}_{\tilde H^{-1}_p}$ to $\norm{\cdot}_p$ in this section.

\paragraph{Algorithm.}
Our main algorithm is stated in \pref{alg:fw}. The algorithm is a generalization of Khachiyan's algorithm for optimal experimental design \citep{KT90}.
It maintains a finitely supported distribution over arms in $\cA$ and adds a
single arm to the support at each step.

In more detail, the algorithm proceeds as follows. At step $k$, the
algorithm checks whether the current iterate $p_{k-1}$ is an $\eta$-rounding. If
this is the case, the algorithm simply terminates, as we are
done. Otherwise, with $\astar\ldef\argmin_{a\in\cA}\tri{a,\theta}$, the algorithm first checks whether the current distribution satisfies  
$\tilde d + \ip{\astar-\bar a_{p_{k-1}},\theta} \geq 1$. If that
condition is violated, we define a new distribution $p'_{k-1}$ by choosing
the distribution in $\conv[p_{k-1}, e_{\astar}]$ that minimizes
$G(p)$. This ensures that $\frac{\partial}{\partial \lambda}\brk{G(p'_{k-1}+x(\mathbf{e}_{\astar}-p_{k-1})}(0)=0$, i.e.
\begin{align*}
 \ip{a^*,\theta}-\norm{a^*}^2_{p'_{k-1}} = \E_{a\sim p'_{k-1}}\left[\ip{a,\theta}-\norm{a}^2_{p'_{k-1}}\right] = \ip{\bar a_{p'_{k-1}},\theta}-\tilde d\,,
\end{align*}
and hence $\min_{a\in\cA}\tilde d + \ip{a-\bar a_{p'_{k-1}},\theta} =
\norm{a^*}^2_{p'_{k-1}}\geq 1$. In particular, this implies that
\begin{align}
\eta_k \ldef \max_{a\in\cA}\norm{\tilde a}^2_{p'_{k-1}}/(d+\ip{a-\bar a_{p'_{k-1}},\theta})
  \label{eq:etak}
\end{align}
 is well defined. To conclude the iteration, the algorithm selects an
 action $a_k$ that attains the maximum in \pref{eq:etak} and adds it to
 the support of $p'_{k-1}$, yielding $p_k$.

\begin{algorithm}[H]
\caption{Frank-Wolfe for minimizing the $\logdetbarrier$ objective}
\label{alg:fw}
\DontPrintSemicolon
\LinesNumberedHidden
\KwIn{$p_0\in\Delta(\cA),\cA,\theta,\eta$}
Let $\astar = \argmin_{a\in\cA}\ip{a,\theta}$, $k=1$.\;
\While{\text{\textnormal{$p_{k-1}$ is not an $\eta$-rounding  (\pref{eq:eta_rounding})}}}{
\If{$\tilde d + \ip{a^*-\bar a_{p_{k-1}},\theta} < 1$}{
Solve $p_{k-1}' = \argmin_{p\in\conv[p_{k-1},\mathbf{e}_{a^*}]}G(p)$.
}
\Else{
$p_{k-1}'=p_{k-1}$.
}
Pick any $a_k \in \argmax \norm{\tilde a}^2_{p'_{k-1}}/(d+\ip{a-\bar
  a_{p_{k-1}'},\theta})$ (ties broken arbitrarily).\;
Solve $p_{k} = \argmin_{p\in\conv[p_{k-1}',\mathbf{e}_{a_k}]}G(p)$.\;
Increment $k$.
}
\end{algorithm}

\subsection{Analysis}
In this section we prove a number of intermediate results used to
bound the iteration complexity of \pref{alg:fw}, culminating in our
main convergence guarantee, \pref{thm:fw}. The total computational
complexity is summarized at the end of the section in \pref{sec:total_comp}.

We begin by relating the $\eta$-rounding property to the suboptimality
gap for the objective $G(p)$.
\begin{lemma}
  \label{lem:gap}
  If $p\in\Delta(\cA)$ is an $\eta$-rounding, then
\begin{align*}
    G(p)-G(\pstar) \leq \log(1+\eta)\tilde d\,.
\end{align*}
\end{lemma}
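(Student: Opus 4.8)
The plan is to expand the gap into its two natural pieces,
\[
G(p)-G(\pstar)=\langle\tilde a_p-\tilde a_{\pstar},\tilde\theta\rangle+\bigl(\log\det\tilde H_{\pstar}-\log\det\tilde H_p\bigr),
\]
and control the log-determinant difference by a trace. Since $p$ is an $\eta$-rounding the matrix $\tilde H_p$ is invertible (and positive definite, else $G(p)=+\infty$), and $\tilde H_{\pstar}\succ0$ as in the proof of \pref{lem:logdet bound1}; hence $\log\det\tilde H_{\pstar}-\log\det\tilde H_p=\log\det\bigl(\tilde H_p^{-1/2}\tilde H_{\pstar}\tilde H_p^{-1/2}\bigr)$. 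Applying concavity of $\log$ to the eigenvalues of the positive definite matrix $M\ldef\tilde H_p^{-1/2}\tilde H_{\pstar}\tilde H_p^{-1/2}$ gives $\log\det M\le\tilde d\log\bigl(\tr(M)/\tilde d\bigr)$, and the cyclic property of the trace together with $\tilde H_{\pstar}=\E_{a\sim\pstar}[\tilde a\tilde a^{\trn}]$ yields the key identity $\tr(M)=\tr(\tilde H_p^{-1}\tilde H_{\pstar})=\E_{a\sim\pstar}\|\tilde a\|^2_{\tilde H_p^{-1}}$.

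Next I would take the expectation of the $\eta$-rounding inequality \eqref{eq:eta_rounding} over $a\sim\pstar$ (using linearity in the last term), obtaining $\E_{a\sim\pstar}\|\tilde a\|^2_{\tilde H_p^{-1}}\le(1+\eta)\bigl(\tilde d+\langle\tilde a_{\pstar}-\tilde a_p,\tilde\theta\rangle\bigr)$. Writing $s\ldef\langle\tilde a_{\pstar}-\tilde a_p,\tilde\theta\rangle/\tilde d$, the bounds above combine to
\[
G(p)-G(\pstar)\le-\tilde d\,s+\tilde d\log\bigl((1+\eta)(1+s)\bigr)=\tilde d\log(1+\eta)+\tilde d\bigl(\log(1+s)-s\bigr).
\]
Here $1+s>0$ since $0<\tr(M)/\tilde d\le(1+\eta)(1+s)$ and $1+\eta>0$, so $\log(1+s)$ is well defined and $\log(1+s)\le s$. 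Hence $\log(1+s)-s\le0$, which gives exactly $G(p)-G(\pstar)\le\tilde d\log(1+\eta)$, as claimed.

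The argument is essentially the Khachiyan-style analysis for D-optimal design, adapted to the lifted objective. The only point that requires care is the affine shift term $\langle\tilde a_p-\tilde a_{\pstar},\tilde\theta\rangle$: it is not bounded on its own, but it cancels exactly against the $-s$ produced by the elementary inequality $\log(1+s)\le s$, which is why the bound comes out as $\tilde d\log(1+\eta)$ rather than the weaker $\tilde d(\eta+|s|)$ one would get from a purely first-order (convexity-based) comparison. Beyond verifying the positive-definiteness of $\tilde H_p$ and $\tilde H_{\pstar}$ needed to make all the determinants and inverses meaningful, there is no substantive obstacle.
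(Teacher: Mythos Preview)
Your proof is correct, and it takes a genuinely different route from the paper's. The paper begins by invoking the KKT optimality conditions for $\pstar$ (equations \eqref{eq:logdet_opt1}--\eqref{eq:logdet_opt3}) to rewrite the right-hand side of the rounding inequality in terms of $\|\tilde a\|^2_{\tilde H_{\pstar}^{-1}}$; this leads to the expression $G(p)-G(\pstar)\le\sum_{i}\bigl(1-\lambda_i/(1+\eta)+\log\lambda_i\bigr)$ in the eigenvalues $\lambda_i$ of $M$, and each summand is then maximized separately over $\lambda_i>0$. You bypass the optimality conditions for $\pstar$ entirely: you bound $\log\det M\le\tilde d\log(\tr(M)/\tilde d)$ by Jensen, control $\tr(M)$ by averaging the $\eta$-rounding inequality over $a\sim\pstar$, and then the stray linear shift $s$ is absorbed by the elementary inequality $\log(1+s)\le s$. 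Your argument is more self-contained---it uses only the definition of an $\eta$-rounding and concavity of $\log$, never the first-order characterization of $\pstar$---and it makes explicit why the affine term $\langle\tilde a_p-\tilde a_{\pstar},\tilde\theta\rangle$ does not spoil the bound. The paper's route retains the individual eigenvalues longer and so could in principle be sharper for very non-uniform spectra, but both arguments land on exactly $\tilde d\log(1+\eta)$.
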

\begin{proof}[Proof of \cref{lem:gap}]
By the optimality conditions in \pref{eq:logdet_opt1,eq:logdet_opt2,eq:logdet_opt3}, we are guaranteed that
\begin{align*}
    \forall a\in\supp(\pstar):\, \tilde d +\ip{a,\theta} = \norm{\tilde a}_{\tilde H_{\pstar}^{-1}}^2+\ip{\bar a_{\pstar},\theta}\,.
\end{align*}
Hence, combining this statement with the $\eta$-rounding condition for
$p$, we have that
\begin{align*}
    \forall a\in\supp(\pstar):\, \norm{\tilde a}^2_{\tilde H^{-1}_p}\leq (1+\eta)\left(\norm{\tilde a}_{\tilde H_{\pstar}^{-1}}^2 +\ip{\bar a_{\pstar}-\bar a_p,\theta}\right).
\end{align*}
Taking the expectation over $a\sim \pstar$ on both sides above and rearranging leads to
\begin{align*}
    \ip{\bar a_p-\bar a_{\pstar},\theta} \leq \tilde d-\frac{\tr(\tilde H_{\pstar}\tilde H^{-1}_p)}{1+\eta}=\tilde d-\frac{\tr(\tilde H^\frac{1}{2}_{\pstar}\tilde H^{-1}_p\tilde H^\frac{1}{2}_{\pstar})}{1+\eta}\,.
\end{align*}
From the definition of $G(p)$, this implies that
\begin{align*}
    G(p)-G(\pstar) \leq \tilde d-\frac{\tr(\tilde H^\frac{1}{2}_{\pstar}\tilde H^{-1}_p\tilde H^\frac{1}{2}_{\pstar})}{1+\eta} + \log\det(\tilde H^\frac{1}{2}_{\pstar}\tilde H^{-1}_p\tilde H^\frac{1}{2}_{\pstar}),
\end{align*}
where we recall that $\det(\tilde H^\frac{1}{2}_{\pstar}\tilde H^{-1}_p\tilde H^\frac{1}{2}_{\pstar})=\det(\tilde H_{\pstar}\tilde H^{-1}_p)>0$, since
$\tilde{H}_{\pstar}$, $\tilde{H}_{p}\psdgt{}0$.
Now, let $(\lambda_i)_{i=1,\dots,\tilde d}$ be the eigenvalues of
$\tilde H^\frac{1}{2}_{\pstar}\tilde H^{-1}_p\tilde
H^\frac{1}{2}_{\pstar}$. Then we have
\begin{align*}
    G(p)-G(\pstar) = \sum_{i=1}^{\tilde d} 1-\frac{\lambda_i}{1+\eta}+\log(\lambda_i) \leq \tilde d \max_{\lambda>0}\crl*{ 1-\frac{\lambda}{1+\eta}+\log(\lambda)}=\tilde d \log(1+\eta)\,.
\end{align*}
\end{proof}
Our next lemma lower bounds the rate at which the suboptimality gap
improves at each iteration.
\begin{lemma}
\label{lem:1step convergence}
In each iteration of \cref{alg:fw}, the suboptimality gap improves by
at least
\begin{align}
    G(p_{k-1})-G(p_{k}) \geq \Omega\left(\min\{\eta_k,1\}^2/d\right),\label{eq:1step_1}
\end{align}
where we recall that $\eta_k \ldef \norm{a_k}^2_{p'_{k-1}}/(\tilde d + \ip{a_k-\bar a_{p_{k-1}'}})$.
Furthermore, if $\eta_k\geq 2\tilde d$, then it also holds that
\begin{align}
    G(p_{k})-G(\pstar)\leq \left(1-\frac{1}{2\tilde d}\right)\left(G(p_{k-1})-G(\pstar)\right)\,.\label{eq:1step_2}
\end{align}
\end{lemma}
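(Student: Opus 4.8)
The plan is to reduce both inequalities to a one-dimensional computation along the segment on which $p_k$ is produced. Fix iteration $k$; write $p\ldef p'_{k-1}$, $a\ldef a_k$, abbreviate $\norm{\cdot}_p\ldef\norm{\cdot}_{\tilde H_p^{-1}}$, and set $g_p(a')\ldef\tilde d+\ip{a'-\bar a_p,\theta}$, so $\eta_k=\max_{a'}\norm{\tilde a'}_p^2/g_p(a')$ with $a_k$ a maximizer. Put $v\ldef\norm{\tilde a}_p^2$, $g\ldef g_p(a_k)$ and $c\ldef\ip{a-\bar a_p,\theta}=g-\tilde d$, so $v=\eta_k g$. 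For $\lambda\in[0,1)$ let $p_\lambda\ldef(1-\lambda)p+\lambda\mathbf{e}_a$; since $\tilde H_{p_\lambda}=(1-\lambda)\tilde H_p+\lambda\tilde a\tilde a^{\trn}$, the matrix determinant lemma yields $\log\det\tilde H_{p_\lambda}=\tilde d\log(1-\lambda)+\log\det\tilde H_p+\log(1+\tfrac{\lambda}{1-\lambda}v)$, and with the linearity of $p\mapsto\ip{\tilde a_p,\tilde\theta}$ this gives the exact identity
\begin{align*}
    G(p)-G(p_\lambda)=\phi(\lambda)\ldef -\lambda c+(\tilde d-1)\log(1-\lambda)+\log\bigl(1-\lambda+\lambda v\bigr),
\end{align*}
so $\phi(0)=0$ and $\phi'(0)=v-c-\tilde d=v-g=g(\eta_k-1)$. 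As the conditional step never increases $G$ and $p_k$ minimizes $G$ on $\conv[p,\mathbf{e}_a]$, we have $G(p_{k-1})-G(p_k)\ge\max_{\lambda\in[0,1]}\phi(\lambda)$, so the task is to lower bound $\max_\lambda\phi$. I will use two facts, both of which follow from the discussion preceding \pref{alg:fw} when the conditional step fires and from the conditions defining the loop otherwise: $\eta_k\ge1$ (hence $\min\{\eta_k,1\}=1$), and $g\ge\min_{a'}g_p(a')\ge1$; together these give $v=\eta_k g\ge1$.

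For \eqref{eq:1step_1}: the claim becomes $G(p_{k-1})-G(p_k)=\Omega(1/d)$. Here $\phi$ is concave, $\phi''(\lambda)=-\tfrac{\tilde d-1}{(1-\lambda)^2}-\tfrac{(v-1)^2}{(1-\lambda+\lambda v)^2}$, and since $1-\lambda+\lambda v=1+\lambda(v-1)\ge1$ we get $|\phi''|\le M\ldef 4(\tilde d-1)+(v-1)^2$ on $[0,\tfrac12]$. Taking $\lambda_\star\ldef(v-g)/M\in[0,\tfrac12]$ and the Taylor bound $\phi(\lambda_\star)\ge\lambda_\star\phi'(0)-\tfrac M2\lambda_\star^2=\tfrac{(v-g)^2}{2M}$, a short case split on whether $(v-1)^2$ or $\tilde d$ dominates $M$ --- using $\eta_k\ge1+\eta$ and $g\ge1$ --- shows $\tfrac{(v-g)^2}{2M}=\Omega(1/\tilde d)=\Omega(1/d)$. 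The delicate point is that $-\lambda c$ is large negative when $v$ is large; but then $c=g-\tilde d$ is large and $\log(1-\lambda+\lambda v)$ grows correspondingly, and the choice of $\lambda_\star$ keeps the two in balance.

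For \eqref{eq:1step_2}: assume $\eta_k\ge2\tilde d$. Since $\norm{\tilde a'}_p^2\le\eta_k g_p(a')$ for every $a'$, $p=p'_{k-1}$ is an $(\eta_k-1)$-rounding, so the argument behind \pref{lem:gap} gives $G(p'_{k-1})-G(\pstar)\le\tilde d\log\eta_k$. Separately, choosing a near-optimal $\lambda$ in $\phi$ and again using $v=\eta_k g$ with $g\ge1$, one shows the single step improves the objective by $\max_\lambda\phi(\lambda)\ge\tfrac12\log\eta_k$ (the large-gap regime, where the Frank--Wolfe step is essentially ``full''; the competing linear term is handled as above). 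Combining,
\begin{align*}
    G(p_k)-G(\pstar)&\le\bigl(G(p'_{k-1})-G(\pstar)\bigr)-\tfrac12\log\eta_k\\
    &\le\Bigl(1-\tfrac1{2\tilde d}\Bigr)\bigl(G(p'_{k-1})-G(\pstar)\bigr)\le\Bigl(1-\tfrac1{2\tilde d}\Bigr)\bigl(G(p_{k-1})-G(\pstar)\bigr),
\end{align*}
where the middle step uses $\tfrac12\log\eta_k\ge\tfrac1{2\tilde d}(G(p'_{k-1})-G(\pstar))$ and the last uses $G(p'_{k-1})\le G(p_{k-1})$.

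The step I expect to be the main obstacle is the one-dimensional analysis of $\phi$: obtaining the clean bounds $\max_\lambda\phi\gtrsim1/\tilde d$ and $\max_\lambda\phi\ge\tfrac12\log\eta_k$ with exactly the stated constants requires a careful, case-dependent choice of step size, because the linear term $-\lambda c$ is not $O(\tilde d)$-bounded in general ($c=g-\tilde d$ and $g$ can be as large as order $\gamma$). The structural fact that rescues the argument is the identity $v=\eta_k g$, which couples the size of the logarithmic gain to that of the linear loss.
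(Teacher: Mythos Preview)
Your overall plan --- reduce to the one-dimensional function $\phi(\lambda)$ along the Frank--Wolfe segment via the matrix determinant lemma, then lower-bound $\max_\lambda\phi$ by a chosen step --- is exactly the paper's approach. The gap is in what you take $\eta_k$ to mean and what Part~1 is therefore asking.

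The formula for $\eta_k$ in the lemma statement is a typo; the intended definition (used consistently in the paper's proof) is $\nrm{\tilde a_k}_p^2=(1+\eta_k)\bigl(\tilde d+\tri{a_k-\bar a_p,\theta}\bigr)$, i.e.\ $\eta_k=v/g-1$, not $v/g$. Your own computation $\phi'(0)=v-g=g(\eta_k-1)$ already flags this: with the correct convention $\phi'(0)=g\eta_k$, and the single-step progress is governed by $\eta_k$, which can be arbitrarily close to $0$ (late in the run, or after the conditional step happens to push $p'_{k-1}$ near optimum). So your reduction of \eqref{eq:1step_1} to a uniform $\Omega(1/d)$ bound is the wrong target, and the ``short case split'' cannot deliver it: in the regime $(v-1)^2\le 4(\tilde d-1)$ you get $(v-g)^2/(2M)\ge (v-g)^2/\bigl(16(\tilde d-1)\bigr)$, but $v-g$ is not bounded below by a constant --- it is $\Theta(g\cdot\eta_k^{\text{correct}})$. (Your appeal to ``$\eta_k\ge 1+\eta$'' mixes $\eta_k$ with the algorithm's target accuracy $\eta$, is not established when the conditional step fires, and would in any case only give $\Omega(\eta^2/d)$.) The paper's fix is to keep $\eta_k$ in the bound: with $Z_k\ldef v\ge 1+\eta_k$ it takes $\alpha'=\eta_k Z_k/\bigl(2(1+\eta_k)(\tilde d-1+(Z_k-1)^2)\bigr)$ and uses $Z_k^2/(\tilde d-1+(Z_k-1)^2)\ge 1/(\tilde d-1)$ for $Z_k\ge 1$ to get $\Omega\bigl(\eta_k^2/((1+\eta_k)^2 d)\bigr)=\Omega(\min\{\eta_k,1\}^2/d)$.

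For \eqref{eq:1step_2} your packaging is actually cleaner than the paper's: the paper stops at the one-step gain $G(p_{k-1})-G(p_k)\ge\frac{1}{4}\log(1+\eta_k)$ and never writes out the contraction, whereas you correctly combine a bound of this type with \pref{lem:gap} applied to $p'_{k-1}$ (which is an $\eta_k$-rounding by maximality of $a_k$) and then use $G(p'_{k-1})\le G(p_{k-1})$. But the inequality $\max_\lambda\phi\ge\frac{1}{2}\log\eta_k$ is exactly where the work is, and you only assert it. The paper's argument takes $\alpha'=\sqrt{\eta_k}/(Z_k-1)$, which makes the second logarithm contribute $\log(1+\sqrt{\eta_k})$ while the linear term and the first logarithm cost $O(1/\sqrt{\eta_k})+(\tilde d-1)/\eta_k$; a page of calculus under $\eta_k\ge 2\tilde d$ then yields $\frac{1}{4}\log(1+\eta_k)$. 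With that constant your chain gives contraction factor $(1-1/(4\tilde d))$; recovering the stated $(1-1/(2\tilde d))$ would require sharpening the one-dimensional estimate beyond what the paper proves.
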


\begin{proof}
  We first prove that \pref{eq:1step_1} holds. Let $k$ be fixed, and let $\alpha \in [0,1]$ such that
$p_{k} = (1-\alpha)p_{k-1}'+\alpha \mathbf{e}_{a_k}$. Then we have
\begin{align*}
    G(p_k) &= \ip{\bar a_{p_k},\theta} - \log\det\left(\tilde H_{p_k}\right)\\
    &= (1-\alpha)\ip{\bar a_{p_{k-1}'},\theta}+\alpha\ip{\tilde a_k,\theta}-\log\det\left((1-\alpha)\tilde H_{p_{k-1}'}+\alpha \tilde a_k\tilde a_k^{\trn}\right)\\
    &=\ip{\bar a_{p_{k-1}'},\theta}+\alpha\ip{\tilde a_k-\bar a_{p_{k-1}'},\theta}-\log\left(\det \left((1-\alpha)\tilde H_{p_{k-1}'}\right)\cdot\prn*{1+\frac{\alpha}{1-\alpha}\norm{\tilde a_k}^2_{p'_{k-1}}} \right)\\
    &=G(p_{k-1}') + \alpha\ip{\tilde a_k-\bar a_{p_{k-1}'},\theta} - (\tilde d-1) \log(1-\alpha)-\log\left(1-\alpha+\alpha\norm{\tilde a_k}^2_{p'_{k-1}}\right),
\end{align*}
where the third equality uses the matrix determinant lemma. Now,
recall that by the definition of $a_k$, we have $\norm{\tilde
  a_k}^2_{p'_{k-1}}=(1+\eta_k)(\tilde d
+\ip{\tilde{a}_k-\bar a_{p_{k-1}'},\theta})$. Let us abbreviate
$Z_k\ldef\norm{\tilde a_k}^2_{p'_{k-1}}\geq
1+\eta_k$. We proceed as
\begin{align}
  G(p_{k-1})-G(p_{k})&\geq G(p_{k-1}')-G(p_{k}) \notag\\
    &= \alpha\ip{\bar a_{p_{k-1}'}-\tilde{a}_k,\theta} + (\tilde d-1) \log(1-\alpha)+\log\left(1-\alpha+\alpha\norm{\tilde a_k}^2_{p'_{k-1}}\right)\notag\\
    &= \alpha\left(\tilde d-\frac{Z_k}{1+\eta_k}\right) + (\tilde d-1) \log(1-\alpha)+\log\left(1+\alpha( Z_k-1)\right)\notag\\
    &=\max_{\alpha'\in[0,1]} \crl*{\alpha'\left(\tilde d-\frac{Z_k}{1+\eta_k}\right) + (\tilde d-1) \log(1-\alpha')+\log\left(1+\alpha'( Z_k-1)\right)},\label{eq:1step_max}
\end{align}
where the last equality uses that $\alpha$ is chosen such that $G(p_k)$ is minimized.
Next, recalling the elementary fact that for all $x\geq-\frac{1}{2}$,
$\log(1+x) \geq x-x^2$, we have in particular that
\begin{align*}
&G(p_{k-1})-G(p_{k})\\
&\geq \max_{\alpha'\geq \frac{1}{2}}\crl*{\alpha'\left(\tilde d-\frac{Z_k}{1+\eta_k}\right) + (\tilde d-1)(-\alpha'-\alpha'^2)+\alpha'( Z_k-1)-\alpha'^2( Z_k-1)^2}\\
&= \max_{\alpha'\geq \frac{1}{2}}\crl*{\alpha'\frac{\eta_k Z_k}{1+\eta_k}-\alpha'^2\left(\tilde d-1+(Z_k-1)^2\right)}\,.
\end{align*}
Note that $\tilde d\geq 3$ and $\max_{x>0}\frac{x}{2+(x-1)^2}\leq 1$,
so if we choose
\begin{align*}
    \alpha' = \frac{\eta_kZ_k}{2(1+\eta_k)\left(\tilde d-1+(Z_k-1)^2\right)}\leq \frac{1}{2}\,,
\end{align*}
we get the lower bound
\begin{align*}
    G(p_{k-1})-G(p_{k})\geq \frac{\eta_k^2Z_k^4}{4(1+\eta_k)^2\left(\tilde d-1+(Z_k-1)^2\right)}\,.
\end{align*}
The proof of \pref{eq:1step_1} now follows by noting that
$\frac{x^2}{d+(x-1)^2}\geq \frac{1}{d}$ for all $x\geq 1$.

We now prove that the second part of the lemma, \pref{eq:1step_2},
holds. Suppose $\eta_k>2\tilde{d}$. We return to
\pref{eq:1step_max} and this time select
\begin{align*}
    \alpha' &= \frac{\sqrt{\eta_k}}{Z_k-1}
    \leq \frac{1}{\sqrt{\eta_k}}\leq\frac{1}{2}\,.
\end{align*}
Using the approximation $\log(1+x)\geq{}x-x^{2}$ only for the first
term in \eqref{eq:1step_max}, we get
\begin{align*}
    G(p_{k-1})-G(p_{k})&\geq
    \alpha'\left(\tilde d-\frac{Z_k}{1+\eta_k}\right) - (\tilde d-1) (\alpha'+\alpha'^2)+\log\left(1+\alpha'( Z_k-1)\right)\\
    &\geq-\frac{\sqrt{\eta_k}}{1+\eta_k}-\frac{\tilde
      d-1}{\eta_k}+\log(1+\sqrt{\eta_k})\\
                       &=-\frac{\sqrt{\eta_k}}{1+\eta_k}-\frac{\tilde d-1}{\eta_k}+\log(1+\sqrt{\eta_k})-\frac{1}{4}\log(1+\eta_k)+\frac{1}{4}\log(1+\eta_k)\\
                       &\geq -\frac{\sqrt{\eta_k}}{1+\eta_k}-\frac{1}{2}+\frac{1}{\eta_k}+\log(1+\sqrt{\eta_k})-\frac{1}{4}\log(1+\eta_k)+\frac{1}{4}\log(1+\eta_k)\,,
\end{align*}
where the last line uses that $\eta_k\
\geq 2\tilde d$. Now observe that for $x\geq 6$
\begin{align*}
    &\frac{\partial}{\partial x}\left(-\frac{\sqrt{x}}{1+x}+\frac{1}{x}+\log(1+\sqrt{x})- \frac{1}{4}\log(1+x)\right)\\
    &=\frac{x-1}{2\sqrt{x}(1+x)^2}-\frac{1}{x^2}+\frac{1}{2(\sqrt{x}+x)}-\frac{1}{4(1+x)}\\
    &=\frac{x^\frac{7}{2}+x^3+5x^\frac{5}{2}-7x^2-12x^\frac{3}{2}-8x-4x^\frac{1}{2}-4}{4x^2(1+\sqrt{x})(1+x)^2}\\
    &\geq\frac{7x^2+60x^\frac{3}{2}-7x^2-12x^\frac{3}{2}-8x-4x^\frac{1}{2}-4}{4x^2(1+\sqrt{x})(1+x)^2}
    \geq{}0\,.
\end{align*}
Hence
\begin{align*}
    &-\frac{\sqrt{\eta_k}}{1+\eta_k}-\frac{1}{2}+\frac{1}{\eta_k}+\log(1+\sqrt{\eta_k})-\frac{1}{4}\log(1+\eta_k)\\
    &\geq -\frac{\sqrt{6}}{1+6}-\frac{1}{2}+\frac{1}{6}+\log(1+\sqrt{6})-\frac{1}{4}\log(1+6) > 0\,.
\end{align*}
It follows that
\begin{align*}
    G(p_{k-1})-G(p_{k})\geq{} \frac{1}{4}\log(1+\eta_k).
\end{align*}
\end{proof}
The next lemma ensures we can efficiently find a good initial
distribution $p_0$.
\begin{lemma}[\citet{KY05}, Lemma 3.1]
\label{thm:init}
There exists an algorithm that terminates in $\cO(|\cA|d^2)$ time and finds a distribution $p_0\in\Delta(\cA)$ with $|\supp(p_0)|\leq 2\tilde d$ such that
\begin{align*}
    -\log\det(\tilde H_{p_0})+\min_{p\in\Delta(\cA)}\log\det(\tilde H_p) =\cO(d\log(d))\,.
\end{align*}
The memory required by the algorithm is at most $\cO\left(d^2+\log(|\cA|d)\right)$.
\end{lemma}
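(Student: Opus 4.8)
The quantity to be controlled is purely the log‑determinant: we need a distribution $p_0$ supported on at most $2\tilde d$ actions with $\log\det(\tilde H_{p_0})$ within an additive $\cO(d\log d)$ of $\sup_{p\in\Delta(\cA)}\log\det(\tilde H_p)$ (we may assume the actions affinely span $\R^d$, so that $\tilde d$ affinely independent lifted vectors exist, exactly as in the reduction used in \pref{lem:logdet strong}; otherwise $\tilde H_p$ is singular for all $p$ and the $\logdetbarrier$ objective is degenerate). I would first reduce this to a combinatorial statement about $\tilde d$‑element subsets. For $S\subseteq\cA$ with $|S|=\tilde d$, let $M_S$ be the $\tilde d\times\tilde d$ matrix whose columns are the lifted vectors $\{\tilde a : a\in S\}$. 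Cauchy--Binet gives, for every $p\in\Delta(\cA)$,
\begin{align*}
  \det(\tilde H_p)=\sum_{S:\,|S|=\tilde d}\Bigl(\prod_{a\in S}p_a\Bigr)\det(M_S)^2\;\le\;e_{\tilde d}(p)\cdot\max_{|S|=\tilde d}\det(M_S)^2\;\le\;\tfrac{1}{\tilde d!}\,\max_{|S|=\tilde d}\det(M_S)^2,
\end{align*}
where $e_{\tilde d}$ is the $\tilde d$‑th elementary symmetric polynomial and I used the Maclaurin bound $e_{\tilde d}(p)\le\tilde d!^{-1}(\sum_a p_a)^{\tilde d}=\tilde d!^{-1}$. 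Conversely, if $X$ is a $\tilde d$‑element set of affinely independent actions and $p_X$ is uniform on $X$, then $\det(\tilde H_{p_X})=\tilde d^{-\tilde d}\det(M_X)^2$. Since $\tilde d\log\tilde d-\log\tilde d!\le\tilde d$, it therefore suffices to find, within the stated budget, a subset $X$ with $\det(M_X)^2\ge\exp(-\cO(d\log d))\max_{|S|=\tilde d}\det(M_S)^2$.

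\textbf{Greedy construction of $X$.} To build such an $X$ I would run the greedy maximum‑volume procedure of \citet{KY05}: maintain an orthonormal basis $Q=(q_1,\dots,q_{j-1})$ of $\spann\{\tilde a:a\in X\text{ chosen so far}\}$, and at step $j$ scan $\cA$ for $b_j\in\argmax_{a\in\cA}\bigl(\|\tilde a\|^2-\|Q^{\trn}\tilde a\|^2\bigr)$, i.e.\ the action whose lifted vector is farthest from the current span; append $b_j$ to $X$, let $q_j$ be the normalized Gram--Schmidt residual of $\tilde b_j$ against $Q$, and additionally record the action $c_j$ extremal along $-q_j$. After $\tilde d$ rounds, output $p_0$ = the uniform distribution on $\{b_1,c_1,\dots,b_{\tilde d},c_{\tilde d}\}$, which has support at most $2\tilde d$. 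Since $|\det(M_X)|=\prod_j h_j$ with $h_j$ the Gram--Schmidt residual norm at step $j$, a standard exchange argument—comparing the greedy residual norms with the Gram--Schmidt heights of any fixed optimal $\tilde d$‑subset $S^\star$ under the ordering the greedy rule induces on $S^\star$—yields $|\det(M_X)|\ge\tilde d!^{-1}\max_{|S|=\tilde d}|\det(M_S)|$, an $\exp(\cO(d\log d))$ loss in the squared determinant. Finally $\tilde H_{p_0}\succeq\tfrac12\tilde H_{p_X}$ for $p_X$ uniform on $\{b_1,\dots,b_{\tilde d}\}$, so including the $c_j$'s costs only $\tilde d\log 2=\cO(d)$ more in the log‑determinant; chaining this with the two determinant reductions above gives $\sup_p\log\det(\tilde H_p)-\log\det(\tilde H_{p_0})\le\cO(d\log d)$.

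\textbf{Complexity, and the main obstacle.} The procedure makes $\tilde d=\cO(d)$ passes over $\cA$; each pass costs $\cO(|\cA|d)$—for each action, an inner product of $\tilde a$ against each basis vector, maintaining the running scalar $\|Q^{\trn}\tilde a\|^2$—plus $\cO(d^2)$ per round for the Gram--Schmidt update, for $\cO(|\cA|d^2)$ total. The working memory is dominated by $Q\in\R^{\tilde d\times\tilde d}$ together with $\cO(d)$ action indices of $\cO(\log|\cA|)$ bits each (the set $\cA$ is streamed, never stored), i.e.\ $\cO(d^2+\log(|\cA|d))$. I expect the genuine difficulty to be the greedy approximation‑ratio step: proving the $\tilde d!^{-1}$ guarantee against an \emph{arbitrary} optimal $S^\star$ requires a careful induction coupling the greedy heights $h_j$ with those of $S^\star$—at each stage at least one unused element of $S^\star$ must lie sufficiently far from the current span—whereas the determinant reductions and the resource accounting are routine. (This is \citep[Lemma 3.1]{KY05}, whose argument we are reconstructing here.)
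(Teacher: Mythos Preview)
The paper does not prove this lemma at all: it is stated with a direct citation to \citet[Lemma~3.1]{KY05} and then used as a black box in \pref{cor:init} and \pref{thm:fw}. So there is no ``paper's own proof'' to compare against---you are reconstructing the external reference, which you explicitly acknowledge in your final parenthetical.

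Your reconstruction is reasonable and matches the standard Khachiyan--Kumar--Y{\i}ld{\i}r{\i}m initialization argument: the Cauchy--Binet reduction to a maximum-volume $\tilde d$-subset, the $\tilde d!^{-1}$ greedy approximation ratio for volume maximization, and the streaming Gram--Schmidt cost accounting are all correct in outline. Two minor remarks. First, the $c_j$'s (extremal points along $-q_j$) are part of the KY05 initialization because that paper targets the minimum-volume enclosing ellipsoid problem, where a two-sided box is needed; for the pure $\log\det$ objective here they are indeed unnecessary, and your $\tilde H_{p_0}\succeq\tfrac12\tilde H_{p_X}$ step correctly absorbs them at $\cO(d)$ cost---the $2\tilde d$ support bound in the statement is simply inherited from quoting KY05 verbatim. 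Second, the exchange argument you flag as the ``main obstacle'' is the only step with real content, and your sketch (coupling greedy heights with those of an optimal $S^\star$) is the right shape, though a complete proof would need the explicit pigeonhole: at step $j$, at least one of the $\tilde d-j+1$ unused elements of $S^\star$ has residual norm at least $1/(\tilde d-j+1)$ times the $j$-th Gram--Schmidt height of $S^\star$ under a suitable ordering.
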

\begin{corollary}
\label{cor:init}
The distribution $p_0$ described in \cref{thm:init} has initial suboptimality gap at most \begin{align*}
    G(p_0)-G(\pstar) = \cO(d\log(d)+\gamma)\,.
\end{align*}
\end{corollary}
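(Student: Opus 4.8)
The plan is to split the suboptimality gap $G(p_0)-G(\pstar)$ into a linear part coming from $\tilde\theta$ and a log-determinant part, and to bound each with essentially one line. Unfolding the definition $G(p) = \ip{\tilde a_p,\tilde\theta} - \log\det(\tilde H_p)$ from \pref{eq:Gp2}, I would write
\begin{align*}
G(p_0) - G(\pstar) = \ip{\tilde a_{p_0} - \tilde a_{\pstar}, \tilde\theta} \;+\; \prn*{\log\det(\tilde H_{\pstar}) - \log\det(\tilde H_{p_0})}.
\end{align*}
Since $\tilde\theta = (\thetahat,0)$ and $\tilde a_p = (\bar a_p, 1)$, the first term equals $\ip{\bar a_{p_0} - \bar a_{\pstar},\thetahat}$. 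Each mean action $\bar a_p$ is a convex combination of elements of the unit ball $\cA$ and hence has norm at most $1$, so $\norm{\bar a_{p_0} - \bar a_{\pstar}} \le 2$; combined with the bound $\norm{\thetahat}\le\gamma$ that holds under the rescaling adopted at the start of this section, Cauchy-Schwarz gives $\ip{\bar a_{p_0} - \bar a_{\pstar},\thetahat} \le 2\gamma = \cO(\gamma)$.

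For the log-determinant term, the key observation is that $\max_{p\in\Delta(\cA)}\log\det(\tilde H_p)$ is exactly the (lifted) D-optimal design problem, and \pref{thm:init} guarantees that the $p_0$ it produces is within $\cO(d\log d)$ of the optimum of this problem. Since $\pstar$ is itself a feasible distribution---and $\tilde H_{\pstar}\succ 0$, as established in the proof of \pref{prop:lifted} via \pref{lem:logdet bound1}---monotonicity gives $\log\det(\tilde H_{\pstar}) \le \max_{p\in\Delta(\cA)}\log\det(\tilde H_p)$, and therefore $\log\det(\tilde H_{\pstar}) - \log\det(\tilde H_{p_0}) \le \cO(d\log d)$. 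Adding the two bounds yields $G(p_0)-G(\pstar) = \cO(d\log d + \gamma)$.

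There is no genuine difficulty here; the statement is essentially a repackaging of \pref{thm:init}. The only things to be careful about are bookkeeping: (i) one must use the convention from \pref{sec:linear} that $\det$ takes the product of the top $\dim(\cA)$ eigenvalues when $\dim(\cA)<d$, which is precisely the convention in which \pref{thm:init} is stated, so the design bound transfers verbatim to the lifted coordinates; and (ii) the additive $\gamma$ in the claimed bound is an artifact of the rescaling $\norm{\theta}\le\gamma$ introduced at the start of the section, so one should not expect to sharpen it without revisiting that normalization. Everything else is Cauchy-Schwarz together with the monotonicity of $\log\det$ under enlarging the feasible set.
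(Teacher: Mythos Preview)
Your proof is correct and follows essentially the same approach as the paper: split $G(p_0)-G(\pstar)$ into the linear and log-determinant parts, bound the linear part by $2\gamma$ via Cauchy--Schwarz using $\nrm{\bar a_{p_0}-\bar a_{\pstar}}\le 2$ and $\nrm{\theta}\le\gamma$, and bound the log-determinant part by $\cO(d\log d)$ using \pref{thm:init} together with the fact that $\pstar$ is feasible for the D-optimal design problem. The paper's version is slightly terser but the argument is the same.
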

\begin{proof}
  Recall that
  \[
    G(p_0) - G(\pstar) = \tri{\bar{a}_{p_0}-\bar{a}_{\pstar},\theta}
    -\log\det(\tilde{H}_{p_0}) + \log\det(\tilde{H}_{\pstar}).
  \]
The difference between the log-det terms is bounded by
$\bigoh(d\log(d))$ using
\cref{thm:init}, while the difference between the linear terms is bounded by
\begin{align*}
    \ip{\bar a_{p_0}-\bar a_{\pstar},\theta}\leq \norm{\bar a_{p_0}-\bar a_{\pstar}}\cdot\norm{\theta}\leq 2\gamma\,.
\end{align*}
\end{proof}

\begin{theorem}
  \label{thm:fw}
If \cref{alg:fw} is initialized using the distribution from \cref{thm:init}, then it requires
$\cO\prn*{d(\log(d) + \log(\gamma)}$ iterations to reach a
$2d$-rounding. Moreover, 
\begin{itemize}
\item After reaching the $2d$-rounding above, the algorithm requires $\cO\prn*{\log(d)d^2}$
  additional iterations to reach a $1$-rounding.
\item After reaching a $1$-rounding, the algorithm requires
  $\cO\prn*{d^2/\eta}$ additional iterations to reach an
  $\eta$-rounding for any $\eta<1$.
\end{itemize}
Altogether, for any $\eta>0$, \pref{alg:fw}---when initialized
using \pref{thm:init}---requires \[\bigoh(d\log(\gamma) +
  d^{2}(\log(d)+1/\eta))\]
total steps to reach an $\eta$-rounding.
\end{theorem}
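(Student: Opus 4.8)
The plan is to run \pref{alg:fw} from the initializer of \pref{thm:init} and to control progress through the single scalar $\Delta_k := G(p_k) - G(\pstar)$, partitioning the trajectory at the three milestones ``first $2d$-rounding'', ``first $1$-rounding'', and ``first $\eta$-rounding''. Three facts get used repeatedly: (a) $G$ is non-increasing along the run, since each line search (and the optional correction step) minimizes $G$ over a segment that contains the current iterate, so any upper bound on $\Delta_k$ at a milestone persists afterwards; (b) \pref{lem:gap} turns the event ``$p$ is a $\lambda$-rounding'' into the quantitative bound $\Delta \le \log(1+\lambda)\tilde d$; and (c) \pref{lem:1step convergence} supplies the per-step decrease, namely $\Delta_{k-1}-\Delta_k \ge \Omega(\min\{\eta_k,1\}^2/d)$ always (\pref{eq:1step_1}) and the contraction $\Delta_k \le (1-\frac{1}{2\tilde d})\Delta_{k-1}$ whenever $\eta_k \ge 2\tilde d$ (\pref{eq:1step_2}). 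The starting gap is $\Delta_0 = \bigoh(d\log d + \gamma)$ by \pref{cor:init}.

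For the first phase I would argue that as long as the current iterate fails the $2d$-rounding test, its violation $\eta_k$ exceeds $2d$, hence (up to the cosmetic $d$ versus $\tilde d = d+1$ distinction) $\eta_k \ge 2\tilde d$, so \pref{eq:1step_2} applies and $\Delta_k \le (1-\frac{1}{2\tilde d})\Delta_{k-1}$. Iterating this contraction from $\Delta_0 = \bigoh(d\log d+\gamma)$, after $\bigoh(\tilde d\log(d\,\Delta_0)) = \bigoh(d(\log d + \log\gamma))$ steps we reach $\Delta_k = \bigoh(1/d)$. Conversely, \pref{eq:1step_1} with $\min\{\eta_k,1\}=1$ shows that any iterate that is not a $2d$-rounding has $\Delta \ge \Omega(1/d)$; therefore, once $\Delta_k$ drops below a sufficiently small $\Theta(1/d)$ threshold the iterate is a $2d$-rounding, and it stays one since $G$ is non-increasing. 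This bounds Phase 1 by $\bigoh(d(\log d + \log\gamma))$ iterations.

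By \pref{lem:gap}, the first $2d$-rounding already has $\Delta \le \log(1+2d)\tilde d = \bigoh(d\log d)$. While an iterate is a $2d$-rounding but not a $1$-rounding we have $\eta_k \ge 1$, so \pref{eq:1step_1} forces $\Delta_{k-1}-\Delta_k \ge \Omega(1/d)$; since $\Delta \ge 0$, Phase 2 must terminate within $\bigoh(d\log d)/\Omega(1/d) = \bigoh(d^2\log d)$ steps. For Phase 3, a direct application of \pref{eq:1step_1} with $\eta_k \ge \eta$ only yields $\bigoh(d^2/\eta^2)$, so I would instead decompose dyadically: for $j = 1,\dots,\lceil\log_2(1/\eta)\rceil$, consider the block of steps between the first $2^{-j+1}$-rounding and the first $2^{-j}$-rounding. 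Every iterate in this block fails the $2^{-j}$-rounding test, so $\eta_k > 2^{-j}$ and \pref{eq:1step_1} gives $\Delta_{k-1}-\Delta_k \ge \Omega(4^{-j}/d)$, whereas \pref{lem:gap} shows the gap entering the block is at most $\log(1+2^{-j+1})\tilde d \le 2^{-j+1}\tilde d$. Hence the $j$-th block contains at most $2^{-j+1}\tilde d/\Omega(4^{-j}/d) = \bigoh(2^j d^2)$ steps, and summing the geometric series gives $\bigoh(2^{\lceil\log_2(1/\eta)\rceil} d^2) = \bigoh(d^2/\eta)$ for Phase 3. Adding the three phases produces the claimed total $\bigoh(d\log\gamma + d^2(\log d + 1/\eta))$; the per-iteration runtime and memory are tracked separately in \pref{sec:total_comp}.

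The main obstacle I anticipate is the Phase 3 estimate: obtaining $\bigoh(d^2/\eta)$ rather than $\bigoh(d^2/\eta^2)$ hinges on using \pref{lem:gap} as a ``certificate'' that the suboptimality gap has already shrunk to $\bigoh(2^{-j}d)$ by the time the algorithm is hunting for a $2^{-j}$-rounding, so that the a priori divergent-looking sum $\sum_j 2^j$ telescopes to $\bigoh(1/\eta)$. The rest is bookkeeping I would handle with care but expect to be routine: reconciling the two slightly different normalizations of $\eta_k$ (with $d$ versus $\tilde d$) and accounting for the correction step inside each iteration, which only decreases $G$ and therefore can only help every inequality above.
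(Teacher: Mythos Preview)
Your proposal is correct and follows essentially the same three-phase argument as the paper: exponential contraction via \pref{eq:1step_2} until a $2d$-rounding, then additive $\Omega(1/d)$ decrease from a gap of $\cO(d\log d)$ until a $1$-rounding, then the dyadic halving argument using \pref{lem:gap} and \pref{eq:1step_1} to reach an $\eta$-rounding. The only cosmetic difference is your Phase~1 termination (``$\Delta_k$ below $\Theta(1/d)$ forces a $2d$-rounding'') versus the paper's (``contraction to gap $1$, then $\Omega(1/d)$ decrease would drive the gap negative''), which are equivalent; the $d$ versus $\tilde d$ and $p_{k-1}$ versus $p_{k-1}'$ wrinkles you flag are handled identically in both.
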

\begin{proof}
By \cref{cor:init}, the initial distribution $p_0$ satisfies
\begin{align*}
    G_0 \ldef G(p_0)-G(\pstar) = \cO(d\log(d)+\gamma)\,.
\end{align*}
We first bound the number of steps required to reach a
$2d$-rounding. 
Let $k_0$ denote the first step $k$ in which $p_{k}$ is a
$2d$-rounding. Then every $k<k_0$ has $\eta_k > 2d$, so in light of
\cref{lem:1step convergence}, all such $k$ have
\[
  G(p_k)-G(p_0)\leq\prn*{1-\frac{1}{2\tilde{d}}}(G(p_{k-1}) - G(p_0))
\]
and
\[
  G(p_k) \leq{} G(p_{k-1}) - \bigom(1/d).
\]
It follows that as long as $\eta_k>2d$, the suboptimality gap will
reach $1$ in most $\cO\prn*{d\log(G_0)}=\bigoh(d(\log(d)+\log(\gamma)))$ iterations. Moreover, since
the absolute decrease in function value is at least $\Omega(1/d)$, the
gap would reach zero after another $\cO(d)$ iterations of this type. We conclude
that after $\cO\prn*{d(\log(d)+\log(\gamma))}$ iterations, the
algorithm must find a $2d$-rounding.

We now bound the number of steps to reach a $1$-rounding from the
first step where we have a $2d$-rounding. By \cref{lem:gap}, the suboptimality gap
of any $2d$-rounding is at most $\cO\prn{d\log(d)}$. Moreover, as long
as we haven't reached a $1$-rounding, \pref{lem:1step convergence}
guarantees that the suboptimality gap will decrease by $\Omega(1/d)$
per step. Hence, we must reach a $1$-rounding within $\cO\prn{d^2\log(d)}$ iterations.

Finally we bound the number of steps required to reach an
$\eta$-rounding for any $\eta<1$, starting from the first iteration where
we reach a $1$-rounding. We adapt an argument of
\citet{KY05}. Given an $\eta_k$-rounding for $\eta_k\leq 1$, we need
$\cO\prn*{d^2/\eta_k}$ iterations to reach an
$(\eta_k/2)$-rounding. This follows from the same argument as above: the
suboptimality gap is at most $\cO(d\eta_k)$ by \pref{lem:gap} (using
that $\log(1+\eta_k)\leq\eta_k$) and we reduce it by
$\Omega(\eta_k^2/d)$ as long as we have not found an
$(\eta_k/2)$-rounding (by \pref{lem:1step convergence}).
Summing up the required number of iterations to get from precision $1$ to $1/2$ to $1/4$ to \dots
to $1/2^{\ceil{\log_2(1/\eta)}}$ shows that $\cO(d^2/\eta)$
total iterations suffice.
\end{proof}

\subsubsection{Total Computational Complexity}
\label{sec:total_comp}
The computational complexity per iteration for our method is comparable to similar
algorithms for the D-optimal design problem, which we recall is the case where $\theta=0$
\citep{KT90,KY05,TY07}. We walk calculatethe complexity
step-by-step for completeness, and to handle differences arising from
our generalization to the $\theta\neq{}0$ case. The first difference is that our method solves an intermediate optimization problem over the line $\conv(p_{k-1},\mathbf{e}_{\astar})$. This step increases the computational complexity by a factor of $2$. At each iteration, \cref{alg:fw} computes
\begin{align*}
    \argmax_{a\in\cA} \frac{\norm{\tilde a}^2_{p'_{k-1}}}{d+\ip{a-\bar a_{p'_{k-1}},\theta}}\,.
\end{align*}
For generic action sets, this can be computed in time $\cO(|\cA|d^2)$, given that $\tilde
H_{p_{k-1}'}^{-1}$ has already been computed. In the next step, the algorithm
solves the one dimensional optimization problem
\begin{align*}
    \max_{\alpha'\in[0,1]} \left(\alpha'\left(\tilde d-\frac{Z_k}{1+\eta_k}\right) + (\tilde d-1) \log(1-\alpha')+\log\left(1+\alpha'( Z_k-1)\right)\right)\,,
\end{align*}
where $Z_k=\norm{\tilde a_k}^2_{p'_{k-1}}$.
This can be done in time $\cO(1)$, since it is equivalent to solving the quadratic problem
\begin{align*}
    \left(\tilde d-\frac{Z_k}{1+\eta_k}\right) - \frac{\tilde d-1}{1-x}+\frac{Z_k-1}{1+x( Z_k-1)}=0\,.
\end{align*}

Finally we need to update $\bar a_p$, which costs $\cO(d)$, and update
$\tilde H^{-1}_p$, which can be done in time $\bigoh(d^{2})$ using a
rank-one update.

Across all iterations, we require a total of 
$\tilde\cO(d^4|\cA|)$ arithmetic operations, with $p_k$ never exceeding support size $\cO\prn*{d^2\log(d)+d\log(\gamma)}$, since we add at most one arm to the support in any iteration.
We can store $p_k$ as a sparse vector of key and value pairs, where each entry uses memory complexity of $\cO\prn*{\log(|\cA|)}$ to represent the key.

\fi

\end{document}